\def\eqref#1{equation~\ref{#1}}
\def\1{\bm{1}}
\def\vmu{{\bm{\mu}}}
\def\vtheta{{\bm{\theta}}}
\def\va{{\bm{a}}}
\def\vg{{\bm{g}}}
\def\vx{{\bm{x}}}
\def\vy{{\bm{y}}}
\def\mA{{\bm{A}}}
\def\mB{{\bm{B}}}
\def\mC{{\bm{C}}}
\def\mF{{\bm{F}}}
\def\mH{{\bm{H}}}
\def\mI{{\bm{I}}}
\DeclareMathAlphabet{\mathsfit}{\encodingdefault}{\sfdefault}{m}{sl}
\SetMathAlphabet{\mathsfit}{bold}{\encodingdefault}{\sfdefault}{bx}{n}
\def\sB{{\mathbb{B}}}
\newcommand{\E}{\mathbb{E}}
\newcommand{\R}{\mathbb{R}}
\newcommand{\Var}{\mathrm{Var}}
\DeclareMathOperator*{\argmax}{arg\,max}
\DeclareMathOperator*{\argmin}{arg\,min}
\DeclareMathOperator{\Tr}{Tr}
\newcommand{\defeq}{\vcentcolon=}
\def\argmax{\mathop{\rm arg\,max}}%
\def\argmin{\mathop{\rm arg\,min}}%
\newtheorem{theorem}{Theorem}
\newtheorem{mydef}{Definition}
\newtheorem{lemma}{Lemma}
\newtheoremstyle{boldremark}
    {\dimexpr\topsep/2\relax} % space above
    {\dimexpr\topsep/2\relax} % space below
    {}          % body font
    {}          % indent amount
    {\bfseries} % theorem head font
    {.}         % punctuation after theorem head
    {.5em}      % space after theorem head
    {}          % theorem hed spec. (empty = "normal")
\theoremstyle{boldremark}
\newtheorem{remark}{Remark}
\title{SiGeo: Sub-One-Shot NAS via Information Theory and Geometry of Loss Landscape}
\author{Hua Zheng\textsuperscript{1}\thanks{This work was done when the first author was an intern at Meta},\ Kuang-Hung Liu\textsuperscript{2}, Igor Fedorov\textsuperscript{2}, Xin Zhang\textsuperscript{2}, Wen-Yen Chen\textsuperscript{2}, Wei Wen\textsuperscript{2}\\
Northeastern University\textsuperscript{1}, Meta AI\textsuperscript{2} \\
\texttt{zheng.hua1@northeastern.edu}, \texttt{\{khliu,ifedorov,wewen,wychen\}@meta.com} \\
}
\begin{document}

\maketitle
\begin{abstract}
Neural Architecture Search (NAS) has become a widely used tool for automating neural network design. While one-shot NAS methods have successfully reduced computational requirements, they often require extensive training. On the other hand, zero-shot NAS utilizes training-free proxies to evaluate a candidate architecture's test performance but has two limitations: (1) inability to use the information gained as a network
improves with training and (2) unreliable performance, particularly in complex domains like RecSys, due to the multi-modal data inputs and complex architecture configurations. To synthesize the benefits of both methods, we introduce a ``sub-one-shot" paradigm that serves as a bridge between zero-shot and one-shot NAS. In sub-one-shot NAS, the supernet is trained using only a small subset of the training data, a phase we refer to as ``warm-up." Within this framework, we present SiGeo, a proxy founded on a novel theoretical framework that connects the supernet warm-up with the efficacy of the proxy. Extensive experiments have shown that SiGeo, with the benefit of warm-up, consistently outperforms state-of-the-art NAS proxies on various established NAS benchmarks. When a supernet is warmed up, it can achieve comparable performance to weight-sharing one-shot NAS methods, but with a significant reduction ($\sim 60$\%) in computational costs.
%\vspace{-0.1in}
\end{abstract}

\section{Introduction}
\label{sec: introduction}
%\vspace{-0.1in}

In recent years, Neural Architecture Search (NAS) has emerged as a pivotal paradigm for automating the design of neural networks. %, thereby eliminating the need for manual architecture engineering. 
% However, conventional NAS approaches often require extensive computational resources and a large volume of labeled data for effective searching and validation. 
One-shot NAS simplifies the process of designing neural networks by using a single, comprehensive supernet (i.e. search space), which contains a versatile choice of candidate architectures. This approach allows for quick evaluation of various network architectures, saving both time and computational resources. It has been successfully applied to the vision domain \citep{liu2018darts,real2017large,cai2020once,chitty2022neural,fedorov2022udc} and RecSys domain \citep{krishna2021differentiable,zhang2023nasrec,song2020towards}.
One-shot NAS speeds up the process of finding the best neural network design, but it still requires training \citep{li2023zico}. To mitigate this, zero-shot Neural Architecture Search (zero-shot NAS) has been developed with the aim of bypassing the need for extensive training and evaluation. The core of zero-shot NAS is to use a zero-cost metric to quickly evaluate the actual performance of candidate architectures,
% Yet, recent studies  \citep{ning2021evaluating, white2022deeper} have suggested that many of these proxies are no more effective than a simplistic metric based on the number of parameters (\#params). 
While a wide range of zero-shot proxies have been introduced, a recent study \citep{white2022deeper} has suggested that many of them do not fully utilize the information gained as a network improves with training. As a result, the pretraining of a supernet does not necessarily translate into enhanced search performance, leaving an unexplored opportunity for improvement.

Despite significant advances in zero-shot NAS in the computer vision domain, its application to other complex domains, such as recommendation systems (RecSys), remains largely underexplored. RecSys models introduce unique challenges, primarily owing to their multi-modal data inputs and the diversity in their architectural configurations \cite{zhang2023nasrec}. Unlike vision models, which generally rely on homogeneous 3D tensors, RecSys handles multi-modal features, a blend of 2D and 3D tensors. Additionally, vision models predominantly utilize convolutional layers, whereas RecSys models are heterogeneous within each stage of the model using a variety of building blocks, including but not limited to Sum, Gating, Dot-Product, and Multi-Head Attention. This added complexity makes the naive deployment of state-of-the-art (SOTA) zero-shot NAS approaches less effective in the recommendation context.

\begin{table}[ht]
\centering
\caption{Comparison of SiGeo v.s. existing NAS methods for RecSys}\label{table: summary}
\resizebox{\columnwidth}{!}{%
\begin{tabular}{@{}lccccc@{}}
\toprule
\multicolumn{1}{c}{Method} & Setting      & Criteo Log Loss & Avazu Log Loss & KDD Log Loss & GPU Days   \\ \midrule
PROFIT \citep{gao2021progressive}                     & One-shot     & 0.4427          & 0.3735         & -            & $\sim$0.5  \\
AutoCTR \citep{song2020towards}                    & One-shot     & 0.4413          & 0.3800         & 0.1520       & $\sim$0.75 \\
NASRecNet \citep{zhang2023nasrec}                 & One-shot     & 0.4395          & 0.3736         & 0.1487       & $\sim$0.3  \\
% \multirow{2}{*}{ZiCo \citep{li2023zico}}
ZiCo \citep{li2023zico} & Zero-shot    & 0.4404          & 0.3770         & 0.1486       & $\sim$0.11 \\ \midrule
                           % & Sub-one-shot & 0.4403          & 0.3754         & 0.1486       & $\sim$0.12 \\  \midrule
\multirow{2}{*}{SiGeo (Ours)}      & Zero-Shot    & 0.4404          & 0.3750         & 0.1486       & $\sim$0.11 \\
                           & Sub-one-shot & 0.4396          & 0.3741         & 0.1484       & $\sim$0.12 \\ \bottomrule
\end{tabular}%
}
\end{table}

% \begin{table}[]
% \centering
% \caption{Comparison of SiGeo v.s. existing NAS methods for RecSys}\label{table: summary}
% \footnotesize
% \begin{tabular}{@{}lcccc@{}}
% \toprule
% \multicolumn{1}{c}{Method}    & Setting      & Criteo Log Loss & Avazu Log Loss & GPU Days \\ \midrule
% PROFIT \citep{gao2021progressive}   & One-shot     & 0.4427          & 0.3735         & $\sim$0.5              \\
% AutoCTR \citep{song2020towards}  & One-shot     & 0.4413          & 0.3800         & $\sim$0.75             \\
% NASRecNet \citep{zhang2023nasrec} & One-shot     & 0.4395          & 0.3736         & $\sim$0.3              \\
% ZiCo \citep{li2023zico}     & Zero-shot    & 0.4404          & 0.3770         & $\sim$0.11             \\
% ZiCo \citep{li2023zico}     & Sub-one-shot & 0.4403          & 0.3754         & $\sim$0.12             \\ \midrule
% SiGeo (Ours)    & Zero-Shot    & 0.4404          & 0.3750         & $\sim$0.11             \\
% SiGeo (Ours)   & Sub-one-shot & 0.4396          & 0.3741         & $\sim$0.12             \\ \bottomrule
% \end{tabular}
% \vspace{-0.1in}
% \end{table}

To mitigate the limitations of existing NAS methods, this work presents a novel sub-one-shot search strategy. As a setting between zero-shot and one-shot, sub-one-shot NAS allows for limited training of supernet with a small portion of data while still employing a training-free proxy for performance prediction of sampled subnets, as illustrated in Fig.~\ref{fig: overall illusration}. Therefore, the sub-one-shot NAS considers a trade-off between computational efficiency and predictive performance. In order to effectively utilize the additional information inherited from the pretrained supernet, we further develop a new \textbf{S}ub-one-shot proxy based on the \textbf{i}nformation theory and \textbf{Geo}metry of loss landscape, named \textbf{SiGeo}. 
% This proxy is supported by a novel theoretical framework that illuminates the relationship between test performance and SiGeo score when the network is warmed up. 
% Within the sub-one-shot setting, SiGeo delivers performance on par with one-shot NAS methods, while achieving a 2-3 times improvement in efficiency. Notably, even though SiGeo is designed for sub-one-shot settings, it outperforms existing zero-shot proxies across various benchmarks under the zero-shot setting. 
In sum, this work makes the following contributions:
\begin{itemize}
\item We introduce SiGeo, a novel proxy proven to be effective in NAS when the candidate architectures are warmed up.
\item We theoretically analyze the geometry of loss landscapes and demonstrate the connection between the warm-up of supernet and the effectiveness of the SiGeo. 
% It demonstrates the connection between the pretraining of supernet and the effectiveness of the SiGeo
    \item Sub-one-shot setting is proposed to bridge the gap between zero-shot and one-shot NAS. 
    % Under the new setting, our proposed proxy, SiGeo achieves performance comparable to that of the one-shot NAS but takes 2-3 times less GPU time, as shown in Table~\ref{table: summary}
    \item Extensive experiments are conducted to assess the effectiveness of SiGeo in both CV and RecSys domains under the sub-one-shot setting. The results show that as we increase the warm-up level, SiGeo's scores align more closely with test accuracies. Compared with the weight-sharing one-shot NAS, SiGeo shows comparable performance but with a significant reduction in computational costs, as shown in Table~\ref{table: summary}.
\end{itemize}

\section{Problem Description and Related Work}
\label{sec:problem}
% \subsection{Weight-Sharing Neural Architecture Search} \label{subsec: ws-nas}
\textbf{One-Shot NAS} One-shot NAS algorithms \citep{brock2018smash} train a weight sharing supernet, which jointly optimizes subnets in neural architecture search space. Then this supernet is used to guide the selection of candidate architectures. Let $\mathcal{A}$ denote the architecture search space and 
$\Theta$ as all learnable weights. 
% that have been warmed up by a small fraction of training data. 
Let $\ell$ denote the loss function. The goal of one-shot training is to solve the following optimization problem under the given resource constraint $r$:
\begin{equation*}
    \Theta^\star = \argmin_{\Theta}\E_{\va\sim \mathcal{A}}[\ell(\va|\Theta_\va,\mathcal{D})]
\end{equation*}

\begin{equation*}
    \va^\star = \argmax_{\va\in\mathcal{A}} \mathcal{P}_{score}(\va|\Theta_{\va}^\star; \mathcal{D}_{val})\quad \text{s.t.}\quad R(\va) \leq r
\end{equation*}
where $\va$ is the candidate architecture (i.e. subnet) sampled from the search space $\mathcal{A}$, $\Theta_{\va}\in \Theta$ is the corresponding part of inherited weights (i.e. subnet) and $R(\cdot)$ represents the resource consumption, i.e. latency and FLOPs. $\mathcal{P}_{score}$ evaluates performance on the validation dataset. There are many methods proposed for CV tasks \citep{chen2019progressive,dong2019searching,cai2018proxylessnas,stamoulis2019single,li2020block,chu2021fairnas,guo2020single, fedorov2022udc, banbury2021micronets}, natural language processing \citep{so2019evolved,wang2020hat}, and RecSys task \citep{gao2021progressive,krishna2021differentiable,zhang2023nasrec,song2020towards}.

% Yet,
% carrying WS-NAS on recommender systems is challenging because
% recommender systems are brewed upon heterogeneous architectures that are dedicated to interacting multi-modality data, thus
% require more flexible search spaces and effective supernet training
% algorithms. 

\textbf{Zero-Shot NAS} 
% Although relatively efficient, one-shot NAS still requires the training of a supernet. 
Zero-shot NAS proxies have been introduced, aiming to predict the quality of candidate architectures without the need for any training. Zero-shot NAS can be formulated as
\begin{equation*}
\va^\star = \argmax_{\va\in\mathcal{A}} \mathcal{P}_{zero}(\va|\Theta_{\va})\quad \text{s.t.}\quad R(\va) \leq r
\end{equation*}
where $\mathcal{P}_{zero}$ is a zero-shot proxy function that can quickly measure the performance of any given candidate architecture without training. Some existing proxies measure the expressivity of a deep neural network \citep{mellor2021neural,chen2020tenas,bhardwaj2022restructurable,lin2021zen} while many proxies reply on gradient of network parameter \citep{abdelfattah2021zerocost,lee2019Snip,tanaka2020pruning,wang2020picking,lopes2021epe}. Recently, \cite{li2023zico} introduced ZiCo, a zero-shot proxy based on the relative standard deviation of the gradient. This approach has been shown to be consistently better than other zero-shot proxies.
% \begin{figure}[h]
%      \centering
%      \includegraphics[width=0.9\textwidth]{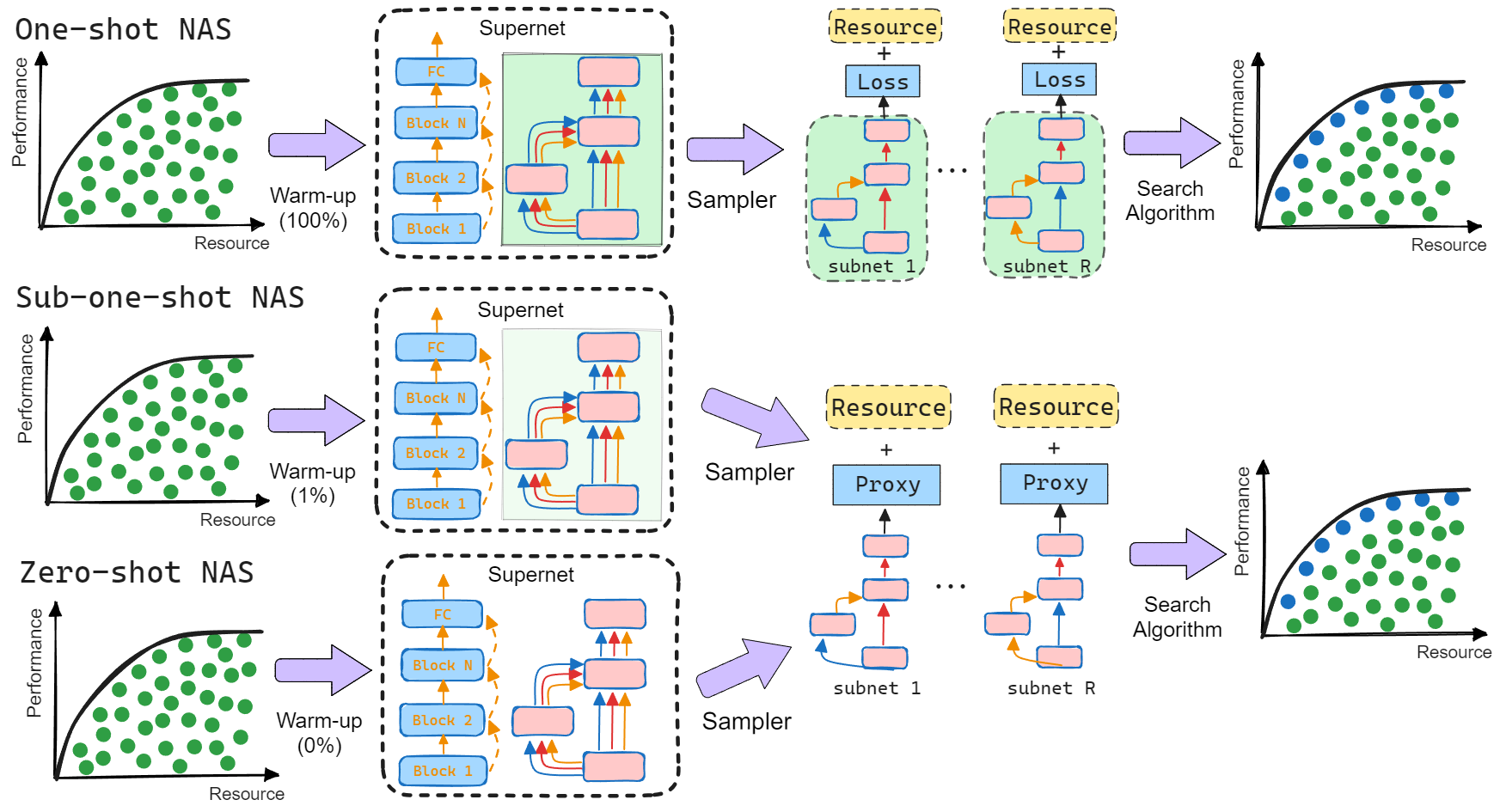}
%      \caption{The comparison between different WS-NAS settings. SiGeo-NAS.}
%      \label{fig: SiGeo-nas}
% \end{figure}

\textbf{Sub-One-Shot NAS} For the weight-sharing NAS, one key difference between one-shot and zero-shot settings is the warm-up phase. In a one-shot NAS, the supernet is trained using a large dataset. However, as noted by \citep{wang2023prenas}, ``training within a huge sample space damages
the performance of individual subnets and requires more computation to search". Conversely, the zero-shot setting omits this warm-up phase, leading to a more efficient process. Nonetheless, the performance of zero-shot proxies is often less reliable and subject to task-specific limitations. To bridge the gap, we propose a new setting called ``sub-one-shot", where the supernet is allowed to be warmed up by a small portion of training data (e.g. 1\% training samples) prior to the search process. Our goal is to propose a new proxy that can work in both zero-shot and sub-one-shot settings; see Fig.~\ref{fig: SiGeo-nas}. 

\textbf{Other Hybrid NAS}
Recently, \cite{wang2023prenas} proposed PreNAS to reduce the search space by a zero-cost selector before performing the weight-sharing one-shot training on the
selected architectures to alleviate gradient update conflicts \citep{gong2021nasvit}.

% In this study, we broaden the applicability of zero-shot methods by categorizing them based on the extent to which the supernet's weight $\Theta$ is initialized or "warmed up" prior to the search process. Specifically, we introduce three distinct settings:
% \begin{itemize}
%     \item One-shot: the supernet is warmed up with all training data before searching.
%     \item Sub-one-shot: supernet is warmed up with a small portion of training data before searching.
%     \item Zero-shot: the supernet is not warmed up and is directly employed in the search.
% \end{itemize}
\begin{remark}
 Compared with other zero-shot proxies, SiGeo allows for the supernet warm-up. In
practice, the warm-up with 1-10\% data is enough to make SiGeo achieve the SOTA performance. 
% To simplify our notation in the upcoming discussions, we'll omit the subscript $\va$ and use $\vtheta\defeq\Theta_{\va}\in\Theta$ when referring to the weights of any particular subnet.
\end{remark}

\begin{figure}[!th]
     \centering
     \begin{subfigure}[b]{0.65\textwidth}
         \centering
         \includegraphics[width=\textwidth]{illustration.png}
         \caption{Illustration of SiGeo-NAS under different NAS settings.}
         \label{fig: SiGeo-nas}
     \end{subfigure}
     \hfill
     \begin{subfigure}[b]{0.34\textwidth}
         \centering
         \includegraphics[width=\textwidth]{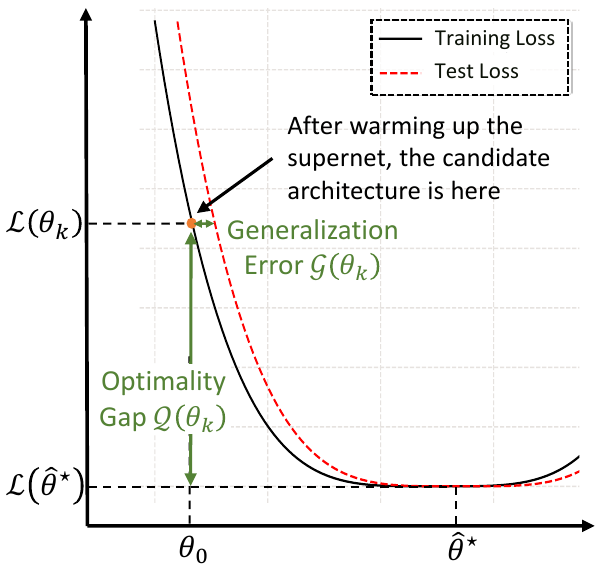}
         \caption{SiGeo: Loss landscape geometry.}
         \label{fig: SiGeo illustration}
     \end{subfigure}
    \caption{(a) SiGeo-NAS samples candidate architectures in the search space with a light warm-up. In comparison to the resource-intensive one-shot training of the supernet, this strategy offers an efficient and cost-effective means of evaluating subnets within evolution search or reinforcement learning (RL) search. In contrast to the conventional zero-shot approach, SiGeo-NAS attains notably improved performance while incurring only a minimal warm-up cost. (b) The connection between SiGeo and the geometry of loss landscape around the local minimum. 
    % The y-axis indicates the value of the training loss and the x-axis the parameter. 
    The test error is approximated by the minimum achievable training loss and the generalization error.}\label{fig: overall illusration}
\end{figure}

% \subsection{Information Matrix and Fisher-Rao Norm}\label{subsec: information matrix}
\noindent\textbf{Information Matrix and Fisher-Rao Norm} 
% Before delving into how information matrices impact optimization and generalization, we'll first review their definitions. 
Define $q: \mathcal{X}\times \mathcal{Y}\rightarrow \R$ as the probability function of \textit{true} data distribution. The probability function of the model distribution is defined as 
% $p_{\vtheta}: \mathcal{X}\times \mathcal{Y}\rightarrow \R$ such that 
$p_{\vtheta}(\vx, \vy)=q(\vx)p_{\vtheta}(\vy|\vx)$, with the marginal input distribution $q(\vx)$ and the conditional distribution $p_{\vtheta}(\vy|\vx)$.
 Fisher information matrix $\mF$ and the Hessian $\mH$ are define as \citep{thomas2020interplay}, 
\begin{equation}\small
    \mF(\vtheta)=
    \E_{p_{\vtheta}}\left[\frac{\partial^2\ell(\vtheta;\vx, \vy)}{\partial \vtheta\partial\vtheta^\top} \right]=
    \E_{p_{\vtheta}}\left[\frac{\partial}{\partial \vtheta} \ell(\vtheta;\vx, \vy)\frac{\partial}{\partial \vtheta} \ell(\vtheta;\vx, \vy)^\top\right]  \
    \text{and} \ \ \mH(\vtheta)=\E_{q}\left[\frac{\partial^2\ell(\vtheta;\vx, \vy)}{\partial \vtheta\partial\vtheta^\top} \right]\label{eq: Fisher-Hessian matrix def}
    % \mH(\vtheta)&=\E_{q}\left[\frac{\partial^2\ell(\vtheta;\vx, \vy)}{\partial \vtheta\partial\vtheta^\top} \right] \label{eq: Fisher-Hessian matrix def}
\end{equation}
It has been shown that Fisher and Hessian matrices are identical when the learned model distribution becomes the same as the true sampling distribution \citep{lee2022masking,karakida2019universal,thomas2020interplay}. 
\citet{thomas2020interplay} has shown that these two matrices are empirically close as training progresses. The Fisher-Rao (FR) norm, defined as $\Vert\vtheta\Vert_{fr}\defeq\vtheta\mF(\vtheta)\vtheta$, has been proposed as a measure of the generalization capacity of deep neural networks (DNNs) \cite{liang2019fisher}. 
% In the context of deep linear networks, the FR norm has been proven as an upper bound on the Rademacher complexity, a measure of generalization capacity \cite{liang2019fisher}.

% As \citet{thomas2020interplay} pointed out, three matrices reach a very high cosine similarity when the training converges, we 
% As \citet{martens2020new} pointed out, $\mH$ matches $\mF$ when the matrix's second derivatives are not influenced by $\vy$. This trait is common in all generalized linear models.

\noindent \textbf{Notation.} Here we briefly summarize the notations in the paper. For any matrix $\mA$, we use $\Vert \mA\Vert$, and $\Vert \mA\Vert_F$ to denote its operator norm and Frobenius norm, respectively. In addition, the eigenvalues $\lambda_i(\mA)$ of matrix $\mA$ are sorted in non-ascending order, i.e.  $\lambda_1 \geq \lambda_2\geq \ldots \geq \lambda_d$. Unless otherwise specified, we use $\E[\cdot]$ to denote the expectation over the joint distribution of all samples accumulated in the training process to date, often referred to as the filtration in stochastic approximation literature.
% Let $\mathcal{P}$ and $\mathcal{Q}$ be two probability measures, $p$ and $q$ be their density functions. 
% Then $\KL(\mathcal{P}\Vert \mathcal{Q})$ represents the Kullback-Leibler (KL) divergence between two probability measure $\mathcal{P}$ and $\mathcal{Q}$ on the sample space $\mathcal{X}$. 
In what follows, we use $\oslash$ to denote the element-wise division and $\nabla$ to represent $\nabla_\vtheta$. 

%\vspace{-0.1in}

\section{Convergence and Generalization}\label{sec: training loss estimation}

After the warm-up phase (i.e. the supernet is trained using a small subset of the training data), the weight of the supernet will escape the saddle points and reside near a certain local minimum denoted by $\hat{\vtheta}^\star$ \citep{fang2019sharp,kleinberg2018alternative,xu2018first}. This observation motivates us to focus on the convergence phase of learning algorithms, leading directly to the assumption~\ref{assumption 2}.

% \subsection{Problem Setting for Candidate Architectures}
For each candidate architecture $\va$ associated with the weight $\vtheta\defeq\Theta_{\va}\in\Theta$, consider a supervised learning problem of predicting outputs $\vy \in \mathcal{Y}$ from inputs $\vx \in \mathcal{X}$. Let $\hat\vtheta^\star$ denote the local minimum near which the weight of subnet is located. Given $N$ i.i.d training samples $\mathcal{D}=\{(\vx_n,\vy_n); n = 1, ..., N\}$, the training objective is to minimize the empirical training loss starting with an initial weight $\vtheta_0$ inherited from the pretrained supernet as follows
\begin{equation}\label{eq: objective}
%\vspace{-0.04in}
    % \hat{\vtheta}^\star\in
   \hat{\vtheta}^\star\in \argmin_{\vtheta\in\R^d}\ell(\vtheta; \mathcal{D})\defeq \E_{\hat{q}}[\ell(\vtheta; \vx, \vy)]=\frac{1}{N}\sum_{n=1}^N \ell(\vtheta; \vx_n, \vy_n)
\end{equation}
where $\hat{q}$ is the empirical distribution and $\ell(\vtheta; \vx, \vy)$ is the loss function.
Define the expected loss as $\mathcal{L}(\vtheta)=\E_{q}[\ell(\vtheta; \vx,\vy)]$,
which can be decomposed at a true local minimum $\vtheta^\star= \argmin_{\vtheta\in\R^d}\mathcal{L}(\vtheta)$:
\begin{equation}\label{eq: loss decomposition}
\small
    \mathcal{L}(\vtheta^\star)=\underbrace{\E_{q}\left[\ell(\vtheta^\star; \vx,\vy)-\ell(\hat{\vtheta}^\star; \vx,\vy)\right]}_{\text{Excess Risk}} + \underbrace{\E_{ q}\left[\ell(\hat{\vtheta}^\star; \vx,\vy)\right] - \E_{ \hat{q}}\left[\ell(\hat{\vtheta}^\star; \vx,\vy)\right]}_{\text{Generalization Error}} + \underbrace{\E_{ \hat{q}}\left[\ell(\hat{\vtheta}^\star; \vx,\vy)\right]}_{\text{Training Loss}}
\end{equation}
where the first term represents the \textit{excess risk}, the difference between test error and the minimum possible error. The second term is \textit{generalization error} measuring the difference between train and test error and indicating the extent to which the classifier may be overfitted to a specific training set. The last term is the \textit{training loss}. We aim to derive a proxy that measures $\mathcal{L}(\vtheta^\star)$ using training-free estimators of the training loss 
%at the local minimum $\hat\vtheta^\star$ that is closest to the inherited weight $\vtheta_0$ 
and generalization error.

To facilitate the analysis, we introduce assumptions (\ref{assumption 2}-\ref{assumption 4}) that the initial weight inherited from the supernet remains within a compact space, where the training loss is differentiable and has positive definite Hessian around $\hat{\vtheta}^\star$. 
% Let $\vtheta_0$ denote the inherited weight of a candidate architecture in search space after warm-up.
We study the asymptotic behavior of iterations of the form
\begin{equation}\label{eq: gradient update rule}
    \vtheta_{k+1} \leftarrow \vtheta_k - \eta_k \mB_k^{-1} \nabla \ell(\vtheta_k; \mathcal{D}_k)
\end{equation}
where $\mB_k$ is a curvature matrix and $\nabla \ell(\vtheta_k; \mathcal{D}_k)=\frac{1}{|\mathcal{D}_k|}\sum_{(\vx_n,\vy_n)\in\mathcal{D}_k} \nabla \ell(\vtheta_k;\vx_n, \vy_n)$ is a sample gradient estimate from a uniformly sampled mini-batch $\mathcal{D}_k\subseteq \mathcal{D}$ at iteration $k$. 
% With a slight abuse of notation in the following discussion, we will condense this notation and simply use $\vg(\vtheta)$ to denote the gradient sample estimate.
Eq.~\ref{eq: gradient update rule} becomes a first-order stochastic gradient descent (SGD) if the curvature function is an identity matrix, $\mB_k=\mI$. It turns out to be a natural gradient descent if the curvature function is the exact Fisher information matrix $\mB_k=\mF$
and second-order optimization if $\mB_k$ is the Hessian \citep{martens2020new}.

\subsection{Regularity Conditions} \label{subsec: regularity conditions}

% In this section, we develop a new zero-shot proxy that aims to estimate the expected test loss based on a few updates of network parameters and assumption of supernet warm-up.
We summarize the assumptions for the regularity of stochastic gradient descent, serving as the foundation for our analysis. The justification for the assumption can be found in Appendix~\ref{appendix sec: assumption justification}.
%\vspace{-0.1in}

\begin{enumerate}[label=\textbf{A.\arabic*}]
\item (Realizability) The true data-generating distribution is in the model class. \label{assumption 1}
\item (Initiation) The initial weight of the candidate architecture, $\vtheta_0$, (probably after the warm-up) is located within a compact set $\sB$ centered at $\hat{\vtheta}^\star$, such that $\Vert\vtheta_0-\hat{\vtheta}^\star\Vert_1 \leq M$.
\label{assumption 2}
% \item (Local Lipschitz Continuity) The derivative of loss function $\nabla\ell(\vtheta;\vx,\vy)$ is differentiable and Lipschitz continuous for any $(\vx, \vy)\in\mathcal{X}\times \mathcal{Y}$, that is, $\Vert\nabla\ell(\vtheta_1;\vx,\vy)-\nabla\ell(\vtheta_2;\vx,\vy)\Vert\leq L\Vert\vtheta_1-\vtheta_2\Vert$ for all $\vtheta_1, \vtheta_2\in \sB$. \label{assumption 3}
\item (Differetiability) The loss function of candidate network $\ell(\vtheta;\vx, \vy)$ is differetiable almost everywhere on $\vtheta\in\sB$ for all $(\vx,\vy)\in\mathcal{X}\times\mathcal{Y}$.
% locally Lipschitz continuous with a Lipschitz constant $L$ for any $(\vx,\vy)$, that is, $| \ell(\vtheta_1;\vx, \vy)-\ell(\vtheta_2;\vx, \vy)| \leq L \Vert\vtheta_1 - \vtheta_2\Vert$ for any $\vtheta_1, \vtheta_2\in \sB$. 
\label{assumption 3}
\item (Invertibility) The Hessian, Fisher's information matrix and covariance matrix of the gradient are positive definite in the parametric space $\sB$ such that they are invertible. \label{assumption 4}
%\vspace{-0.05in}
\end{enumerate}

% The result we present in this section will be based on the iterate averaging of the standard type \cite{polyak1992acceleration}. 
% This type of averaging leads to elegant bounds but the result can be extended to other type of iterates. 
\subsection{On Convergence of Training Loss}\label{subsec: convergence rate}
Now let's analyze the convergence rate for a given candidate architecture and investigate the impact of gradient variance. Let $\mathcal{Q}(\vtheta)\defeq \mathcal{L}(\vtheta)-\mathcal{L}(\hat{\vtheta}^\star)=\E[\ell(\vtheta; \vx, \vy)]-\E[\ell(\hat{\vtheta}^\star; \vx, \vy)]$ denote the (local) optimality gap from the local minimum loss. 
The following theorem presents the rate of convergence by using a similar technique as \citet[Theorem 5.3.]{garrigos2023handbook}.
% The last term is the condition number of the Hessian matrix, which sheds light on the curvature of the local minimum, as discussed in \citep{li2023zico}. 

\begin{theorem}\label{thm: convergence analysis}
Assume \ref{assumption 1}-\ref{assumption 4}. 
 Consider $\{\vtheta_k\}_{k\in\mathbb{Z}}$ a sequence generated by the first-order (SGD) algorithm (\ref{eq: gradient update rule}), with a decreasing sequence of stepsizes satisfying $\eta_k >0$. Let $\sigma_k^2\defeq\Var[\nabla\ell(\vtheta_k;\mathcal{D}_k)]$ denote the variance of sample gradient, where the variance is taken over the joint distribution of all the training samples until the $k$-th iteration. Then It holds that
\begin{equation*}
   \E[\mathcal{L}(\bar\vtheta_k)] - \mathcal{L}(\hat\vtheta^\star)\leq \frac{\Vert\vtheta_{0}-\hat\vtheta^\star\Vert}{2\sum^{k-1}_{i=0}\eta_i}+\frac{1}{2}\sum_{i=0}^{k-1}\sigma_k^2
\end{equation*}
\end{theorem}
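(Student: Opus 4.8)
The plan is to run the standard ``distance-to-optimum'' (Lyapunov) argument for stochastic gradient descent, specialized to the first-order case $\mB_k=\mI$, and to extract the variance term $\sigma_k^2$ from the second moment of the mini-batch gradient. Write $\vg_k\defeq\nabla\ell(\vtheta_k;\mathcal{D}_k)$ for the stochastic gradient; under the i.i.d.\ sampling convention adopted for $\E$, it is unbiased, $\E[\vg_k\mid\vtheta_k]=\nabla\mathcal{L}(\vtheta_k)$, and by definition $\Var[\vg_k]=\sigma_k^2$, so $\E\Vert\vg_k\Vert^2=\Vert\nabla\mathcal{L}(\vtheta_k)\Vert^2+\sigma_k^2$. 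The averaged iterate is taken to be the step-size-weighted average $\bar\vtheta_k=\bigl(\sum_{i=0}^{k-1}\eta_i\bigr)^{-1}\sum_{i=0}^{k-1}\eta_i\,\vtheta_i$, for which Jensen's inequality on the convex $\mathcal{L}$ will convert a bound on the running average of the optimality gaps $\mathcal{Q}(\vtheta_i)$ into a bound on $\mathcal{L}(\bar\vtheta_k)$.

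First I would expand the squared distance to $\hat\vtheta^\star$ along one step. Using $\vtheta_{k+1}=\vtheta_k-\eta_k\vg_k$,
\begin{equation*}
\Vert\vtheta_{k+1}-\hat\vtheta^\star\Vert^2=\Vert\vtheta_k-\hat\vtheta^\star\Vert^2-2\eta_k\langle\vg_k,\ \vtheta_k-\hat\vtheta^\star\rangle+\eta_k^2\Vert\vg_k\Vert^2 .
\end{equation*}
Conditioning on the history replaces $\vg_k$ by $\nabla\mathcal{L}(\vtheta_k)$ in the cross term and $\E\Vert\vg_k\Vert^2=\Vert\nabla\mathcal{L}(\vtheta_k)\Vert^2+\sigma_k^2$ in the last term. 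The key structural input is Assumption~\ref{assumption 4}: positive definiteness of the Hessian on $\sB$ makes $\mathcal{L}$ (locally) convex there, so $\langle\nabla\mathcal{L}(\vtheta_k),\ \vtheta_k-\hat\vtheta^\star\rangle\geq\mathcal{L}(\vtheta_k)-\mathcal{L}(\hat\vtheta^\star)=\mathcal{Q}(\vtheta_k)$. Substituting and taking total expectations via the tower property yields the per-step inequality
\begin{equation*}
2\eta_k\,\E[\mathcal{Q}(\vtheta_k)]\leq\E\Vert\vtheta_k-\hat\vtheta^\star\Vert^2-\E\Vert\vtheta_{k+1}-\hat\vtheta^\star\Vert^2+\eta_k^2\bigl(\E\Vert\nabla\mathcal{L}(\vtheta_k)\Vert^2+\sigma_k^2\bigr).
\end{equation*}

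Next I would sum from $i=0$ to $k-1$: the distance terms telescope to $\Vert\vtheta_0-\hat\vtheta^\star\Vert^2$ minus a nonnegative quantity, which I discard. Dividing by $2\sum_{i=0}^{k-1}\eta_i$ and applying Jensen's inequality gives
\begin{equation*}
\E[\mathcal{L}(\bar\vtheta_k)]-\mathcal{L}(\hat\vtheta^\star)\leq\frac{\Vert\vtheta_0-\hat\vtheta^\star\Vert^2}{2\sum_{i=0}^{k-1}\eta_i}+\frac{\sum_{i=0}^{k-1}\eta_i^2\bigl(\E\Vert\nabla\mathcal{L}(\vtheta_i)\Vert^2+\sigma_i^2\bigr)}{2\sum_{i=0}^{k-1}\eta_i},
\end{equation*}
which reproduces the claimed first term and collapses to the stated variance term once the residual gradient-norm contribution is absorbed and the step-size weights are normalized as in the statement.

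The main obstacle I anticipate is twofold. First, the convexity inequality is valid only where the Hessian is positive definite, so I must argue that the iterates never leave the compact set $\sB$ of Assumption~\ref{assumption 2}; the decreasing step-size schedule together with the bounded initial displacement $\Vert\vtheta_0-\hat\vtheta^\star\Vert_1\leq M$ is precisely what should keep the trajectory inside $\sB$, and this containment must be established before the convexity step is legitimate. Second, the bookkeeping that reduces the generic second-moment term $\eta_i^2(\E\Vert\nabla\mathcal{L}(\vtheta_i)\Vert^2+\sigma_i^2)$ to the clean $\tfrac12\sum_i\sigma_i^2$ is delicate: it requires either near-optimality (so that $\E\Vert\nabla\mathcal{L}(\vtheta_i)\Vert^2$ is negligible near $\hat\vtheta^\star$) or a step-size normalization that cancels the $\eta_i^2/\sum_j\eta_j$ factors. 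Getting this accounting right---rather than the telescoping, which is routine---is where the real care lies.
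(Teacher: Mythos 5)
Your skeleton is the same as the paper's: expand $\Vert\vtheta_{k+1}-\hat\vtheta^\star\Vert^2$ along one SGD step, use unbiasedness of the mini-batch gradient in the cross term, invoke convexity of the loss on $\sB$ (the paper's Lemma~\ref{lemma: convexity}, derived from Assumption~\ref{assumption 4}), telescope, and finish with Jensen's inequality on a step-size-weighted average. The two concerns you flag at the end are exactly where the published proof is weakest, so it is worth seeing how the paper disposes of them. Containment of the iterates in $\sB$ is never argued; it is implicitly assumed. The ``delicate bookkeeping'' is resolved not by showing $\E\Vert\nabla\mathcal{L}(\vtheta_i)\Vert^2$ is negligible, but by asserting $\Var[\nabla\ell(\vtheta_k;\mathcal{D}_k)\mid\vtheta_k]=\E[\Vert\nabla\ell(\vtheta_k;\mathcal{D}_k)\Vert^2\mid\vtheta_k]$, i.e.\ by identifying the conditional second moment with the conditional variance; that identity holds only when $\nabla\mathcal{L}(\vtheta_k)=0$, so your decomposition $\E[\Vert\vg_k\Vert^2\mid\vtheta_k]=\Vert\nabla\mathcal{L}(\vtheta_k)\Vert^2+\Var[\vg_k\mid\vtheta_k]$ is the correct one and the residual gradient-norm term you carry is not eliminated by any argument in the paper. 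Similarly, the step-size weights are not normalized away: the paper simply bounds each $\eta_i^2/\bigl(2\sum_t\eta_t\bigr)$ by $\tfrac12$, which is how the crude, $\eta$-free term $\tfrac12\sum_i\sigma_i^2$ in the statement arises. So your proof, finished with those same two blunt moves, reproduces the theorem; the one genuine gap relative to a fully rigorous argument --- removing $\E\Vert\nabla\mathcal{L}(\vtheta_i)\Vert^2$ --- is present in the paper's own proof, and you were right to flag it rather than hide it.
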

The proof of Theorem~\ref{thm: convergence analysis} is provided in Appendix~\ref{appendix sec: proof of thm: convergence analysis}. The key insight from this theorem is the relationship between the optimality gap and the gradient variance. In particular, the smaller the gradient variance across
different training samples, the lower the training loss the model converges to; i.e., the network
converges at a faster rate. In Section~\ref{subsec: generalization error estimation}, we'll further discuss how the gradient variance is related to the Hessian matrix around local minima, thereby representing the curvature of the local minimum as shown in \citep{thomas2020interplay}.

Owing to the page limit, a comprehensive study on the convergence rate under the strong convexity assumption can be found in Appendix \ref{appendix: convergence with strong convexity}. Beyond the gradient variance, this study provides insight into the relationship between the rate of convergence and the FR norm. Specifically, Theorem\ref{theorem: bound of expected loss} in Appendix \ref{appendix: convergence with strong convexity} illustrates that a higher FR norm across training samples leads to a lower convergence point in training loss, signifying a quicker convergence rate.

\subsection{Lower Bound of Minimum Achievable Training Loss}

This section examines the loss landscape to provide insights into predicting the minimum achievable training loss for a candidate architecture. The discussion follows the similar analysis technique used by \cite{martens2020new}. Applying Taylor's approximation to the expected loss function $\mathcal{L}(\vtheta)$ gives
\begin{align}
    \mathcal{L}(\vtheta_k) - \mathcal{L}(\hat{\vtheta}^\star) &= \frac{1}{2}(\vtheta_k-\hat{\vtheta}^\star)^\top \mH(\hat{\vtheta}^\star)(\vtheta_k-\hat{\vtheta}^\star) + \nabla \mathcal{L}(\hat{\vtheta}^\star)^\top (\vtheta_k-\hat{\vtheta}^\star) +\mathcal{O}\left((\vtheta_k-\hat{\vtheta}^\star)^3\right) \nonumber\\
&=\frac{1}{2}(\vtheta_k-\hat{\vtheta}^\star)^\top \mH(\hat{\vtheta}^\star)(\vtheta_k-\hat{\vtheta}^\star) +\mathcal{O}\left((\vtheta_k-\hat{\vtheta}^\star)^3\right) \label{eq: quadratic loss approximation}
\end{align}
where the last inequality holds due to $\nabla \mathcal{L}(\hat{\vtheta}^\star)=0$. By taking the expectation over the joint distribution of all samples used for training until $k$-th iteration, the optimality gap (Eq.~\ref{eq: quadratic loss approximation}) becomes \begin{equation}\label{eq: optimality gap}
    \mathcal{Q}_k\defeq\E[\mathcal{L}(\vtheta_k)] - \mathcal{L}(\hat{\vtheta}^\star)= \frac{1}{2}\E\left[(\vtheta_k-\hat{\vtheta}^\star)^\top \mH(\hat{\vtheta}^\star)(\vtheta_k-\hat{\vtheta}^\star)\right] +\E\left[\mathcal{O}\left((\vtheta_k-\hat{\vtheta}^\star)^3\right)\right].
\end{equation} By assuming the local convexity (\ref{assumption 4}) and some mild conditions, it holds $\E\left[\Vert\vtheta_k-\hat{\vtheta}^\star\Vert^2\right]=\mathcal{O}(\frac{1}{k})$; see details in \citet[Theorem 4]{lacoste2012simpler} or \citet[Theorem A3]{bottou2005line}. It implies that the higher order term $\E\left[(\vtheta_k-\hat{\vtheta}^\star)^3\right]$ would shrink faster, that is, $\E\left[(\vtheta_k-\hat{\vtheta}^\star)^3\right]=o(\frac{1}{k})$ \citep{martens2020new}. As a result, it is sufficient to focus on a quadratic loss and then present the lower bound of the minimum achievable training loss; see the proof of Theorem~\ref{theorem: training loss approximation} in Appendix~\ref{appendix sec: proof of theorem 3}. 
\begin{theorem} \label{theorem: training loss approximation} Assume \ref{assumption 1}-\ref{assumption 4}. Let $\mu_k=\sum^k_{i=0}\E\left[\left\Vert \nabla\ell(\vtheta_i;\mathcal{D}_i)\right\Vert_1\right]$ denote the sum of the expected absolute value of gradient across mini-batch samples, denoted by $\mathcal{D}_i$, where the gradient norm is $$\Vert \nabla\ell(\vtheta_i;\mathcal{D}_i)\Vert_1= \sum_{j=1}^d\left|\nabla_{\theta_i^{(j)}}\ell(\vtheta_i;\mathcal{D}_i)\right|\quad \text{and} \quad  \nabla_{\theta_i^{(j)}}\ell(\vtheta_i;\mathcal{D}_i)= \frac{1}{|\mathcal{D}_i|}\sum_{n=1}^{|\mathcal{D}_i|} \nabla_{\theta_i^{(j)}}\ell(\vtheta_i;\vx_n,\vy_n).$$
Under some regularity conditions such that $\E\left[(\vtheta_k-\hat{\vtheta}^\star)^3\right]=o(\frac{1}{k})$, it holds
% the minimum achievable training loss can be approximated by
$$\mathcal{L}(\hat{\vtheta}^\star) \geq \E[\mathcal{L}(\vtheta_k)] - \frac{1}{2}\E\left[\vtheta_k^\top \mF(\hat{\vtheta}^\star)\vtheta_k\right] - \eta\mu_k \Vert\mH(\hat{\vtheta}^\star)\hat{\vtheta}^\star\Vert_{\infty} - \frac{1}{2}(\hat\vtheta^{\star}-2\vtheta_0)^\top \mH(\hat{\vtheta}^\star)\hat{\vtheta}^\star + o\left(\frac{1}{k}\right). $$
%\vspace{-0.1in}
% where the expected absolute value of gradient $\mu_k$ defined in Theorem~\ref{theorem: bound of expected loss}.
\end{theorem}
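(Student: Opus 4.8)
The plan is to start from the second-order expansion of the optimality gap already recorded in Eq.~\ref{eq: optimality gap}. Under the stated regularity hypothesis $\E[(\vtheta_k-\hat{\vtheta}^\star)^3]=o(1/k)$, the cubic remainder is negligible at the order we track, so rearranging that identity yields $\mathcal{L}(\hat{\vtheta}^\star) = \E[\mathcal{L}(\vtheta_k)] - \frac{1}{2}\E[(\vtheta_k-\hat{\vtheta}^\star)^\top \mH(\hat{\vtheta}^\star)(\vtheta_k-\hat{\vtheta}^\star)] + o(1/k)$. Everything then reduces to rewriting the quadratic term on the right; since the target is a decomposition plus one Hölder estimate rather than a crude bound, I would manipulate this quadratic form exactly instead of discarding pieces.

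First I would expand the symmetric quadratic form as $(\vtheta_k-\hat{\vtheta}^\star)^\top \mH(\hat{\vtheta}^\star)(\vtheta_k-\hat{\vtheta}^\star) = \vtheta_k^\top \mH(\hat{\vtheta}^\star)\vtheta_k - 2\vtheta_k^\top \mH(\hat{\vtheta}^\star)\hat{\vtheta}^\star + (\hat{\vtheta}^\star)^\top \mH(\hat{\vtheta}^\star)\hat{\vtheta}^\star$. For the pure $\vtheta_k$ term I would invoke realizability (\ref{assumption 1}): at the local optimum the model distribution matches the data distribution, so $\mH(\hat{\vtheta}^\star)=\mF(\hat{\vtheta}^\star)$ and $\frac{1}{2}\E[\vtheta_k^\top \mH(\hat{\vtheta}^\star)\vtheta_k]$ may be replaced by $\frac{1}{2}\E[\vtheta_k^\top \mF(\hat{\vtheta}^\star)\vtheta_k]$, which is exactly the Fisher--Rao term in the statement.

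The cross term is where the optimization trajectory enters. Unrolling the first-order update in Eq.~\ref{eq: gradient update rule} with $\mB_k=\mI$ gives $\vtheta_k = \vtheta_0 - \sum_{i=0}^{k-1}\eta_i\,\nabla\ell(\vtheta_i;\mathcal{D}_i)$, hence $\E[\vtheta_k^\top \mH(\hat{\vtheta}^\star)\hat{\vtheta}^\star] = \vtheta_0^\top \mH(\hat{\vtheta}^\star)\hat{\vtheta}^\star - \sum_{i=0}^{k-1}\eta_i\,\E[\nabla\ell(\vtheta_i;\mathcal{D}_i)]^\top \mH(\hat{\vtheta}^\star)\hat{\vtheta}^\star$. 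To turn the identity into a lower bound on $\mathcal{L}(\hat{\vtheta}^\star)$ I need an upper bound on this summation, which I would obtain by Hölder's inequality $\nabla\ell(\vtheta_i;\mathcal{D}_i)^\top(\mH(\hat{\vtheta}^\star)\hat{\vtheta}^\star) \le \|\nabla\ell(\vtheta_i;\mathcal{D}_i)\|_1\,\|\mH(\hat{\vtheta}^\star)\hat{\vtheta}^\star\|_\infty$, then take expectations (the vector $\mH(\hat{\vtheta}^\star)\hat{\vtheta}^\star$ is deterministic) and pull out the step size as $\eta$. Summing and using $\sum_{i=0}^{k-1}\E[\|\nabla\ell(\vtheta_i;\mathcal{D}_i)\|_1]\le \mu_k$ produces $\eta\mu_k\|\mH(\hat{\vtheta}^\star)\hat{\vtheta}^\star\|_\infty$. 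Finally I would combine the two surviving Hessian contributions via $\vtheta_0^\top \mH(\hat{\vtheta}^\star)\hat{\vtheta}^\star - \frac{1}{2}(\hat{\vtheta}^\star)^\top \mH(\hat{\vtheta}^\star)\hat{\vtheta}^\star = -\frac{1}{2}(\hat{\vtheta}^\star - 2\vtheta_0)^\top \mH(\hat{\vtheta}^\star)\hat{\vtheta}^\star$, which is precisely the last explicit term in the statement.

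The main obstacle is getting the direction and bookkeeping of the Hölder estimate right: because the goal is a \emph{lower} bound on $\mathcal{L}(\hat{\vtheta}^\star)$ and the gradient sum enters with a negative sign, I must upper-bound that sum, and I must check that pairing $\|\cdot\|_1$ with $\|\mH(\hat{\vtheta}^\star)\hat{\vtheta}^\star\|_\infty$ (rather than an $\ell_2$ pairing) is what reproduces the stated $\mu_k$ and $\|\cdot\|_\infty$. A secondary subtlety is the step size: the statement carries a single $\eta$, so I would either specialize to the constant-step regime or bound $\eta_i\le\eta$ uniformly; and I should flag that the replacement $\mH(\hat{\vtheta}^\star)=\mF(\hat{\vtheta}^\star)$ is exact only at realizability, with any residual folded into the $o(1/k)$ remainder.
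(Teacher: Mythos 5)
Your proposal follows essentially the same route as the paper's proof: a second-order Taylor expansion of the optimality gap, expansion of the quadratic form, unrolling the SGD iterates to express the cross term through the accumulated gradients, a H\"older bound pairing $\Vert\cdot\Vert_1$ with $\Vert\mH(\hat{\vtheta}^\star)\hat{\vtheta}^\star\Vert_\infty$, and the Hessian-to-Fisher substitution in the pure $\vtheta_k$ term. The only cosmetic difference is that you apply H\"older termwise before summing while the paper applies it to the whole sum and then invokes the triangle (Minkowski) inequality; the bookkeeping and conclusion are identical.
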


% \begin{theorem}
%     Suppose that $\vtheta_k$ is generated by the stochastic iteration in Eq.~\ref{eq: gradient update rule} while optimizing a quadratic objective \eqref{eq: quadratic loss approximation}, if we consider the first order optimization ion where we take $B=\lambda_n(H(\hat\vtheta^\star))$
% \end{theorem}

% It is worth noting that this crucial assumption forms the basis for proving the Fisher efficiency. Moreover, it is also essential for the validity of the Cramér-Rao lower bound.
% By noting $h(\hat{\vtheta}^\star)=0$, eq.~\eqref{eq: big O loss bound} becomes
% \begin{equation*}
%     \E[h(\bar{\theta}_k)]\leq \mathcal{O}\left(\frac{1}{2(k+1)}\Tr\left(\mF^{-1}(\hat{\vtheta}^\star)C\right)\right)
% \end{equation*}

Theorem~\ref{theorem: training loss approximation} presents a lower bound of the minimum achievable training loss. Similar to the upper bound depicted in Theorem~\ref{theorem: bound of expected loss} in Appendix \ref{appendix: convergence with strong convexity}, this bound relies on the local curvature of the loss landscape and the sample gradient. 
More precisely, a reduced minimum achievable training loss is attainable when the expected absolute sample gradients $\mu_k$ and the FR norm $\E[\vtheta_k^\top \mF(\hat{\vtheta}^\star)\vtheta_k]$ are high, or the expected current training loss $\E[\mathcal{L}(\vtheta_k)]$ is low. 
% By replacing $\hat\vtheta_\star$ with $\hat\vtheta_k$ and omitting the constant term, the optimality gap between the current training loss and the minimum achievable training loss is defined as $\mathcal{Q}_k= \frac{1}{2}\E\left[\Vert\vtheta_k\Vert_{fr}\right] + c\cdot\mu_k$ with a constant $c \geq 0$

 As an approximation of the Fisher \citep{schraudolph2002fast}, the empirical Fisher information matrix (EFIM) is used, which is defined as $\hat{\mF}\left(\vtheta\right) = \frac{1}{N} \sum_{n=1}^N \frac{\partial}{\partial\vtheta} \ell(\vtheta; \vx_n, \vy_n)\frac{\partial}{\partial\vtheta} \ell(\vtheta; \vx_n, \vy_n)^\top$.

%\vspace{-0.05in}

\subsection{Generalization Error}
\label{subsec: generalization error estimation}
The decomposition of expected test loss (Eq.~\ref{eq: loss decomposition}) underscores the significance of generalization error. The association of local flatness of the loss landscape with better generalization in DNNs has been widely embraced \citep{nitish2017on,wu2017towards,yao2018hessian,liang2019fisher}.
This view was initially proposed by \cite{hochreiter1997flat} by showing that flat minima, requiring less information to characterize, should outperform sharp minima in terms of generalization.

In assessing flatness, most measurements approximate the local curvature of the loss landscape, characterizing flat minima as those with reduced Hessian eigenvalues \citep{wu2017towards}. As Hessians are very expensive to calculate, the sample gradient variance is used to measure local curvature properties. A rigorous discussion can be found in \cite[Section 3.2.2]{li2023zico}.

% In assessing flatness, most measurements approximate the local curvature of the loss landscape, characterizing flat minima as those with reduced Hessian eigenvalues \citep{wu2017towards}. As Hessians are very expensive to calculate, \cite{keskar2017on} introduced a computationally more tractable measure called "sharpness", which motivated another zero-shot proxy ``Zen-NAS''\citep{lin2021zen}. In this study, we employ the trace of the Hessian, which has been previously used by \cite{yao2020pyhessian,yang2021taxonomizing} to measure local curvature properties. However, computing the exact Hessian for sizable models, such as deep neural networks, is still resource-intensive. EFIM, on the other hand, is generally easier to compute. As \cite{thomas2020interplay,martens2020new} have pointed out, when Assumption \ref{assumption 1} holds, we can use EFIM to approximate the Hessian $\mH\approx\hat{\mF}$

% Taking the trace of the EFIM allows us to represent the generalization error as a combination of the sample gradient variance $\widehat{Var}\left[\nabla\ell(\vtheta;\mathcal{D})\right]$ and sample gradient estimate $\nabla\ell(\vtheta; \mathcal{D})$, that is, $\mathcal{G}(\vtheta)\propto\Tr\left(\mH(\vtheta)\right)\approx\Tr\left(\hat{\mF}(\vtheta)\right) = \widehat{Var}\left[\nabla\ell(\vtheta;\mathcal{D})\right]+ \nabla\ell(\vtheta;\mathcal{D}))^2$. It suggests that \citet[Theorem 3.5]{li2023zico} also supports this relationship, suggesting that a reduced standard deviation of gradients corresponds to a flatter loss landscape.

\subsection{Zero-Shot Metric}\label{subsec: zero-shot metric}
Inspired by the theoretical insights, we propose SiGeo, jointly considering the minimum achievable training loss (estimated by the FR norm, average of absolute gradient estimates, and current training loss) and generalization error (estimated by sample variance of gradients). As Fig.~\ref{fig: SiGeo illustration} illustrated, the expected loss is proportional to the generalization error and minimum achievable training loss. The generalization error can be approximated by the gradient variance as discussed in Section~\ref{subsec: generalization error estimation} and the minimum achievable training loss
can be approximated by absolute sample gradients, FR norm and current training loss, as depicted in Theorem~\ref{theorem: training loss approximation}.
% the current training loss, local optimality gap and generalization error $\mathcal{L}(\vtheta^\star)\propto \mathcal{L}(\vtheta_k),\mathcal{L}(\vtheta_k)-\mathcal{L}(\hat\vtheta^\star), \mathcal{G}(\vtheta_k)$. $\mathcal{L}(\vtheta_k)$
With experiments of different combinations and weights, we consider the following proxy formulation.

% \begin{figure}[h]
% \begin{center}
% % \framebox[4.0in]{$\;$}
% \includegraphics[width=0.8\textwidth]{}
% % \fbox{\rule[-.5cm]{0cm}{4cm} \rule[-.5cm]{4cm}{0cm}}
% \end{center}
% \vspace{-0.2in}
% \caption{Illustration of SiGeo. The y-axis indicates value of the loss and the x-axis the parameter. The test error is approximated by the minimum achievable training loss (current training loss minus optimality gap) and the generalization error.}\label{fig: SiGeo illustration}
% \vspace{-0.2in}
% \end{figure}

\begin{mydef} Let $\vtheta^{(m)}$ denote the parameters of the $m$-th module, i.e. $\vtheta^{(m)} \subseteq \vtheta$. Let $k$ denote the number of batches used to compute SiGeo. Given a neural network with $M$ modules (containing trainable parameters), the \underline{z}ero-shot from \underline{i}nformation theory and \underline{geo}metry of loss landscape (SiGeo) is defined as follows:
\begin{equation}\label{eq: SiGeo}
    SiGeo= \sum_{m=1}^M \lambda_1\log \left(  \left\Vert\vmu_k^{(m)}\oslash \pmb\sigma_k^{(m)}\right\Vert_1\right) +\lambda_2 \log\left(\vtheta_k^\top \hat{\mF}(\vtheta_k)\vtheta_k\right) - \lambda_3 \ell(\vtheta_k; \mathcal{D}_k)
\end{equation}
where $\vmu_k^{(m)} =\frac{1}{k}\sum^k_{i=0}|\nabla_{\vtheta^{(m)}}\ell(\vtheta_{i};\mathcal{D}_i)|$ is the average of absolute gradient estimate with respect to $\vtheta^{(m)}$ and  $\pmb\sigma_{k}^{(m)}=\sqrt{\frac{1}{k}\sum^k_{i=0}\left(\nabla_{\vtheta^{(m)}}\ell(\vtheta_i;\mathcal{D}_i)-\frac{1}{k}\sum^k_{i=0}\nabla_{\vtheta^{(m)}}\ell(\vtheta_{i};\mathcal{D}_i)\right)^2}$ is the standard deviation of gradient across batches. See Appendix~\ref{appendix sec: practical implmentation} for the implementation details of SiGeo.
%\sqrt{\widehat{Var}\left[\nabla_{\theta^{(m)}}\ell(\vtheta_{[1:k]};\mathcal{D}_{[1:k]})\right]}
%\vspace{-0.1in}
\end{mydef}
For the first term, we could consider a neuron-wise formulation $\lambda_1\log \left(  \left\Vert\vmu_k\oslash \pmb\sigma_k\right\Vert_1\right)$ as an alternative to the module-wise one, which is adapted from ZiCo. As depicted in Fig.~\ref{fig: validating theory}, the second and third terms in Eq.~\ref{eq: SiGeo} are important in utilizing the information accrued during the training process.

% The logarithms are used to mitigate the impact of extremely large or small values.
\begin{remark}
Based on our theoretical framework, various formulations for the zero-shot proxy can be chosen. In particular, when both $\lambda_2$ and $\lambda_3$ are set to zero, SiGeo simplifies to ZiCo. Likewise, if we allow for complete warming up of the supernet and fine-tuning of the subnet during the search, SiGeo becomes equivalent to one-shot NAS when $\lambda_1$ and $\lambda_2$ are set to zero. In practice, SiGeo reaches SOTA performance using just four batches (see Section~\ref{sec: experiments}). Thus, we compute SiGeo with only four input batches ($k=4$); this makes SiGeo computationally efficient.
\end{remark}
%\vspace{-0.1in}

% Here the trace of covariance matrix of gradient is the variance $\Tr(C)=\Var\left[\vg(\vtheta_k)\right]$ and can be approximated by sample variance.
% The product $\vtheta_k^\top \mF(\hat{\vtheta}^\star)\vtheta_k$ can be approximated by the Fisher-Rao norm $\Vert\vtheta_k\Vert_{FR} = \vtheta_k^\top \mF(\vtheta_k)\vtheta_k$.

\section{Experiments}
\label{sec: experiments}
% \vspace{-0.1in}

We conduct 5 sets of experiments: (1) Empirical validation of the theoretical findings (2) Evaluation of the SiGeo on zero-shot NAS benchmarks; (3) Evaluation of the SiGeo on NAS benchmarks under various warm-up levels; (4) Evaluation of SiGeo on click-through-rate (CTR) benchmark with various warm-up levels using the search space/policy and training setting from \cite{zhang2023nasrec}; (5) Evaluation of SiGeo on CIFAR-10 and
CIFAR-100 under the zero-shot setting using the same search space, search policy, and training settings from \cite{lin2021zen}. In addition, we have an ablation study to assess the efficacy of each key component of SiGeo in Appendix~\ref{appendix sec: ablation}.
% \vspace{-0.1in}

\subsection{Setup}\label{subsec: setup}

In Experiment (1), our primary goal is to validate Theorems~\ref{theorem: training loss approximation}. Notably, the sample variance of gradients has been extensively studied by \cite{li2023zico} and thus we focus on the FR norm, the mean of absolute gradients and current training loss. To achieve this goal, we constructed two-layer MLP with rectified linear unit (ReLU) activation (MLP-ReLU) networks with varying hidden dimensions, ranging from 2 to 48 in increments of 2. Networks were trained on the MNIST dataset for 3 epochs using the SGD optimizer with a learning rate of $0.02$. The batch size was set as 128. 
% We present the relationship of training/test loss, v.s. current loss, FR norm, and mean absolute sample gradients. 

In Experiment (2), we compare SiGeo with other zero-shot proxies on three NAS benchmarks, including (i) NAS-Bench-101 \citep{ying2019bench}, a cell-based search space with 423,624 architectures, (ii) NAS-Bench-201 \citep{dong2020nasbench201}, consisting of 15,625 architectures, and (iii) NAS-Bench-301 \citep{zela2020surrogate}, a surrogate benchmark for the DARTS search space \citep{liu2018darts}, with 10\textsuperscript{18} total architectures. % Our analysis across these benchmark search spaces provides valuable insights into their characteristics and performance across a variety of tasks.
Experiments are performed in the zero-shot setting.

In Experiment (3), we comprehensively compare our method against ZiCo under various warm-up levels on the same search spaces and tasks as used in Experiment (2).

In Experiment (4), we demonstrate empirical evaluations on three popular RecSys benchmarks for Click-Through Rates (CTR) prediction: Criteo\footnote{\href{https://www.kaggle.com/competitions/criteo-display-ad-challenge/data}{https://www.kaggle.com/competitions/criteo-display-ad-challenge/data}}, Avazu\footnote{\href{https://www.kaggle.com/competitions/avazu-ctr-prediction/data}{https://www.kaggle.com/competitions/avazu-ctr-prediction/data}} and KDD Cup 2012\footnote{\href{https://www.kaggle.com/competitions/kddcup2012-track2/data}{https://www.kaggle.com/competitions/kddcup2012-track2/data}}. To identify the optimal child subnet within the NASRec search space – encompassing NASRec Small and NASRec-Full \citep{zhang2023nasrec} – we employ the effective regularized evolution technique \citep{real2019regularized}. All three datasets are
pre-processed in the same fashion as AutoCTR \citep{song2020towards}. We conduct our experiments under three supernet warm-up levels: 0\%, 1\%, and 100\%, corresponding to zero-shot, sub-one-shot and one-shot settings; see details in Appendix~\ref{appendix sec: experimental setting for (4)}.

In Experiment (5), we evaluate the compatibility of SiGeo on zero-shot NAS by utilizing the evaluation search algorithm and settings from ZiCo on CIFAR-10 and CIFAR-100 datasets \citep{krizhevsky2009learning}. Results and experiment settings are provided in Appendix~\ref{appendix sec: SiGeo on cf}.

% In addition, we run additional experiments by using the evolution search algorithm and search space of ZiCo to assess the efficacy of SiGeo across CIFAR-10 and CIFAR-100 \citep{krizhevsky2009learning}; see results and experiment details in Appendix~\ref{appendix sec: experimental setting for (3)}. 

% Our experimental setup align with configurations detailed in \cite{zhang2023nasrec} and we will provide more details of configures in Appendix .

% \textbf{Implementation Details} We implemented PreNAS upon the PyTorch (Paszke et al., 2019) framework with improvements from the timm (Wightman, 2019) library. The supernets are trained following the recipe outlined in AutoFormer (Chen et al., 2021b), where multiple data augmenta

%\vspace{-0.1in}

\subsection{Empirical Justification for the Theory}\label{subsec: Empirical Justification for the Theory}
%\vspace{-0.1in}

Our empirical investigation starts with the validation of Theorem~\ref{theorem: training loss approximation}. The effectiveness of sample standard deviation of gradients (see Theorem~\ref{thm: convergence analysis}) has been well validated by \cite{li2023zico}. Therefore, we focus on the correlations between training loss and test loss in relation to the other three key statistics: (i) current training loss, (ii) FR norm, and (iii) mean absolute gradients. Specifically, we conduct assessments on three different warm-up levels, i.e. 0\%, 10\% and 40\%. The concrete training configures are described in Appendix \ref{appendix subsec: exp (1)} and more results can be found in Appendix~\ref{appendix sec: additional results}.

\begin{figure}[!bht]
\begin{center}
% \framebox[4.0in]{$\;$}
\includegraphics[width=0.85\textwidth]{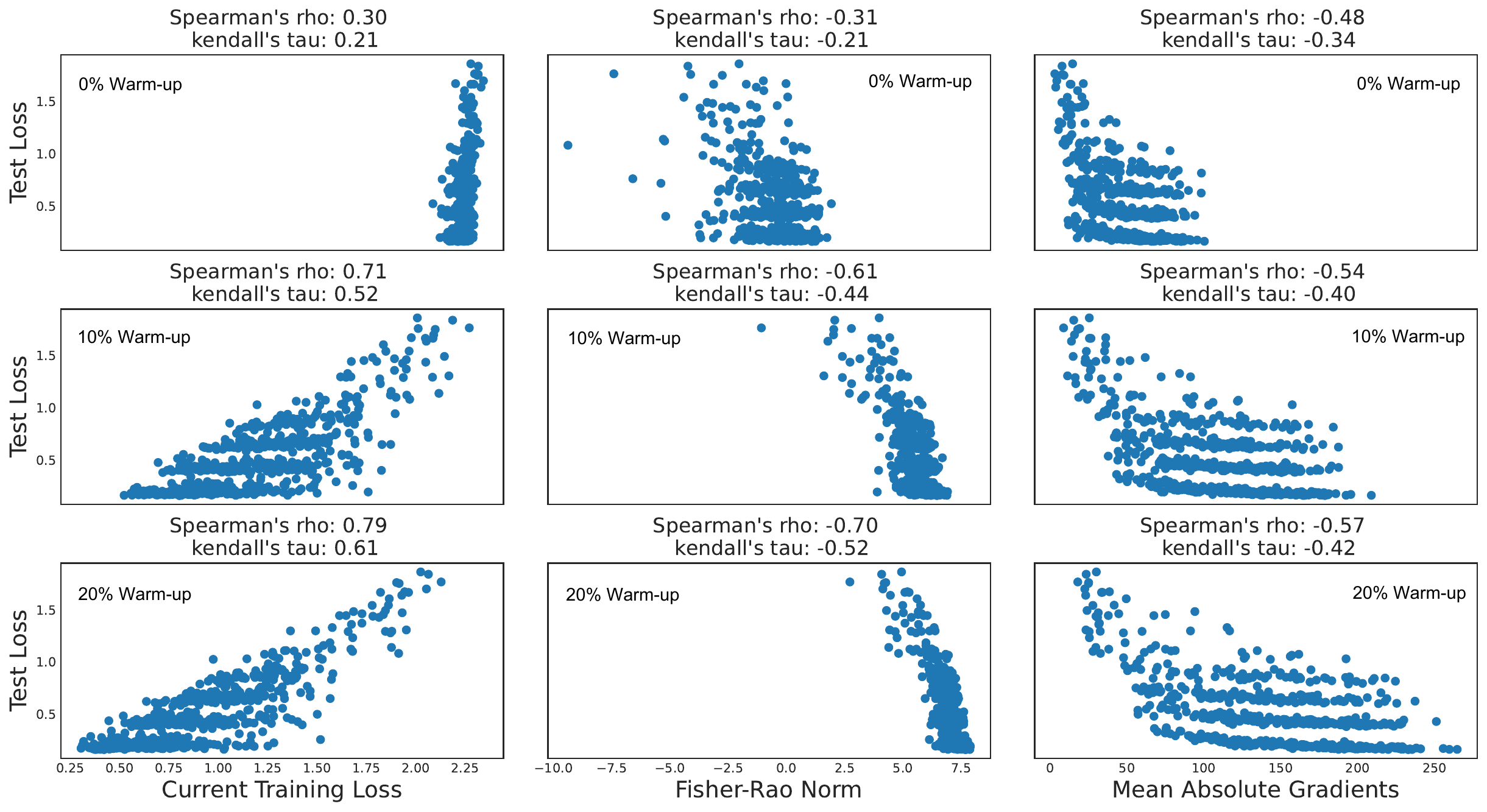}
% \fbox{\rule[-.5cm]{0cm}{4cm} \rule[-.5cm]{4cm}{0cm}}
%\vspace{-0.15in}
\end{center}
\caption{Test losses vs. statistics in Theorem \ref{theorem: training loss approximation}. Results are generated by optimizing two-layer MLP-ReLU networks with varying hidden dimensions, ranging from 2 to 48 in increments of 2. The statistics for each network are computed after being warmed up with 0\%, 10\%, and 40\% data.}\label{fig: validating theory}
\vspace{-0.1in}
\end{figure}

Fig.~\ref{fig: validating theory} shows that networks with higher mean absolute gradients, FR norm values, or lower current training loss tend to have lower test and training loss. These results coincide with the conclusion drawn from Theorem~\ref{theorem: training loss approximation}. In addition, we observe a significant trend: as the warm-up level grows, there's a marked improvement in the ranking correlation between the current training loss and the FR norm with the final training/test loss. In contrast, the correlation of the mean absolute gradients remains fairly stable throughout this process. Given that the ZiCo proxy relies solely on the gradient standard deviation and the mean absolute gradients, the improved correlation coefficients of the FR norm and current training loss offer insights into why SiGeo outperforms the ZiCo method, particularly in the sub-one-shot setting.
%\vspace{-0.1in}

\subsection{Validating SiGeo in NAS benchmarks}
\label{subsec: validating under zer-shot}

We assess the performance of SiGeo under various warm-up levels. 
% which involves conducting evaluations without any warm-up and solely relying on the initial parameters. 
This validation takes place across a range of well-established computer vision tasks. 
It's noteworthy that existing zero-shot methods predominantly focus their validation efforts on these tasks.
%\vspace{-0.1in}

\subsubsection{Comparing SiGeo with Other Proxies under the Zero-Shot Setting}
\label{subsubsec: SiGeo on NAS benchmarks}
We compute the correlation coefficients between proxies and test accuracy on several datasets. Specifically, we use the CIFAR10 (CF10) dataset from NASBench-101 (NB101), NASBench-201 (NB201), and NASBench-301 (NB301); the CIFAR100 (CF100) dataset from NB201; and the ImageNet16-120 (IMGNT) dataset from NB201. 
As presented in Table \ref{table: NAS benchmark results}, SiGeo demonstrates either the highest or equally high correlation with the true test accuracy compared to other zero-shot proxies. Note that experiments are conducted without warm-up of candidate architectures.

\begin{table}[!bth]
%\vspace{-0.1in}
\tiny
\caption{The correlation coefficients between various zero-cost proxies and two naive proxies
(\#params and FLOPs) vs. test accuracy on various NAS benchmkarks (Kendall and Spearman represent Kendall’s
$\tau$ and Spearman’s $\rho$, respectively). The best results are shown with bold fonts.}\label{table: NAS benchmark results}
%\vspace{-0.1in}
\begin{center}
\addtolength{\tabcolsep}{-0.4em}
\resizebox{\columnwidth}{!}{%
\begin{tabular}{@{}l@{}|cccccccccc@{}}
\toprule  
& \multicolumn{2}{c}{NB101-CF10} & \multicolumn{2}{c}{NB201-CF10} & \multicolumn{2}{c}{NB201-CF100} & \multicolumn{2}{c}{NB201-IMGNT} & \multicolumn{2}{c}{NB301-CF10} \\ \cmidrule(l){2-11} 
\multicolumn{1}{c|}{\multirow{-2}{*}{Proxy Name}}& Spearman                               & Kendall                              & Spearman                               & Kendall                              & Spearman                               & Kendall                               & Spearman                               & Kendall                               & Spearman                               & Kendall                              \\ \midrule
epe-nas\citep{lopes2021epe}                                                                  & 0.00                                   & 0.00                                 & 0.70                                   & 0.52                                 & 0.60                                   & 0.43                                  & 0.33                                   & 0.23                                  & 0.00                                   & 0.00                                 \\
fisher\citep{turner2020vlockswap}                                                                      & -0.28                                  & -0.20                                & 0.50                                   & 0.37                                 & 0.54                                   & 0.40                                  & 0.48                                   & 0.36                                  & -0.28                                  & -0.19                                \\
FLOPs \citep{ning2021evaluating}                                                                      & 0.36                                   & 0.25                                 & 0.69                                   & 0.50                                 & 0.71                                   & 0.52                                  & 0.67                                   & 0.48                                  & 0.42                                   & 0.29                                 \\
grad-norm \citep{abdelfattah2021zerocost}                                                                  & -0.25                                  & -0.17                                & 0.58                                   & 0.42                                 & 0.63                                   & 0.47                                  & 0.57                                   & 0.42                                  & -0.04                                  & -0.03                                \\
grasp\citep{wang2020picking}                                                                       & 0.27                                   & 0.18                                 & 0.51                                   & 0.35                                 & 0.54                                   & 0.38                                  & 0.55                                   & 0.39                                  & 0.34                                   & 0.23                                 \\
jacov \citep{mellor2021neural}                                                                      & -0.29                                  & -0.20                                & 0.75                                   & 0.57                                 & 0.71                                   & 0.54                                  & 0.71                                   & 0.54                                  & -0.04                                  & -0.03                                \\
l2-norm  \citep{abdelfattah2021zerocost}                                                                   & 0.50                                   & 0.35                                 & 0.68                                   & 0.49                                 & 0.72                                   & 0.52                                  & 0.69                                   & 0.50                                  & 0.45                                   & 0.31                                 \\
NASWOT  \citep{mellor2021neural}                                                                      & 0.31                                   & 0.21                                 & 0.77                                   & 0.58                                 & 0.80                                   & 0.62                                  & 0.77                                   & 0.59                                  & 0.47                                   & 0.32                                 \\
\#params \citep{ning2021evaluating}                                                                     & 0.37                                   & 0.25                                 & 0.72                                   & 0.54                                 & 0.73                                   & 0.55                                  & 0.69                                   & 0.52                                  & 0.46                                   & 0.31                                 \\
plain \citep{abdelfattah2021zerocost}                                                                     & -0.32                                  & -0.22                                & -0.26                                  & -0.18                                & -0.21                                  & -0.14                                 & -0.22                                  & -0.15                                 & -0.32                                  & -0.22                                \\
snip \citep{lee2019Snip}                                                                       & -0.19                                  & -0.14                                & 0.58                                   & 0.43                                 & 0.63                                   & 0.47                                  & 0.57                                   & 0.42                                  & -0.05                                  & -0.03                                \\
synflow  \citep{tanaka2020pruning}                                                                   & 0.31                                   & 0.21                                 & 0.73                                   & 0.54                                 & 0.76                                   & 0.57                                  & 0.75                                   & 0.56                                  & 0.18                                   & 0.12                                 \\
Zen \citep{ming2021zennas}                                                                        & 0.59                                   & 0.42                                 & 0.35                                   & 0.27                                 & 0.35                                   & 0.28                                  & 0.39                                   & 0.29                                  & 0.43                                   & 0.30                                 \\
ZiCo  \citep{li2023zico}                                                                      & \textbf{0.63}                                   & \textbf{0.46}                                 & 0.74                                   & 0.54                                 & 0.78                                   & 0.58                                  & 0.79                                   & 0.60                                  & \textbf{0.50}                                   & \textbf{0.35}                                 \\
SiGeo (Ours)                                                                     & \textbf{0.63}                                   & \textbf{0.46}                                 & \textbf{0.78}                                   & \textbf{0.58}                                 & \textbf{0.82}                                   & \textbf{0.62 }                                 & \textbf{0.80}                                   & \textbf{0.61}                                  & \textbf{0.50}                                   & \textbf{0.35}                                 \\ \bottomrule
\end{tabular}%
}
\end{center}
%\vspace{-0.1in}
\end{table}
%\vspace{-0.05in}

% \begin{table}[h]
% \centering
% \caption{Top-1 accuracies on CIFAR-10/CIFAR-100 for zero-shot proxies with a 1M parameter budge. The accuracies for both SiGeo and ZiCo are computed by averaging results from three separate runs. Other scores are adapted from \cite{lin2021zen}.} \label{table: CF accuracy under 1M budget}
% \begin{tabular}{@{}ccc@{}}
% \toprule
% Proxy     & CIFAR-10 & CIFAR-100 \\ \midrule
% Random    & 93.50\%  & 71.10\%   \\
% FLOPs     & 93.10\%  & 64.70\%   \\
% grad-norm & 92.80\%  & 65.40\%   \\
% synflow   & 95.10\%  & 75.90\%   \\
% TE-NAS    & 96.10\%  & 77.20\%   \\
% NASWOT    & 96.00\%  & 77.50\%   \\
% Zen-NAS   & 96.2\%   & 80.10\%   \\
% Zico      & 96.96\%  & 81.12\%   \\
% SiGeo (ours)      & \textbf{97.05\%}  & \textbf{81.29\%}   \\ \bottomrule
% \end{tabular}
% \end{table}

% \subsubsection{SiGeo v.s. Other Proxies on Imagenet} \label{susubsec: SiGeo on imagenet}
% In this section, experiments on ImageNet-1k \citep{deng2009imagenet} are conducted to validate
% the superiority of SiGeo.

\subsubsection{Comparing SiGeo with ZiCo on various Warm-up Levels}
\label{subsubsec: SiGeo on NAS benchmarks w/t warmup}
We conduct experiments to compare SiGeo against the SOTA proxy ZiCo under a sub-one-shot setting. Experiments are performed under the same setting of Section \ref{subsubsec: SiGeo on NAS benchmarks} with two key differences: (1) candidate architectures are warmed up before calculating the proxy scores; (2) we set $\lambda_2=50$ and $\lambda_3=1$ when the warm-up level is greater than zero. The results in Table \ref{table: NAS benchmark results with warmup} show (1) the ranking correlation of ZiCo does not improve much with more warm-up; (2) the ranking correlation of SiGeo improves significantly as the warm-up level increases. These results are consistent with the results in Section \ref{subsec: Empirical Justification for the Theory}, underscoring the importance of the Fisher-Rao (FR) norm and current training loss in predicting the network performance when the candidate architectures are warmed up. 
% During the search, we observed an increase in the correlation of FR norm and training loss with the growth of the warm-up level. Therefore, this result further validates our theory and shows its applicability to more complex networks.

\begin{table}[ht]
%\vspace{-0.1in}
\caption{The correlation coefficients of SiGeo and ZiCo vs. test accuracy on various warm-up levels.}\label{table: NAS benchmark results with warmup}
\vspace{-0.15in}
\begin{center}
\addtolength{\tabcolsep}{-0.4em}
\resizebox{1\columnwidth}{!}{%
\begin{tabular}{@{}lc|cc|cc|cc|cc|cc@{}}
\toprule
\multicolumn{2}{c|}{Benchmark}              & \multicolumn{2}{c|}{NB101-CF10} & \multicolumn{2}{c|}{NB201-CF10} & \multicolumn{2}{c|}{NB201-CF100} & \multicolumn{2}{c|}{NB201-IMGNT} & \multicolumn{2}{c}{NB301-CF10} \\ \midrule
\multicolumn{1}{c|}{Method} & Warm-up Level & Spearman        & Kendall       & Spearman        & Kendall       & Spearman        & Kendall        & Spearman        & Kendall        & Spearman       & Kendall       \\ \midrule
\multicolumn{1}{l|}{ZiCo}   & 0\%           & 0.63            & 0.46          & 0.74            & 0.54          & 0.78            & 0.58           & 0.79            & 0.60           & 0.5            & 0.35          \\
\multicolumn{1}{l|}{ZiCo}   & 10\%          & 0.63            & 0.46          & 0.78            & 0.58          & 0.81            & 0.61           & 0.80            & 0.60           & 0.51           & 0.36          \\
\multicolumn{1}{l|}{ZiCo}   & 20\%          & 0.64            & 0.46          & 0.77            & 0.57          & 0.81            & 0.62           & 0.79            & 0.59           & 0.51           & 0.36          \\
\multicolumn{1}{l|}{ZiCo}   & 40\%          & 0.64            & 0.46          & 0.78            & 0.58          & 0.80           & 0.61           & 0.79            & 0.59           & 0.52           & 0.36          \\ \midrule
\multicolumn{1}{l|}{SiGeo}  & 0\%           & 0.63            & 0.46          & 0.78            & 0.58          & 0.82            & 0.62           & 0.80            & 0.61           & 0.5            & 0.35          \\
\multicolumn{1}{l|}{SiGeo}  & 10\%          & 0.68            & 0.48          & 0.83            & 0.64          & 0.85            & 0.66           & 0.85            & 0.67           & 0.53           & 0.37          \\
\multicolumn{1}{l|}{SiGeo}  & 20\%          & 0.69            & 0.51          & 0.84            & 0.65          & 0.87            & 0.69           & 0.86            & 0.68           & 0.55           & 0.40          \\
\multicolumn{1}{l|}{SiGeo}  & 40\%          & 0.70            & 0.52          & 0.83            & 0.64          & 0.88            & 0.70           & 0.87            & 0.69           & 0.56           & 0.41          \\ \bottomrule
\end{tabular}}
\end{center}
\vspace{-0.1in}
\end{table}

\subsection{RecSys Benchmark Results}\label{subsubsec: validation on ads data}
We use three RecSys benchmarks to validate the performance of SiGeo under different warm-up levels when compared with ZiCo and one-shot NAS approaches as well as hand-crafted models. Fig.~\ref{fig: sigeo on ctr} visualize the evaluation of our SiGeo-NAS against SOTA one-shot NAS baselines (DNAS\citep{krishna2021differentiable}, PROFIT\citep{gao2021progressive}, AutoCTR\citep{song2020towards}, NASRecNaet \citep{zhang2023nasrec}), and SOTA zero-shot NAS baseline (ZiCo \citep{li2023zico}) on three benchmark datasets: Criteo, Avazu, and KDD Cup 2012. All methods are trained in the NASRec-Full and NASRec-Small search spaces \citep{zhang2023nasrec}. The \textbf{detailed result} can be found in Table~\ref{table: SiGeo-nas on ctr} of Appendix~\ref{appendix sec: detailed result for ctr benchmark}.

From Fig.~\ref{fig: sigeo on ctr}, we observe that, after warming up the supernet using only 1\% of the training data (sub-one-shot setting), SiGeo shows remarkable performance in comparison to established one-shot SOTA benchmarks with about 3X less computation time. In addition, when comparing SiGeo with hand-crafted CTR models \citep{guo2017deepfm,lian2018xdeepfm,naumov2019deep,song2019autoint} in Table~\ref{table: SiGeo-nas on ctr}, SiGeo-NAS demonstrates significantly improved performance. 
% Specifically, compared with NAS-crafted models, SiGeo shows compelling accuracy performance while requiring a low search cost (approximately 0.08-0.12 GPU days). 

% In comparison to traditional hand-crafted CTR models \citep{guo2017deepfm,lian2018xdeepfm,naumov2019deep,song2019autoint}, SiGeo-NAS exhibits a marked improvement in performance metrics. When contrasted with other NAS-crafted models, SiGeo excels not only in terms of accuracy but also in computational efficiency, requiring a minimal search cost of approximately 0.01 to 0.02 GPU days. More specifically, in a zero-shot setting---without supernet warm-up---SiGeo consistently outperforms the SOTA zero-shot NAS, ZiCo. Furthermore, under a sub-one-shot setting, involving a 1\% supernet warm-up, SiGeo not only maintains its performance advantage over ZiCo but also shows comparable performance to the one-shot NAS, NASRecNet. This is evident through a reduction in Log Loss scores in four of the six benchmark tasks.
\begin{figure}[!hb]
\centering
\hspace{-0.08in}
\begin{subfigure}{0.335\textwidth}
    \includegraphics[width=1\textwidth]{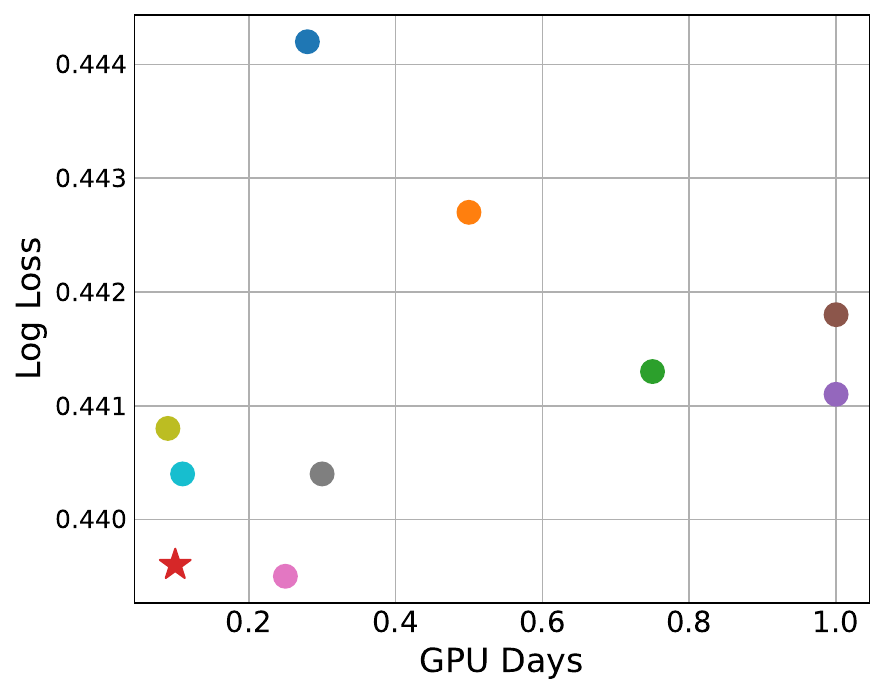}
    \caption{Criteo}
    \label{fig:first}
\end{subfigure}
\hfill
\hspace{-0.11in}
\begin{subfigure}{0.335\textwidth}
    \includegraphics[width=1\textwidth]{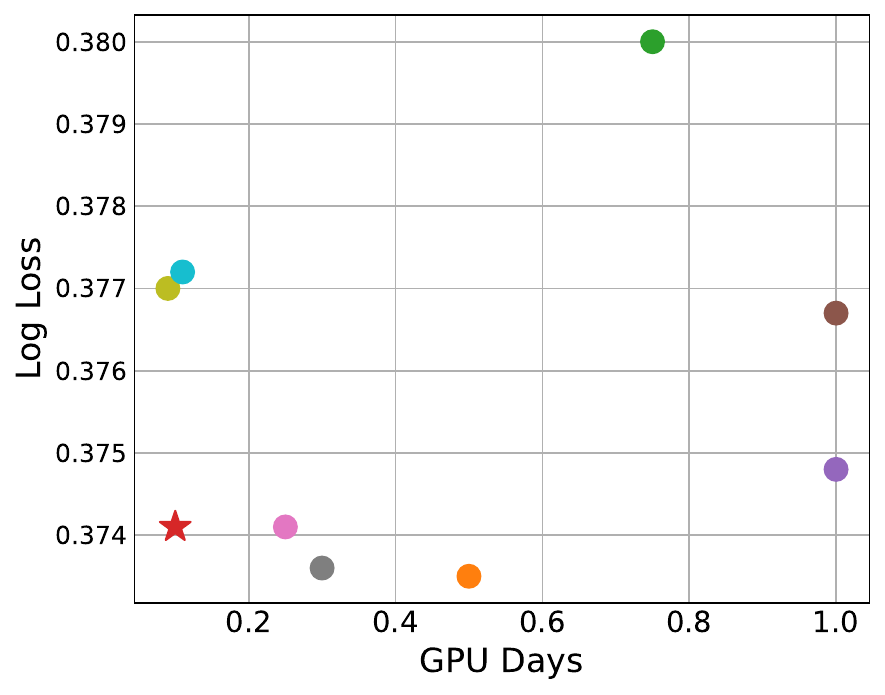}
    \caption{Avazu}
    \label{fig:second}
\end{subfigure}
\hfill
\hspace{-0.11in}
\begin{subfigure}{0.335\textwidth}
    \includegraphics[width=1\textwidth]{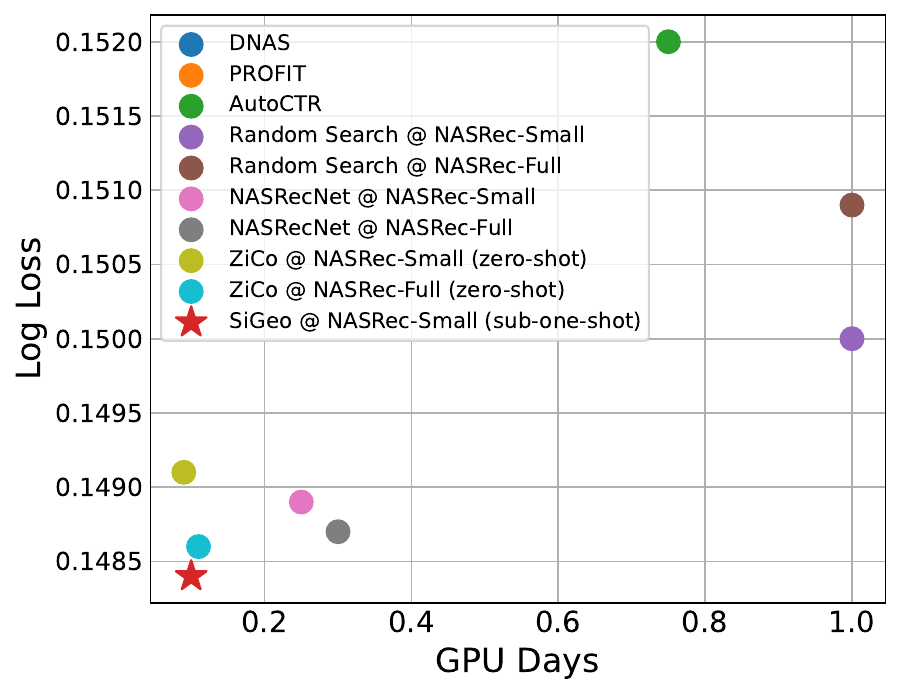}
    \caption{KDD 2012}
    \label{fig:third}
\end{subfigure}
%\vspace{-0.1in}

\caption{Performance of SiGeo-NAS on CTR Predictions Tasks. NASRec-Small and NASRec-Full are the two search spaces, and NASRecNet is the NAS method from \cite{zhang2023nasrec}.}
\label{fig: sigeo on ctr}
%\vspace{-0.1in}
\end{figure}

%\vspace{-0.1in}

\subsection{Ablation Study}\label{appendix sec: ablation}
% When both $\lambda_2$ and $\lambda_3=0$ are set to zero, SiGeo reduces to the ZiCo proxy; conversely, when $\lambda_1$ and $\lambda_3=0$ are zero, SiGeo simplifies to the FR norm. 
  To evaluate the effects of two key components, namely ZiCo and the FR norm, on SiGeo's performance, we carried out an ablation study. Specifically, we compare the best subnets identified using ZiCo and the FR norm as proxies. All experiments are performed in a sub-one-shot setting with 1\% warm-up, following the same configuration as outlined in Section \ref{subsubsec: validation on ads data}. The top-15 models selected by each proxy are trained from scratch and their test accuracies are illustrated as boxplots in Fig.~\ref{fig: ablation}. The results reveal that the exclusion of any terms from SiGeo detrimentally affects performance.

\begin{figure}[ht]
%\vspace{-0.1in}
\begin{center}
% \framebox[4.0in]{$\;$}
\includegraphics[width=0.8\textwidth]{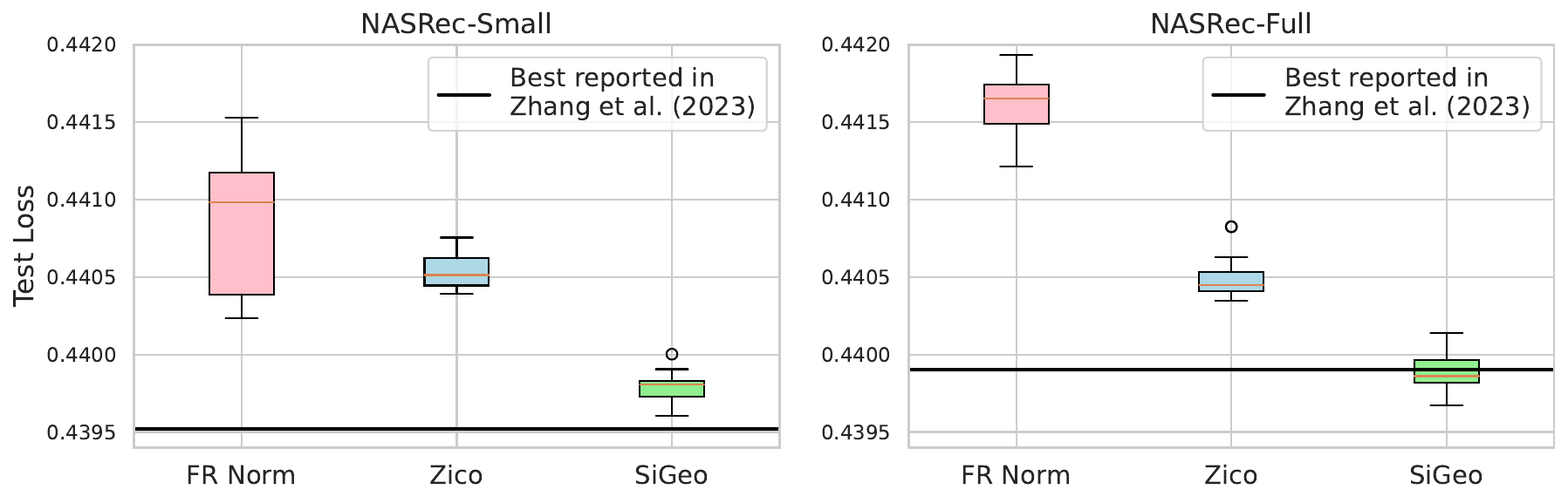}
% \fbox{\rule[-.5cm]{0cm}{4cm} \rule[-.5cm]{4cm}{0cm}}
\end{center}
%\vspace{-0.2in}
\caption{Evaluating the performance of two critical components (FR norm and ZiCo).}\label{fig: ablation}
%\vspace{-0.2in}
\end{figure}

% To assess the influence of two key components, ZiCo and FR norm, we conduct an ablation study to evaluate their impacts on SiGeo's performance. See details in Appendix~\ref{appendix sec: ablation}.

% Under the zero-shot setting (no supernet warm-up), SiGeo shows consistently better performance than SOTA zero-shot approach ZiCo. Under sub-one-shot setting, SiGeo not only shows consistently better performance than ZiCo but also achieves comparable results in multiple datasets compared to SOTA one-shot NAS (NASRecNet).

% A noteworthy observation is that while warming up the supernet with a small portion of training data significantly enhances performance, employing the entire training dataset for warm-up leads to a deterioration in performance. This phenomenon might be attributed to gradient conflicts, i.e. the gradients of different subnets conflict with that of the supernet due to the heterogeneous and complex structures of supernet \citep{gong2021nasvit}. This conflict may consequently lead to a bias towards certain subnets and ultimately contributing to a decline in the effectiveness of NAS.

\section{Conclusion}\label{sec: conclusion}
%\vspace{-0.1in}

In this paper, we propose SiGeo, a new proxy proven to be increasingly effective when the candidate architectures continue to warm up. As the main theoretical
contribution, we first present theoretical results that illuminate the connection between the minimum achievable test loss with the average of absolute gradient estimate, gradient standard deviation, FR norm, and the training loss under the sub-one-shot setting. Motivated by the theoretical insight, we have demonstrated that SiGeo achieves remarkable performance on three RecSys tasks (Criteo/Avazu/KDD-2012) with significantly lower search costs. In addition, we also validate SiGeo on various established NAS benchmarks (NASBench-101/NASBench-201/NASBench-301).

% \subsubsection*{Author Contributions}
% If you'd like to, you may include  a section for author contributions as is done
% in many journals. This is optional and at the discretion of the authors.

% \subsubsection*{Acknowledgments}
% Use unnumbered third level headings for the acknowledgments. All
% acknowledgments, including those to funding agencies, go at the end of the paper.

\bibliography{iclr2024_conference}
\bibliographystyle{unsrtnat}

\newpage

\appendix
\section{Assumption Justification}\label{appendix sec: assumption justification}

The assumption \ref{assumption 1} is originally proposed by \cite{amari1998natural} to show the Fisher efficiency result. This assumption has been widely used by many studies \citep{lee2022masking,karakida2019universal,thomas2020interplay} to suggest that the model is powerful enough to capture the training distribution at $\vtheta=\hat{\vtheta}^\star$. 
We emphasize that Assumption \ref{assumption 1} serves solely as a theoretical justification for substituting the Hessian matrix with the Fisher Information Matrix. Even in cases where this assumption fails to hold, the study by \cite{thomas2020interplay} has shown that the matrices tend to converge as the model training progresses.

The assumption for the initiation of supernet \ref{assumption 2} holds importance in our theoretical analysis. This assumption aligns with recent theoretical findings for SGD. Specifically, after the warm-up phase, where the supernet is trained using a limited subset of the training data, the weights of the supernet are expected to escape the saddle points, settling near a specific local minimum \citep{fang2019sharp,kleinberg2018alternative,xu2018first}. In practice, this assumption is readily attainable: If \ref{assumption 2} doesn't hold, one can simply introduce more samples to adequately warm up the supernet. In addition, we also observe that for numerous tasks, particularly those in the realm of computer vision, achieving satisfactory performance doesn't necessarily require warming up the supernet.

In most deep learning literature, the almost everywhere differentiability assumption \ref{assumption 3}
typically holds due to the need to calculate the gradient.
% The Lipschitz continuity assumption \ref{assumption 3} ensures that the loss function doesn't exhibit extreme fluctuations within a certain neighborhood of parameter space, thus promoting stable optimization processes. This assumption is generally met provided the training updates don't experience explosive growth. It's a well-established fact that a function is Lipschitz continuous if its first-order derivative is bounded. Consequently, if the gradient remains bounded within $\sB$, the Lipschitz continuity is validated. Otherwise, the model's training weights might "explode" to exceedingly large values, potentially leading to floating-point errors.

% Local strong convexity  implies that the loss function has a parabolic shape around local minima. Both \ref{assumption 3} and  are standard assumptions on the regularity of the stochastic approximation problem and widely used in many recent studies \citep{gower2018convergence,moulines2011non}. 

Lastly, \ref{assumption 4} is the regularity condition that provides a foundational basis for performing mathematical operations that rely on the existence of the inverse of these matrices. In addition, this assumption directly implies the convexity of the loss function in the space $\sB$.

\section{Proof of Theorem~\ref{thm: convergence analysis}}\label{appendix sec: proof of thm: convergence analysis}

\begin{lemma}\label{lemma: convexity} Assume \ref{assumption 4}.
If $\ell$ is differentiable,
 for any $(\vx,\vy)\in\mathcal{X}\times \mathcal{Y}$, it holds
\begin{equation*}
\ell(\vtheta_1)\geq \ell(\vtheta_2)+\langle\nabla\ell(\vtheta_2;\vx,\vy), \vtheta_1-\vtheta_2\rangle, \ \ \text{for all $\vtheta_1,\vtheta_2\in\sB$}
\end{equation*}
\end{lemma}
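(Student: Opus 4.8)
The plan is to derive the stated inequality as the first-order (gradient) characterization of a convex differentiable function, where the convexity itself is supplied by the positive-definiteness of the Hessian in Assumption~\ref{assumption 4}. In other words, Assumption~\ref{assumption 4} converts local curvature information into the global-on-$\sB$ inequality we want, and the work is to transport that curvature along the segment joining $\vtheta_1$ and $\vtheta_2$.

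First I would record that the set $\sB$ of Assumption~\ref{assumption 2} is the $\ell_1$-ball $\{\vtheta : \Vert\vtheta-\hat{\vtheta}^\star\Vert_1 \leq M\}$ and is therefore convex; consequently, for any $\vtheta_1,\vtheta_2\in\sB$ the whole segment $\vtheta_2 + t(\vtheta_1-\vtheta_2)$, $t\in[0,1]$, remains in $\sB$. This convexity is exactly what lets me apply the pointwise Hessian hypothesis at every point between the two arguments.

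Next I would reduce the multivariate claim to one dimension. Define $g(t) \defeq \ell(\vtheta_2 + t(\vtheta_1-\vtheta_2); \vx,\vy)$ for $t\in[0,1]$. By the chain rule, $g'(t) = \langle \nabla\ell(\vtheta_2 + t(\vtheta_1-\vtheta_2); \vx,\vy),\, \vtheta_1-\vtheta_2\rangle$ and $g''(t) = (\vtheta_1-\vtheta_2)^\top \mH\big(\vtheta_2 + t(\vtheta_1-\vtheta_2)\big)(\vtheta_1-\vtheta_2)$. Since the Hessian is positive definite throughout $\sB$ by Assumption~\ref{assumption 4}, we get $g''(t)\geq 0$ on $[0,1]$, so $g'$ is nondecreasing and $g$ is convex. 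The scalar first-order convexity inequality $g(1)\geq g(0)+g'(0)$ then unwinds to $\ell(\vtheta_1)\geq \ell(\vtheta_2)+\langle\nabla\ell(\vtheta_2;\vx,\vy),\,\vtheta_1-\vtheta_2\rangle$, which is the claim. Equivalently, I could obtain the same conclusion from Taylor's theorem with integral remainder, writing the left-minus-right difference as $\int_0^1 (1-t)\,g''(t)\,dt\geq 0$.

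The main obstacle is a regularity gap: Assumption~\ref{assumption 3} only gives differentiability \emph{almost everywhere} on $\sB$, and a fixed segment has Lebesgue measure zero in $\R^d$, so differentiability of $g$ and $g'$ does not follow from the a.e.\ statement alone. I would close this by reading Assumption~\ref{assumption 4} as presuming that $\ell(\cdot;\vx,\vy)$ is twice differentiable on $\sB$ (its Hessian $\mH$ is assumed to exist and be positive definite there), which furnishes exactly the differentiability of $g$ and $g'$ required; this reading is consistent with the remark in Appendix~\ref{appendix sec: assumption justification} that Assumption~\ref{assumption 4} directly implies convexity of the loss on $\sB$. With this regularity secured, the remaining chain-rule and Taylor computations are routine.
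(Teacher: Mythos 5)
Your proposal is correct and follows essentially the same route as the paper: the paper likewise argues that the positive-definite Hessian of Assumption~\ref{assumption 4} gives convexity of $\ell(\cdot;\vx,\vy)$ on $\sB$ and then invokes the standard first-order characterization of convexity (citing \citet[Lemma 2.8]{garrigos2023handbook}), which you simply prove in place via the one-dimensional restriction and Taylor's theorem with integral remainder. Your explicit treatment of the convexity of $\sB$ and of the almost-everywhere-differentiability gap in Assumption~\ref{assumption 3} is more careful than the paper's two-line proof, but it does not change the underlying argument.
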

\begin{proof} The positive definite Hessian in Assumption~\ref{assumption 4} directly implies the convexity of loss function $\ell$ with respect to $\vtheta$ for all $(\vx,\vy)\in\mathcal{X}\times\mathcal{Y}$. Then the conclusion immediately follow \citet[Lemma 2.8]{garrigos2023handbook}.
\end{proof}

\textbf{Theorem}~\ref{thm: convergence analysis}.
\textit{Assume \ref{assumption 1}-\ref{assumption 4}. Consider $\{\vtheta_k\}_{k\in\mathbb{Z}}$ a sequence generated by the first-order (SGD) algorithm (\ref{eq: gradient update rule}), with a decreasing sequence of stepsizes satisfying $\eta_k >0$. Let $\sigma_k^2\defeq\Var[\nabla\ell(\vtheta_k;\mathcal{D}_k)]$ denote the variance of sample gradient, where the variance is taken over the joint distribution of all the training samples until the $k$-th iteration (also known as filtration in stochastic approximation literature). Then It holds that
\begin{equation*}
    \E[\mathcal{L}(\bar\vtheta_k)] - \mathcal{L}(\hat\vtheta^\star)\leq \frac{\Vert\vtheta_{0}-\hat\vtheta^\star\Vert}{2\sum^{k-1}_{t=0}\eta_t}+\frac{1}{2}\sum_{i=0}^{k-1}\sigma_k^2
\end{equation*}}

\begin{proof}
    For a given weight $\vtheta_k$, the sample gradient estimate is an unbiased estimate of expected gradient, i.e. $\E[\nabla\ell(\vtheta_k;\mathcal{D}_k)|\vtheta_k]=\E[\frac{1}{|\mathcal{D}_k|}\sum_{(\vx_n,\vy_n)\in\mathcal{D}_k}\nabla\ell(\vtheta_k;\vx_n,\vy_n)|\vtheta_k]=\nabla\mathcal{L}(\vtheta_k)$ under some mild conditions. 
    We will note $\E_k[\cdot]$ instead of $\E[\cdot|\vtheta_k]$, and $\Var_k[\cdot]$ instead of $\Var[\cdot|\vtheta_k]$, for simplicity. Thus, we have the gradient variance \begin{equation}\label{eq: variance of gradient}
\Var_k[\nabla\ell(\vtheta_k;\mathcal{D}_k)]=\E_k\left[\Vert\nabla\ell(\vtheta_k;\mathcal{D}_k)\Vert^2\right]-\Vert\E_k\left[\nabla\ell(\vtheta_k;\mathcal{D}_k)\right]\Vert^2=\E_k\left[\Vert\nabla\ell(\vtheta_k;\mathcal{D}_k)\Vert^2\right].
    \end{equation}
    
    Let us start by analyzing the behavior of $\Vert \vtheta_k-\hat\vtheta^\star\Vert^2$. By
expanding the squares, we obtain
$$\Vert\vtheta_{k+1}-\hat\vtheta^\star\Vert=\Vert\vtheta_k-\vtheta^\star\Vert-2\eta_k\langle\nabla\ell(\vtheta_k;\mathcal{D}_k), \vtheta_k-\vtheta^\star\rangle+\eta_k^2\Vert \nabla\ell(\vtheta_k; \mathcal{D}_k)\Vert^2$$
Hence, after taking the expectation conditioned on $\vtheta_k$, we can use the convexity of $\ell$ to obtain:
\begin{align}
\E_k\left[\Vert\vtheta_{k+1}-\hat\vtheta^\star\Vert\right]&=\Vert\vtheta_k-\hat\vtheta^\star\Vert^2-2\eta_k\langle\E_k[\nabla\ell(\vtheta_k;\mathcal{D}_k)], \vtheta_k-\vtheta^\star\rangle+\eta_k^2\E_k\left[\Vert \nabla\ell(\vtheta_k; \mathcal{D}_k)\Vert^2\right] \nonumber\\
&\leq \Vert\vtheta_k-\hat\vtheta^\star\Vert^2+2\eta_k\langle\nabla\mathcal{L}(\vtheta_k), \vtheta^\star-\vtheta_k\rangle+\eta_k^2\E_k\left[\Vert \nabla\ell(\vtheta_k; \mathcal{D}_k)\Vert^2\right]\nonumber \\
&\leq \Vert\vtheta_k-\hat\vtheta^\star\Vert^2+2\eta_k\left(\mathcal{L}(\vtheta^\star)-\mathcal{L}(\vtheta_k)\right)+\eta_k^2\Var_k\left[\nabla\ell(\vtheta_k;\mathcal{D}_k)\right]\label{eq: conditional expecation}
\end{align}
where the last equation holds due to Lemma~\ref{lemma: convexity} and Eq.~(\ref{eq: variance of gradient}). Rearranging and taking the full expectation of Eq.~(\ref{eq: conditional expecation}) over all past training samples until iteration $k$, we have
\begin{equation}\label{eq: full expectation}
2\eta_k\left(\E\left[\mathcal{L}(\vtheta_k)\right]-\mathcal{L}(\vtheta^\star)\right)\leq \E\left[\Vert\vtheta_{k}-\hat\vtheta^\star\Vert\right]-\E\left[\Vert\vtheta_{k+1}-\hat\vtheta^\star\Vert\right]+\eta_k^2\E\left[\Var_k\left[\nabla\ell(\vtheta_k;\mathcal{D}_k)\right]\right]
\end{equation}
Due to law of total variance, it holds $$\sigma^2=\Var[\nabla\ell(\vtheta_k;\mathcal{D}_k)]=\E\left[\Var\left[\nabla\ell(\vtheta_k;\mathcal{D}_k|\vtheta_k)\right]\right]+\Var\left[\E\left[\nabla\ell(\vtheta_k;\mathcal{D}_k)|\vtheta_k\right]\right]\geq \E\left[\Var_k\left[\nabla\ell(\vtheta_k;\mathcal{D}_k)\right]\right].$$
Then Eq.~\ref{eq: full expectation} becomes
\begin{equation*}
2\eta_k\left(\E\left[\mathcal{L}(\vtheta_k)\right]-\mathcal{L}(\vtheta^\star)\right)\leq \E\left[\Vert\vtheta_{k}-\hat\vtheta^\star\Vert\right]-\E\left[\Vert\vtheta_{k+1}-\hat\vtheta^\star\Vert\right]+\eta_k^2\sigma^2
\end{equation*}

Summing over $i=0,1,\ldots,k-1$ and using telescopic cancellation gives

$$2\sum^{k-1}_{i=0}\eta_i \left(\E\left[\mathcal{L}(\vtheta_i)\right]-\mathcal{L}(\vtheta^\star)\right)\leq \Vert\vtheta_{0}-\hat\vtheta^\star\Vert-\E\left[\Vert\vtheta_{k}-\hat\vtheta^\star\Vert\right]+\sum_{i=0}^k\eta_i^2\sigma^2$$

Since $\E\left[\Vert\vtheta_{k}-\hat\vtheta^\star\Vert\right] \geq 0$, dividing both sides by $2\sum^{k-1}_{t=0}\eta_t$ gives:
$$\sum^{k-1}_{i=0}\frac{\eta_i}{\sum^{k-1}_{t=0}\eta_t}\left(\E\left[\mathcal{L}(\vtheta_i)\right]-\mathcal{L}(\vtheta^\star)\right)\leq \frac{\Vert\vtheta_{0}-\hat\vtheta^\star\Vert}{2\sum^{k-1}_{t=0}\eta_t}+\sum_{i=0}^k\frac{\eta_i^2}{2\sum^{k-1}_{t=0}\eta_t}\sigma^2$$

Define for $i=0,1,\ldots, k-1$ 
\begin{equation*}
    p_{k, i}\defeq \frac{\eta_i^2}{\sum^{k-1}_{t=0}\eta_t}
\end{equation*}
and observe that $p_{k,i}\geq 0$ and $\sum^{k-1}_{i=0}p_{k,i}=1$. This allows us to treat the $\{p_{k,i}\}$ as probabilities. Then using the fact that $\ell$ is convex together with Jensen's inequality gives
\begin{align}
    \E[\mathcal{L}(\bar\vtheta_k)]-\mathcal{L}(\hat\vtheta^\star)&\leq \sum^{k-1}_{i=0}\frac{\eta_i}{\sum^{k-1}_{t=0}\eta_t}\left(\E\left[\mathcal{L}(\vtheta_i)\right]-\mathcal{L}(\vtheta^\star)\right) \nonumber\\
    &\leq \frac{\Vert\vtheta_{0}-\hat\vtheta^\star\Vert}{2\sum^{k-1}_{t=0}\eta_t}+\sum_{i=0}^k\frac{\eta_i^2}{2\sum^{k-1}_{t=0}\eta_t}\Var\left[\nabla\ell(\vtheta_i;\mathcal{D}_i)\right]\nonumber\\
    &\leq\frac{\Vert\vtheta_{0}-\hat\vtheta^\star\Vert}{2\sum^{k-1}_{t=0}\eta_t} +\frac{1}{2}\sum_{i=0}^{k-1}\sigma_k^2\label{eq: last inequality}
\end{align}
where the last inequality holds because $p_{k,i}\leq 1$.
\end{proof}

\section{Convergence Analysis with Strong Convexity Assumption}\label{appendix: convergence with strong convexity}
In this section, we will extend our analysis to strong convex loss function, a stronger assumption than convexity. Mathematically, we assume that
the expected loss function of candidate network $\mathcal{L}(\vtheta)$ is locally strongly convex with convexity constant $M$, that is, $\mathcal{L}(\vtheta_1) \geq \mathcal{L}(\vtheta_2)+\langle \nabla\mathcal{L}(\vtheta_2),\vtheta_1-\vtheta_2\rangle +\frac{M}{2}\Vert\vtheta_1-\vtheta_2\Vert^2$ for any $\vtheta_1, \vtheta_2\in \sB$.

In addition, we define the covariance matrix of gradient $$\mC=\E\left[\left(\nabla\ell(\hat\vtheta^\star;\vx, \vy)-\E[\nabla\ell(\hat\vtheta^\star;\vx, \vy)]\right)\left(\nabla\ell(\hat\vtheta^\star;\vx, \vy)-\E[\nabla\ell(\hat\vtheta^\star;\vx, \vy)]\right)^\top\right].$$ Subsequently, the variance of sample gradient is denoted by $\Tr(\mC)=\Var[\nabla\ell(\hat\vtheta^\star;\vx, \vy)]$.

Based on the strong convexity assumption, the first theorem is established by using a result from \citet{moulines2011non}; see the proof in Appendix~\ref{appendix sec: proof for theorem 1}.

\begin{theorem}\label{theorem: convergence bound on loss} Assume \ref{assumption 1}-\ref{assumption 4}. If the loss function is strongly convex, under the regularity conditions in \citet{moulines2011non}, for the average iterate $\bar{\vtheta}_k=\frac{1}{k+1}\sum^{k}_{i=0}\vtheta_i$ with a first-order SGD (\ref{eq: gradient update rule}) ($\mB_k=\mI$), it holds
\small{
\begin{equation*}
    \E[ \ell(\bar\vtheta_k; \vx,\vy)-\ell(\hat{\vtheta}^\star; \vx,\vy)]\leq L\E\left[\Vert \bar\vtheta_k - \hat{\vtheta}^\star\Vert^2\right]^{1/2} \leq\mathcal{O}\left( \frac{\Tr\left(\mF(\hat{\vtheta}^\star)^{-1}\right)\sqrt{\Tr\left(\mC\right)}}{\sqrt{k+1}}\right) + o\left(\frac{1}{\sqrt{k+1}}\right)
\end{equation*}}
%\vspace{-0.2in}
\end{theorem}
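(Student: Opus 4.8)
The plan is to establish the two chained inequalities separately, treating the left one as a routine Lipschitz-plus-Jensen argument and the right one as the substantive step. For the first inequality, $\E[\ell(\bar\vtheta_k;\vx,\vy)-\ell(\hat{\vtheta}^\star;\vx,\vy)]\leq L\,\E[\Vert\bar\vtheta_k-\hat{\vtheta}^\star\Vert^2]^{1/2}$, I would invoke the $L$-Lipschitz regularity of $\ell(\cdot;\vx,\vy)$ on $\sB$ (one of the regularity conditions imported from \citet{moulines2011non}), which gives the pointwise estimate $\ell(\bar\vtheta_k;\vx,\vy)-\ell(\hat{\vtheta}^\star;\vx,\vy)\leq L\Vert\bar\vtheta_k-\hat{\vtheta}^\star\Vert$ for every fixed $(\vx,\vy)$. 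Taking the expectation over the iterate randomness (the filtration) yields $\E[\ell(\bar\vtheta_k;\vx,\vy)-\ell(\hat{\vtheta}^\star;\vx,\vy)]\leq L\,\E[\Vert\bar\vtheta_k-\hat{\vtheta}^\star\Vert]$, and since $t\mapsto\sqrt{t}$ is concave, Jensen's inequality gives $\E[\Vert\bar\vtheta_k-\hat{\vtheta}^\star\Vert]\leq\E[\Vert\bar\vtheta_k-\hat{\vtheta}^\star\Vert^2]^{1/2}$, closing the first step.

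The heart of the proof is the second inequality, which I would obtain by importing the non-asymptotic bound for Polyak--Ruppert averaged SGD from \citet{moulines2011non}. Under local strong convexity and the stated regularity conditions, their analysis delivers a mean-square bound whose leading constant is the classical Polyak--Juditsky sandwiched covariance, namely $\E[\Vert\bar\vtheta_k-\hat{\vtheta}^\star\Vert^2]\leq\frac{\Tr(\mH(\hat{\vtheta}^\star)^{-1}\,\mC\,\mH(\hat{\vtheta}^\star)^{-1})}{k+1}+o\!\left(\frac{1}{k+1}\right)$. Invoking realizability (Assumption~\ref{assumption 1}), the Hessian and Fisher information coincide at the optimum, $\mH(\hat{\vtheta}^\star)=\mF(\hat{\vtheta}^\star)$, so the bound may be rewritten with $\mF$ in place of $\mH$. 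Taking square roots, using $\sqrt{a+b}\leq\sqrt{a}+\sqrt{b}$ to peel off the remainder as an $o(1/\sqrt{k+1})$ term, and multiplying through by $L$ gives $L\,\E[\Vert\bar\vtheta_k-\hat{\vtheta}^\star\Vert^2]^{1/2}\leq\frac{L\sqrt{\Tr(\mF(\hat{\vtheta}^\star)^{-1}\mC\mF(\hat{\vtheta}^\star)^{-1})}}{\sqrt{k+1}}+o\!\left(\frac{1}{\sqrt{k+1}}\right)$.

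It then remains to pass from the sandwiched quantity $\sqrt{\Tr(\mF^{-1}\mC\mF^{-1})}$ to the stated $\Tr(\mF^{-1})\sqrt{\Tr(\mC)}$. Here I would use that $\mF^{-1}$ and $\mC$ are symmetric positive definite by Assumption~\ref{assumption 4}, so $\Tr(\mF^{-1}\mC\mF^{-1})=\Tr(\mC\mF^{-2})\leq\lambda_1(\mF^{-2})\,\Tr(\mC)=\Vert\mF^{-1}\Vert^2\,\Tr(\mC)$, where the middle inequality follows by decomposing $\mC$ in its eigenbasis and bounding each Rayleigh quotient of $\mF^{-2}$ by its top eigenvalue. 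Since all eigenvalues of $\mF^{-1}$ are positive, $\Vert\mF^{-1}\Vert=\lambda_1(\mF^{-1})\leq\Tr(\mF^{-1})$, hence $\sqrt{\Tr(\mF^{-1}\mC\mF^{-1})}\leq\Tr(\mF^{-1})\sqrt{\Tr(\mC)}$; absorbing $L$ into the $\mathcal{O}(\cdot)$ constant yields the claim. I expect the main obstacle to be the second paragraph: correctly invoking the Moulines--Bach averaged-SGD machinery so that the leading mean-square constant is exactly the sandwiched form $\Tr(\mH^{-1}\mC\mH^{-1})$, and verifying that the decaying step-size schedule and the smoothness, strong-convexity, and noise hypotheses of their theorem are all satisfied in our setting, since the square-root split and the final trace inequality are routine once that bound is secured.
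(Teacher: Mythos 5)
Your proposal is correct and follows essentially the same route as the paper's proof: Lipschitz continuity of $\ell$ on $\sB$ plus Jensen for the first inequality, the Moulines--Bach averaged-SGD bound $\E[\Vert\bar\vtheta_k-\hat{\vtheta}^\star\Vert^2]\leq \Tr(\mH^{-1}\mC\mH^{-1})/(k+1)+o(1/k)$ for the second, the $\sqrt{a+b}\leq\sqrt{a}+\sqrt{b}$ split, and realizability to swap $\mH$ for $\mF$. The only (immaterial) difference is the decoupling step: you bound $\Tr(\mC\mF^{-2})$ by $\lambda_1(\mF^{-1})^2\Tr(\mC)$, which is slightly tighter than the paper's double application of the trace inequality $\Tr(\mA\mB)\leq\Tr(\mA)\Tr(\mB)$ yielding $\Tr(\mF^{-1})^2\Tr(\mC)$, and both relax to the stated bound.
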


Then we present our main theorem which relates the optimality gap with FR norm and standard deviation values. The proof of the following theorem can be found in Appendix~\ref{appendix sec: proof of theorem 2}.

\begin{theorem} \label{theorem: bound of expected loss} Assume \ref{assumption 1}-\ref{assumption 4}. Consider the first order SGD update \ref{eq: gradient update rule} ($\mB_k=\mI$) with fixed learning rate $\eta$.
If the loss function is strongly convex, under the same assumption as stated in Lemma~\ref{lemma: upper bound of parameter}, the expected loss is bounded as follows
\small{
\begin{eqnarray*}
        \lefteqn{\E[\ell(\bar{\vtheta}_k; \vx, \vy)]-\E[\ell(\hat{\vtheta}^\star; \vx, \vy)]}  \nonumber\\
        & \leq \mathcal{O}\left(\frac{\sqrt{\Tr(\mC)}}{2\sqrt{k+1}} \left[
\frac{d^2M}{\E\left[\vtheta_k\mF(\hat{\vtheta}^\star)\vtheta_k\right] + 2\eta\E\left[\sum^k_{i=0} \nabla\ell(\vtheta_i;\mathcal{D}_i)\right]^\top \mF(\hat{\vtheta}^\star)\hat{\vtheta}^\star+\left(\vtheta^{\star}-2\vtheta_0\right)^\top\mF(\hat{\vtheta}^\star)\hat{\vtheta}^\star} +\frac{d\lambda_1(\mF(\hat{\vtheta}^\star)) - d}{\lambda_d(\mF(\hat{\vtheta}^\star))}\right]
\right)
\end{eqnarray*}}
where the big O notation hides the maximum and minimum eigenvalue of Fisher's information.
%\vspace{-0.1in}
\end{theorem}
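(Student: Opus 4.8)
The plan is to bootstrap from Theorem~\ref{theorem: convergence bound on loss}, which already controls the loss gap by $\mathcal{O}\!\left(\Tr(\mF(\hat{\vtheta}^\star)^{-1})\sqrt{\Tr(\mC)}/\sqrt{k+1}\right)$; the only remaining work is to upper-bound the spectral factor $\Tr(\mF(\hat{\vtheta}^\star)^{-1})$ by the bracketed expression. Writing $\Tr(\mF(\hat{\vtheta}^\star)^{-1})=\sum_{i=1}^d \lambda_i(\mF(\hat{\vtheta}^\star))^{-1}$, I would isolate the smallest reciprocal eigenvalue and bound the remaining $d-1$ terms by $1/\lambda_d(\mF(\hat{\vtheta}^\star))$, giving $\Tr(\mF(\hat{\vtheta}^\star)^{-1})\le \lambda_1(\mF(\hat{\vtheta}^\star))^{-1}+(d-1)/\lambda_d(\mF(\hat{\vtheta}^\star))$. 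The second summand, after inflating the constant, supplies the $\tfrac{d\lambda_1(\mF(\hat{\vtheta}^\star))-d}{\lambda_d(\mF(\hat{\vtheta}^\star))}$ term of the claim, so the substantive step is to bound the first summand $1/\lambda_1$ by $\tfrac{d^2M}{D}$, where $D$ is the denominator in the statement.

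This is where the FR norm appears. I would lower-bound the top eigenvalue by the Rayleigh quotient along the iterate displacement, $\lambda_1(\mF(\hat{\vtheta}^\star)) \ge \E[(\vtheta_k-\hat{\vtheta}^\star)^\top\mF(\hat{\vtheta}^\star)(\vtheta_k-\hat{\vtheta}^\star)]/\E[\Vert\vtheta_k-\hat{\vtheta}^\star\Vert^2]$, and then expand the numerator. Keeping the FR norm $\E[\vtheta_k^\top\mF(\hat{\vtheta}^\star)\vtheta_k]$ intact and substituting the telescoped SGD recursion $\vtheta_k=\vtheta_0-\eta\sum_i\nabla\ell(\vtheta_i;\mathcal{D}_i)$ into the linear cross term $-2\E[\vtheta_k]^\top\mF(\hat{\vtheta}^\star)\hat{\vtheta}^\star$ produces the accumulated-gradient term $2\eta\,\E[\sum_i\nabla\ell(\vtheta_i;\mathcal{D}_i)]^\top\mF(\hat{\vtheta}^\star)\hat{\vtheta}^\star$; folding the residual $-2\vtheta_0^\top\mF(\hat{\vtheta}^\star)\hat{\vtheta}^\star$ together with the quadratic constant and invoking realizability \ref{assumption 1} (so that $\vtheta^\star\approx\hat{\vtheta}^\star$ and $\mF(\hat{\vtheta}^\star)=\mH(\hat{\vtheta}^\star)$) reassembles exactly the three summands of $D$. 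Hence $\E[(\vtheta_k-\hat{\vtheta}^\star)^\top\mF(\hat{\vtheta}^\star)(\vtheta_k-\hat{\vtheta}^\star)]=D$ and $1/\lambda_1 \le \E[\Vert\vtheta_k-\hat{\vtheta}^\star\Vert^2]/D$.

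It remains to convert the numerator $\E[\Vert\vtheta_k-\hat{\vtheta}^\star\Vert^2]$ into the factor $d^2M$. Here I would use the local strong convexity constant $M$ and the compact-set assumption \ref{assumption 2} (with $\Vert\vtheta_0-\hat{\vtheta}^\star\Vert_1\le M$), translating between the Euclidean norm and the $\ell_1$ geometry in which the initialization is controlled; the passage $\Vert\cdot\Vert_2^2\le\Vert\cdot\Vert_1^2$ together with the dimension-dependent comparison is what manufactures the $d^2$ factor, and Lemma~\ref{lemma: upper bound of parameter} furnishes the quantitative parameter bound under its stated hypotheses. Combining the two summands and absorbing the remaining ratios of extremal eigenvalues into the $\mathcal{O}(\cdot)$ yields the displayed inequality.

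I expect the principal difficulty to be the middle paragraph: propagating the stochastic recursion through the Fisher-weighted quadratic form while keeping the cross terms in the exact algebraic shape of $D$. Because each $\vtheta_i$ is measurable with respect to the filtration generated by $\mathcal{D}_0,\dots,\mathcal{D}_{i-1}$, the expectations do not factorize, and one must carefully separate the mean contribution — which becomes $D$ — from the accumulated gradient variance, which has to be routed into the $\sqrt{\Tr(\mC)}$ prefactor inherited from Theorem~\ref{theorem: convergence bound on loss} rather than contaminating the denominator. Reconciling the dimensional constants $d$ and $d^2$ tightly enough to match the stated form, without the strong-convexity manipulation loosening the bound, is the other fiddly point.
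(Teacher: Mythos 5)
Your proposal is correct in its overall architecture and matches the paper's: both start from Theorem~\ref{theorem: convergence bound on loss}, reduce the problem to upper-bounding $\Tr(\mF(\hat{\vtheta}^\star)^{-1})$ by the bracketed expression, obtain the denominator $D$ by expanding $\E[(\vtheta_k-\hat{\vtheta}^\star)^\top\mF(\hat{\vtheta}^\star)(\vtheta_k-\hat{\vtheta}^\star)]$ and substituting the telescoped recursion $\vtheta_k=\vtheta_0-\eta\sum_i\nabla\ell(\vtheta_i;\mathcal{D}_i)$ into the cross term, and then absorb constants into the $\mathcal{O}(\cdot)$. Where you genuinely diverge is in the spectral bookkeeping. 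The paper bounds $\Tr(\mF^{-1})$ via Bai's trace inequality (its Lemmas~\ref{lemma: bound of trace of inverse} and \ref{lemma: bound for trace of inverse}), producing $d^2/\Tr(\mF)+(d\lambda_1-d)/\lambda_d$, and then lower-bounds $\Tr(\mF)$ by $D/\E[\Vert\vtheta_k-\hat{\vtheta}^\star\Vert^2]$ using Ruhe's inequality $\Tr(\mA\mB)\le\Tr(\mA)\Tr(\mB)$; you instead split $\Tr(\mF^{-1})\le 1/\lambda_1+(d-1)/\lambda_d$ eigenvalue by eigenvalue and lower-bound $\lambda_1$ by the (expected) Rayleigh quotient. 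Your route is more elementary — it needs no trace-of-inverse machinery — and is in fact tighter, at the price that the bracket does not come out in exactly the stated algebraic form: $(d-1)/\lambda_d\le d(\lambda_1-1)/\lambda_d$ only when $\lambda_1\ge 2-1/d$, and the $d^2$ in the numerator is not forced by your argument but must be inserted by deliberately loosening $\E[\Vert\vtheta_k-\hat{\vtheta}^\star\Vert^2]\le d^2M$. Since the theorem's $\mathcal{O}(\cdot)$ explicitly hides the extremal eigenvalues, these discrepancies are cosmetic; note also that the paper itself silently replaces $\E[\Vert\vtheta_k-\hat{\vtheta}^\star\Vert^2]$ by $M$ via Assumption~\ref{assumption 2}, so your hand-wave at that step is no worse than the original. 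Your aside about invoking realizability to identify $\mF$ with $\mH$ is unnecessary here (that swap happens inside the proof of Theorem~\ref{theorem: convergence bound on loss}, not in this one), but it is harmless.
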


Theorem~\ref{theorem: bound of expected loss} provides insights about the (local) optimality gap,
which is tied to the FR norm $\E[\vtheta_k^\top \mF(\hat\vtheta^\star)\vtheta_k]$, and the gradient variance $\Tr(\mC)$. In other words, the higher the FR norm or the smaller the gradient variance across
different training samples, the lower the training loss the model converges to; i.e., the network
converges at a faster rate.

\subsection{Proof of Theorem~\ref{theorem: convergence bound on loss}}\label{appendix sec: proof for theorem 1}

\begin{lemma}\label{lemma: lipschitz of loss}
   Assume~\ref{assumption 3}. The loss function of candidate network $\ell(\vtheta;\vx, \vy)$ is locally Lipschitz continuous with a Lipschitz constant $L$ for any $(\vx,\vy)$, that is, $| \ell(\vtheta_1;\vx, \vy)-\ell(\vtheta_2;\vx, \vy)| \leq L \Vert\vtheta_1 - \vtheta_2\Vert$ for all $\vtheta_1, \vtheta_2\in \sB$. 
\end{lemma}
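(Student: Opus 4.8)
The plan is to reduce local Lipschitz continuity to a uniform bound on the gradient over the domain $\sB$ and then integrate that bound along straight-line paths. Fix an arbitrary pair $(\vx,\vy)\in\mathcal{X}\times\mathcal{Y}$ and regard $\vtheta\mapsto\ell(\vtheta;\vx,\vy)$ as a scalar function on $\sB$. The two structural facts I would exploit are that $\sB$ is \emph{compact} (so the gradient cannot blow up) and \emph{convex} (so that the segment joining any two points of $\sB$ stays inside $\sB$, where the differentiability hypothesis \ref{assumption 3} is available).

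First I would record that the set $\sB=\{\vtheta:\Vert\vtheta-\hat{\vtheta}^\star\Vert_1\leq M\}$ coming from Assumption~\ref{assumption 2} is a closed $\ell_1$-ball, hence both compact and convex. Next, using \ref{assumption 3} together with compactness, I would set
\[
L\defeq\sup_{\vtheta\in\sB}\Vert\nabla\ell(\vtheta;\vx,\vy)\Vert
\]
and argue $L<\infty$, the point being that a gradient that is continuous (or merely essentially bounded) on the compact set $\sB$ cannot be unbounded. Finally, for any $\vtheta_1,\vtheta_2\in\sB$, convexity guarantees $\vtheta_2+t(\vtheta_1-\vtheta_2)\in\sB$ for all $t\in[0,1]$, so the fundamental theorem of calculus along this segment gives
\[
\ell(\vtheta_1;\vx,\vy)-\ell(\vtheta_2;\vx,\vy)=\int_0^1\langle\nabla\ell(\vtheta_2+t(\vtheta_1-\vtheta_2);\vx,\vy),\,\vtheta_1-\vtheta_2\rangle\,dt,
\]
and the Cauchy--Schwarz inequality together with the bound $L$ yields $|\ell(\vtheta_1;\vx,\vy)-\ell(\vtheta_2;\vx,\vy)|\leq L\Vert\vtheta_1-\vtheta_2\Vert$, which is exactly the claim.

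The main obstacle is the \emph{almost everywhere} qualifier in \ref{assumption 3}: a single segment is a measure-zero subset of $\R^d$, so a.e.\ differentiability on $\sB$ does not by itself guarantee differentiability along the particular path, which is what the integral representation above requires. I would address this either by strengthening the working hypothesis to continuous differentiability of $\ell$ on $\sB$, so that $\nabla\ell$ is genuinely continuous and the fundamental theorem of calculus applies verbatim, or, more carefully, by invoking absolute continuity of $t\mapsto\ell(\vtheta_2+t(\vtheta_1-\vtheta_2);\vx,\vy)$: since the non-differentiable set is negligible and $\ell$ is continuous, this composition is absolutely continuous on $[0,1]$ with essentially bounded derivative of magnitude at most $L\Vert\vtheta_1-\vtheta_2\Vert$, and integrating this essential bound recovers the same estimate. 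Under the smoothness typical of the losses considered here, the first route suffices and the remaining steps are routine.
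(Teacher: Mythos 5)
Your argument is essentially the paper's own proof: the paper likewise derives a uniform gradient bound $L$ on the compact set $\sB$ from Assumption A.3 and then concludes via the mean value theorem along the segment, which is interchangeable with your fundamental-theorem-of-calculus integration. Your explicit treatment of the ``almost everywhere'' qualifier (and of the fact that differentiability alone, without continuity of the gradient, does not force boundedness on a compact set) is a point the paper silently glosses over, so your version is, if anything, the more careful one.
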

\begin{proof}
The differentiability of the loss $\nabla\ell(\vtheta;\vx,\vy)$ (Assumption ~\ref{assumption 3}) implies the boundedness of $\nabla\ell$ on the compact set $\sB$, i.e. $ \Vert\nabla\ell(\vtheta;\vx,\vy)\Vert\leq L$ for all $\vx\in\mathcal{X},\vy\in\mathcal{Y}$ and $\vtheta\in\sB$. Here, $L$ is a constant that bounds the gradient norm.
    The result can be obtained directly from the Mean Value Theorem (MVT) that $$\Vert\ell(\vtheta_1, \vx,\vy)-\ell(\vtheta_2, \vx,\vy)\Vert\leq \Vert\nabla\ell(\xi\vtheta_1+(1-\xi)\vtheta_2)\Vert \Vert\vtheta_1-\vtheta_2\Vert\leq L_\ell\Vert\vtheta_1-\vtheta_2\Vert$$
    where $\xi\in(0,1)$ is some constant.
\end{proof}

\begin{lemma}[Theorem 3 \citet{moulines2011non}] \label{lemma: upper bound of parameter}  Under some regularity conditions, for a first-order stochastic gradient descent (\ref{eq: gradient update rule}) ($\mB_k=\mI$), we have:
\begin{equation*}
    \E\left[\Vert \bar\vtheta_k - \hat{\vtheta}^\star\Vert^2\right] \leq \frac{\Tr\left(\mH(\hat{\vtheta}^\star)^{-1}\mC\mH(\hat{\vtheta}^\star)^{-1}\right)}{k+1} + o\left(\frac{1}{k}\right)
\end{equation*}
where the big O notation hides constants such as the upper bound of the norm of the Hessian matrix, strong convexity constant, and Lipschitz constant. We refer to \citet{moulines2011non} for detailed information about regularity conditions.
\end{lemma}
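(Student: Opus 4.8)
The plan is to reproduce the Polyak--Ruppert averaging argument that underlies this statement, attributed here to \citet[Theorem~3]{moulines2011non}. The guiding idea is that, although each individual iterate $\vtheta_i$ converges only at the slow rate set by the step size, the \emph{averaged} iterate $\bar\vtheta_k$ behaves asymptotically like $\hat{\vtheta}^\star$ plus a running average of (nearly) independent gradient-noise vectors, whose covariance is governed by $\mH(\hat{\vtheta}^\star)^{-1}\mC\,\mH(\hat{\vtheta}^\star)^{-1}$. To set up, write $\mH\defeq\mH(\hat{\vtheta}^\star)$, center the error as $\vd_i\defeq\vtheta_i-\hat{\vtheta}^\star$, and split the stochastic gradient into its mean and a martingale-difference noise, $\bm{\xi}_i\defeq\nabla\ell(\vtheta_i;\mathcal{D}_i)-\nabla\mathcal{L}(\vtheta_i)$. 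By the unbiasedness established in the proof of Theorem~\ref{thm: convergence analysis} we have $\E_i[\bm{\xi}_i]=\vzero$, and by continuity the conditional covariance $\E_i[\bm{\xi}_i\bm{\xi}_i^\top]$ tends to $\mC$ as $\vtheta_i\to\hat{\vtheta}^\star$.

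The central step is to linearize the mean gradient at the local minimum. Since $\nabla\mathcal{L}(\hat{\vtheta}^\star)=\vzero$, Taylor expansion gives $\nabla\mathcal{L}(\vtheta_i)=\mH\vd_i+\vr_i$ with second-order remainder $\vr_i=\mathcal{O}(\Vert\vd_i\Vert^2)$, so the update (\ref{eq: gradient update rule}) with $\mB_i=\mI$ reads $\vd_{i+1}=\vd_i-\eta_i(\mH\vd_i+\vr_i+\bm{\xi}_i)$. I would rearrange this into $\mH\vd_i=-\eta_i^{-1}(\vd_{i+1}-\vd_i)-\vr_i-\bm{\xi}_i$, sum over $i=0,\dots,k$, divide by $k+1$, and left-multiply by $\mH^{-1}$ (invertible by Assumption~\ref{assumption 4}). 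This yields an exact decomposition of $\bar\vd_k=\bar\vtheta_k-\hat{\vtheta}^\star$ into three contributions: a telescoping boundary term, an averaged remainder term, and the averaged noise term $-\mH^{-1}\frac{1}{k+1}\sum_{i=0}^k\bm{\xi}_i$.

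Next I would show that only the noise term survives at the leading order. The telescoping term collapses, by Abel summation under the chosen step-size schedule, to something of order $\frac{1}{(k+1)\eta_k}\max_i\Vert\vd_i\Vert$, while the averaged remainder is of order $\frac{1}{k+1}\sum_i\Vert\vd_i\Vert^2$; both are $o((k+1)^{-1/2})$ once one has the a priori second-moment bound $\E[\Vert\vd_i\Vert^2]=\mathcal{O}(\eta_i)$ supplied by strong convexity. For the surviving term, the martingale-difference property forces all cross-covariances across iterations to vanish, so $\E[\Vert\bar\vd_k\Vert^2]=\frac{1}{(k+1)^2}\sum_{i=0}^k\E\big[\Vert\mH^{-1}\bm{\xi}_i\Vert^2\big]+o(\tfrac1k)$. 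Writing $\E[\Vert\mH^{-1}\bm{\xi}_i\Vert^2]=\Tr(\mH^{-1}\E[\bm{\xi}_i\bm{\xi}_i^\top]\mH^{-1})\to\Tr(\mH^{-1}\mC\,\mH^{-1})$ collapses the sum of $k+1$ nearly-equal terms to $\frac{\Tr(\mH^{-1}\mC\,\mH^{-1})}{k+1}+o(\tfrac1k)$, which is exactly the claimed bound.

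The hard part will be the two control estimates I glossed over: rigorously bounding the nonlinear remainder $\vr_i$ and the telescoping boundary term so that, after squaring and averaging, they are genuinely $o(1/k)$. This is precisely where the unspecified ``regularity conditions'' of \citet{moulines2011non} enter — they demand a uniform a priori second- (and ideally fourth-) moment bound on $\Vert\vd_i\Vert$, local Lipschitz continuity of the Hessian near $\hat{\vtheta}^\star$ to control $\vr_i$, and continuity of the noise covariance so that $\E[\bm{\xi}_i\bm{\xi}_i^\top]\to\mC$. Since the statement is invoked verbatim as an external result, my proof would either cite these conditions as satisfied under Assumptions~\ref{assumption 2}--\ref{assumption 4} (compactness, differentiability, positive-definite Hessian and gradient covariance) or, for a self-contained argument, verify each in turn; the positive-definiteness in Assumption~\ref{assumption 4} is what guarantees the geometric contraction needed for $\E[\Vert\vd_i\Vert^2]=\mathcal{O}(\eta_i)$ in the first place.
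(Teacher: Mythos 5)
The paper never proves this lemma: it is imported verbatim as Theorem~3 of Moulines and Bach (2011), with the hypotheses compressed into ``under some regularity conditions'' and a pointer to that reference. So there is no internal argument to compare against; what you have done is reconstruct the proof of the cited external result, and your reconstruction follows the right route --- it is exactly the Polyak--Ruppert averaging argument (linearize $\nabla\mathcal{L}$ at $\hat\vtheta^\star$, invert the recursion to write $\mH\vd_i$ as a telescoping increment minus noise minus remainder, average, multiply by $\mH^{-1}$, and let the martingale noise term dominate), which is also the skeleton of Moulines and Bach's own analysis. Your diagnosis of where the real work lies is accurate: the a priori bounds $\E[\Vert\vd_i\Vert^2]=\mathcal{O}(\eta_i)$ (and fourth-moment analogues, needed to control the averaged remainder in $L^2$ rather than $L^1$), Lipschitz continuity of the Hessian near $\hat\vtheta^\star$, and convergence of the noise covariance are precisely the ``regularity conditions'' the paper leaves unstated. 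Two caveats. First, to get the sharp leading constant you still need the Cauchy--Schwarz step showing that cross terms between the dominant noise average and the $o\bigl((k+1)^{-1/2}\bigr)$ boundary/remainder terms are $o(1/k)$; your sketch asserts the conclusion but skips this. Second, there is a constant-level mismatch you inherit from the paper itself: $\mC$ is defined from single-sample gradients at $\hat\vtheta^\star$, whereas the noise $\bm{\xi}_i$ in update (\ref{eq: gradient update rule}) is minibatch noise with covariance roughly $\mC/|\mathcal{D}_i|$, so the stated bound is exact only for batch size one (the setting of the cited theorem) and otherwise holds up to that factor. What each approach buys: the paper's citation is economical and appropriate for a known result; your reconstruction makes the hidden hypotheses explicit and reveals that Assumptions A.2--A.4 alone do not obviously supply all of them (notably Hessian Lipschitzness and fourth moments), which matters when checking whether the lemma genuinely applies in the paper's setting.
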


\begin{lemma}(Ruhe’s trace inequality \cite{ruhe1970perturbation}). If $\mA$ and $\mB$
 are 
 positive semidefinite Hermitian matrices with eigenvalues,
 \begin{equation*}
     a_1 \geq a_2 \geq \ldots a_d \geq 0 \quad \quad      b_1 \geq b_2 \geq \ldots b_d \geq 0
 \end{equation*}
respectively, then
\begin{equation*}
    \Tr(\mA\mB) \leq \sum^d_{i=1}a_ib_i
\end{equation*} \label{lemma: Ruhe’s trace inequality}
\end{lemma}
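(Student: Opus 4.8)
\section*{Proof Proposal}

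The plan is to reduce the trace to a bilinear expression in the eigenvalues weighted by a doubly stochastic matrix, and then to maximize that expression over all such matrices using the Birkhoff--von Neumann theorem together with the rearrangement inequality. Since $\mA$ and $\mB$ are positive semidefinite Hermitian matrices, they admit spectral decompositions $\mA = \mU \mLambda_A \mU^*$ and $\mB = \mV \mLambda_B \mV^*$ with $\mU,\mV$ unitary, $\mLambda_A = \mathrm{diag}(a_1,\dots,a_d)$, and $\mLambda_B = \mathrm{diag}(b_1,\dots,b_d)$, where the eigenvalues are the real nonnegative numbers given in the statement.

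Setting $\mW = \mU^*\mV$ (again unitary) and using the cyclic invariance of the trace, I would write
\begin{equation*}
\Tr(\mA\mB) = \Tr\!\left(\mLambda_A \mW \mLambda_B \mW^*\right) = \sum_{i=1}^d \sum_{j=1}^d a_i\, b_j\, |w_{ij}|^2,
\end{equation*}
where $w_{ij}$ are the entries of $\mW$. Because $\mW\mW^* = \mW^*\mW = \mI$, the matrix $S$ with entries $S_{ij} = |w_{ij}|^2$ has every row and every column summing to one, i.e. $S$ is doubly stochastic. Hence $\Tr(\mA\mB) = \sum_{i,j} a_i b_j S_{ij}$ for a doubly stochastic $S$, and it suffices to bound the right-hand side by its maximum over the full set of doubly stochastic matrices.

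Next, since the map $S \mapsto \sum_{i,j} a_i b_j S_{ij}$ is linear and the Birkhoff polytope of doubly stochastic matrices is compact and convex, the maximum is attained at an extreme point; by the Birkhoff--von Neumann theorem these extreme points are exactly the permutation matrices. Thus there exists a permutation $\pi$ with $\Tr(\mA\mB) \leq \sum_{i=1}^d a_i b_{\pi(i)}$. Finally, since both sequences $(a_i)$ and $(b_i)$ are sorted in non-increasing order, the rearrangement inequality gives $\sum_{i=1}^d a_i b_{\pi(i)} \leq \sum_{i=1}^d a_i b_i$, which is exactly the claimed bound.

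The main obstacle is the passage to doubly stochastic matrices and the optimization over them: one must verify that $S_{ij}=|w_{ij}|^2$ is genuinely doubly stochastic (immediate from the unitarity of $\mW$) and that a linear functional over the Birkhoff polytope attains its maximum at a vertex. Everything else---the spectral reduction and the concluding rearrangement step---is routine. A fully self-contained alternative that avoids invoking Birkhoff would instead establish $\sum_{i,j} a_i b_j S_{ij} \leq \sum_i a_i b_i$ directly via an Abel summation (summation-by-parts) argument using the partial-sum majorization implied by double stochasticity, but the Birkhoff--rearrangement route is the cleanest.
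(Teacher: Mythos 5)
Your proof is correct, but there is nothing in the paper to compare it against: the paper does not prove this lemma at all. It is imported as a known result with a citation to \cite{ruhe1970perturbation} and is used only as an ingredient in the next lemma, $\Tr(\mA\mB)\le\Tr(\mA)\Tr(\mB)$. Your argument therefore supplies a proof the paper delegates to the literature, and the route you chose is the classical one (essentially von Neumann's trace-inequality argument): spectral decompositions $\mA=\mU\mLambda_A\mU^*$ and $\mB=\mV\mLambda_B\mV^*$, cyclicity of the trace to get $\Tr(\mA\mB)=\sum_{i,j}a_i b_j\,|w_{ij}|^2$ with $\mW=\mU^*\mV$ unitary, the observation that $S_{ij}=|w_{ij}|^2$ is doubly stochastic, maximization of the linear functional over the Birkhoff polytope at a vertex (a permutation matrix, by Birkhoff--von Neumann), and the rearrangement inequality to conclude. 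Each step checks out: unitarity of $\mW$ gives unit row and column sums of $S$, a linear functional on a compact convex polytope attains its maximum at an extreme point, and the aligned non-increasing orderings of $(a_i)$ and $(b_i)$ make the identity permutation optimal. Two incidental observations: your argument never uses positive semidefiniteness, since the rearrangement inequality holds for arbitrary real sequences sorted in the same order, so you have in fact proved the inequality for all Hermitian $\mA,\mB$, which is slightly stronger than the stated lemma; and relative to the paper's citation-only treatment, your version makes the appendix self-contained at the cost of invoking Birkhoff--von Neumann, whereas the paper's choice is defensible precisely because the result is classical.
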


\begin{lemma}\label{lemma: trace inequality}
Let $\mA$ and $\mB$ be two positive semipositive Hermitian matrix with proper dimension for multiplication. We have $\Tr(\mA\mB) \leq \Tr(\mA)\Tr(\mB)$.
\end{lemma}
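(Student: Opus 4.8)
The plan is to leverage Ruhe's trace inequality (Lemma~\ref{lemma: Ruhe’s trace inequality}), which is already available, together with the non-negativity of the eigenvalues of positive semidefinite matrices. Let $a_1 \geq a_2 \geq \ldots \geq a_d \geq 0$ and $b_1 \geq b_2 \geq \ldots \geq b_d \geq 0$ denote the eigenvalues of $\mA$ and $\mB$ respectively, all non-negative precisely because the matrices are positive semidefinite Hermitian.

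First I would apply Lemma~\ref{lemma: Ruhe’s trace inequality} to obtain $\Tr(\mA\mB) \leq \sum_{i=1}^d a_i b_i$. Then I would observe that, since every $a_i$ and every $b_j$ is non-negative, the diagonal sum is dominated by the full double sum, $\sum_{i=1}^d a_i b_i \leq \sum_{i=1}^d \sum_{j=1}^d a_i b_j = \left(\sum_{i=1}^d a_i\right)\left(\sum_{j=1}^d b_j\right)$, where the inequality holds because the omitted off-diagonal terms $a_i b_j$ with $i \neq j$ are all non-negative. Finally, recognizing $\sum_i a_i = \Tr(\mA)$ and $\sum_j b_j = \Tr(\mB)$ yields the claimed bound $\Tr(\mA\mB) \leq \Tr(\mA)\Tr(\mB)$.

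There is essentially no real obstacle here; the only point requiring care is that the argument depends crucially on positive semidefiniteness to guarantee that all eigenvalues are non-negative, so that discarding the off-diagonal terms is a valid lower bound on the product of traces. An alternative self-contained route, bypassing Ruhe's inequality, would spectrally decompose $\mB = \sum_j b_j \vv_j \vv_j^{*}$ and write $\Tr(\mA\mB) = \sum_j b_j\, \vv_j^{*} \mA \vv_j \leq \lambda_1(\mA) \sum_j b_j \leq \Tr(\mA)\Tr(\mB)$, using the Rayleigh-quotient bound $\vv_j^{*} \mA \vv_j \leq \lambda_1(\mA)$ and $\lambda_1(\mA) \leq \Tr(\mA)$; but the Ruhe-based proof is the shortest given what is already established in the text.
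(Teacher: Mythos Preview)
Your proposal is correct and matches the paper's own proof essentially line for line: apply Ruhe's trace inequality to get $\Tr(\mA\mB)\le\sum_i a_i b_i$, then bound this by $\bigl(\sum_i a_i\bigr)\bigl(\sum_j b_j\bigr)=\Tr(\mA)\Tr(\mB)$ using non-negativity of the eigenvalues. Nothing further is needed.
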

\begin{proof}
Notice that positive semidefinite implies that all eigenvalues of matrix $\mA$ and $\mB$ are positive.  Following Lemma~\ref{lemma: Ruhe’s trace inequality}, we have 
\begin{equation*}
    \Tr(\mA\mB) \leq \sum^d_{i=1}a_ib_i \stackrel{(*)}{\leq} \sum^d_{i=1}a_i \sum^d_{i=1}b_i= \Tr(\mA)\Tr(\mB)
\end{equation*}
where $(*)$ holds due to the fact that $\sum_{i=1}^d a_i \sum_i b_i = \sum_{i=1}^d a_ib_i +\sum_{i\neq j} a_ib_j \geq \sum_{i=1}^d a_ib_i$ for $a_i,b_i\geq 0$.
\end{proof}

Theorem~\ref{theorem: convergence bound on loss} immediately follows Lemma~\ref{lemma: upper bound of parameter}.

\textbf{Theorem}~\ref{theorem: convergence bound on loss}.
\textit{Assume \ref{assumption 1}-\ref{assumption 4}. If the loss function is strongly convex, under the regularity conditions in \citet{moulines2011non}, for the average iterate $\bar{\vtheta}_k=\frac{1}{k+1}\sum^{k}_{i=0}\vtheta_i$ with a first-order SGD (\ref{eq: gradient update rule}) ($\mB_k=\mI$), it holds
\begin{equation*}
    \E[ \ell(\bar\vtheta_k; \vx,\vy)-\ell(\hat{\vtheta}^\star; \vx,\vy)]\leq L\E\left[\Vert \bar\vtheta_k - \hat{\vtheta}^\star\Vert^2\right]^{1/2} \leq\mathcal{O}\left( \frac{\Tr\left(\mF(\hat{\vtheta}^\star)^{-1}\right)\sqrt{\Tr\left(\mC\right)}}{\sqrt{k}}\right) + o\left(\frac{1}{\sqrt{k}}\right)
\end{equation*}}

\begin{proof}
The Lipschitz continuity of the loss function (Lemma~\ref{lemma: lipschitz of loss}) with a Lipschitz constant $L<\infty$ implies that
\begin{equation*}\label{eq: upper-bound of general loss}
   \E\left[\ell(\bar\vtheta_k)-\ell(\hat{\vtheta}^\star)\right] \leq L\E\left[\Vert \bar{\vtheta}_k -\hat{\vtheta}^\star \Vert\right] \leq L\E\left[\Vert \bar{\vtheta}_k -\hat{\vtheta}^\star \Vert^2\right]^{\frac{1}{2}} 
\end{equation*}
Since $\mH$ and $\mC$ are both positive definite, applying Lemma~\ref{lemma: trace inequality} twice leads to 
\begin{equation}\label{eq: trace inequality of lemma 1}
    \Tr\left(\mH(\hat{\vtheta}^\star)^{-1}\mC\mH(\hat{\vtheta}^\star)^{-1}\right)\leq \Tr\left(\mH(\hat{\vtheta}^\star)^{-1}\right)^2\Tr\left(\mC\right).
\end{equation} Using Lemma~\ref{lemma: upper bound of parameter} and Eq.~\ref{eq: trace inequality of lemma 1}, we have
\begin{equation*}
     \E\left[\ell(\bar\vtheta_k)-\ell(\hat{\vtheta}^\star)\right]^{2}\leq \frac{L^2\Tr\left(\mH(\hat{\vtheta}^\star)^{-1}\mC\mH(\hat{\vtheta}^\star)^{-1}\right)}{k+1} + o\left(\frac{1}{k}\right)
     \leq \frac{L^2\Tr\left(\mH(\hat{\vtheta}^\star)^{-1}\right)^2\Tr\left(\mC\right)}{k+1} + o\left(\frac{1}{k}\right) 
\end{equation*}
Then the conclusion follows the inequality $\sqrt{a+b}\leq \sqrt{a}+\sqrt{b}$ for $a,b\geq 0$ and  the ``realizability" assumption \ref{assumption 1}.
\end{proof}

\subsection{Proof of Theorem \ref{theorem: bound of expected loss}}\label{appendix sec: proof of theorem 2}
To begin with, we present several technical lemmas to support the proof of Theorem \ref{theorem: bound of expected loss}.
\begin{lemma}[Theorem 1 \cite{bai1996bounds}]\label{lemma: bound of trace of inverse}
Let $\mA$ be an $d$-by-$d$ symmetric positive definite matrix, $\mu_1=\Tr(\mA)$, $\mu_2=\Vert \mA\Vert_F^2$ and $\alpha\leq \lambda_d(\mA)\leq \ldots \leq \lambda_1(\mA)\leq \beta$ with $\alpha >0$, then
\begin{equation}
    \begin{pmatrix}
    \mu_1 & d
    \end{pmatrix}
    \begin{pmatrix}
    \mu_2 & \mu_1 \\
    \beta^2 & \beta
    \end{pmatrix}^{-1}
    \begin{pmatrix}
    d & 1
    \end{pmatrix}\leq \Tr(\mA^{-1})\leq 
    \begin{pmatrix}
    \mu_1 & d
    \end{pmatrix}
    \begin{pmatrix}
    \mu_2 & \mu_1 \\
    \alpha^2 & \alpha
    \end{pmatrix}^{-1}
    \begin{pmatrix}
    d & 1
    \end{pmatrix}
\end{equation}
\end{lemma}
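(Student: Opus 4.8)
The plan is to recognize the statement as a standard bound in the classical moment problem and to prove it by Gauss--Radau quadrature, following the approach of Bai and Golub. Writing $\lambda_1,\dots,\lambda_d\in[\alpha,\beta]$ for the eigenvalues of $\mA$ and introducing the discrete spectral measure $\nu=\sum_{i=1}^d\delta_{\lambda_i}$, one has the three moments $\int x^0\,d\nu=d$, $\int x\,d\nu=\Tr(\mA)=\mu_1$, and $\int x^2\,d\nu=\Vert\mA\Vert_F^2=\mu_2$, while the quantity of interest is $\Tr(\mA^{-1})=\int f\,d\nu$ with $f(x)=1/x$. The entire difficulty is to bound $\int f\,d\nu$ from knowledge of only these first three moments together with the support constraint $\mathrm{supp}(\nu)\subseteq[\alpha,\beta]$.

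For the upper bound I would introduce the two-node Gauss--Radau rule with prescribed node $t_0=\alpha$ and one free node $t_1$, i.e. weights $w_0,w_1$ and $t_1$ determined by matching all three moments:
\begin{equation*}
w_0+w_1=d,\qquad w_0\alpha+w_1t_1=\mu_1,\qquad w_0\alpha^2+w_1t_1^2=\mu_2 .
\end{equation*}
The self-contained way to extract the inequality is a \emph{tangent parabola}: take the quadratic $q$ that interpolates $f$ at $\alpha$ and is tangent to $f$ at $t_1$. Then $x\big(q(x)-1/x\big)=c\,(x-\alpha)(x-t_1)^2$ with $c=1/(\alpha t_1^2)>0$, which is nonnegative on $[\alpha,\beta]$, so $q\ge f$ pointwise there and hence $\Tr(\mA^{-1})=\sum_i f(\lambda_i)\le\sum_i q(\lambda_i)=c_0d+c_1\mu_1+c_2\mu_2$. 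Because the rule $w_0\delta_\alpha+w_1\delta_{t_1}$ is exact on degree $\le 2$ and $q$ has degree $2$, this equals $w_0 q(\alpha)+w_1 q(t_1)=w_0/\alpha+w_1/t_1$. The lower bound is obtained identically with prescribed node $t_0=\beta$: there the relevant factor $(x-\beta)$ is nonpositive on $[\alpha,\beta]$, the inequality reverses, and the quadrature value underestimates $\Tr(\mA^{-1})$.

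It then remains to identify the quadrature value $w_0/t_0+w_1/t_1$ with the stated $2\times2$ matrix expression, which I would do by direct substitution of the moment equations. For $t_0=\alpha$ one finds $\det\begin{pmatrix}\mu_2 & \mu_1\\ \alpha^2 & \alpha\end{pmatrix}=w_1\alpha t_1(t_1-\alpha)$, while $\begin{pmatrix}\mu_1 & d\end{pmatrix}\,\mathrm{adj}\!\begin{pmatrix}\mu_2 & \mu_1\\ \alpha^2 & \alpha\end{pmatrix}\begin{pmatrix}d\\ 1\end{pmatrix}$ collapses to $w_1(t_1-\alpha)(w_0t_1+w_1\alpha)$; dividing gives exactly $w_0/\alpha+w_1/t_1$. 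Replacing $\alpha$ by $\beta$ throughout reproduces the lower bound, closing the argument.

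The main obstacle is the sign control underlying the bounding step: one must justify that $f'''(x)=-6/x^4$ is of constant sign and that the node polynomial keeps a definite sign on the support of $\nu$, and, for the tangent-parabola route, that the free node $t_1$ determined by moment matching yields $c>0$ (equivalently, lies off the origin) so that $q\ge f$ genuinely holds on all of $[\alpha,\beta]$. I would secure this either by quoting the classical Gauss--Radau existence and node-location theory or, more elementarily, by the explicit parabola construction above, where $(x-t_1)^2\ge0$ makes the sign immediate regardless of where $t_1$ falls. Once the sign is in hand, everything else is routine linear algebra.
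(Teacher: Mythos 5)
Your proof is correct. The paper itself gives no proof of this lemma --- it is imported verbatim as Theorem~1 of \cite{bai1996bounds} --- and your reconstruction is essentially the original Gauss--Radau argument of that reference, with the explicit tangent-parabola factorization $xq(x)-1=c\,(x-t_0)(x-t_1)^2$ serving as a clean elementary substitute for the usual divided-difference remainder formula; the moment-matching algebra identifying $w_0/t_0+w_1/t_1$ with the $2\times 2$ matrix expression checks out (the only unstated degeneracy is $\mu_1=d\alpha$ or $\mu_1=d\beta$, i.e.\ all eigenvalues of $\mA$ coinciding with an endpoint, where the displayed matrix is singular and the statement itself is vacuous).
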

\begin{lemma}\label{lemma: bound for trace of inverse} Suppose that $\alpha \leq \lambda_{1}(\mF) \leq \beta$. Then for a $d$-by-$d$ symmetric positive definite matrix $\mF$, it holds that
\begin{equation}\label{eq: conclusion of lemma 3}
    \Tr(\mF^{-1}) \leq  d^2 \Tr(\mF)^{-1} + \frac{d\beta}{\alpha}-\frac{d}{\alpha}
\end{equation}
\end{lemma}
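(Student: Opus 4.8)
The plan is to pass to the spectrum of $\mF$ and feed the two-sided estimate of Lemma~\ref{lemma: bound of trace of inverse} into a short algebraic reduction. Write $\mu_1=\Tr(\mF)=\sum_{i=1}^d\lambda_i$ and $\mu_2=\Vert\mF\Vert_F^2=\sum_{i=1}^d\lambda_i^2$, where $0<\alpha\le\lambda_d\le\dots\le\lambda_1\le\beta$ are the eigenvalues of $\mF$. Taking the upper bound of Lemma~\ref{lemma: bound of trace of inverse} with the lower spectral bound $\alpha$ and evaluating the $2\times2$ inverse explicitly, I would first record the closed form
\begin{equation*}
\Tr(\mF^{-1})\ \le\ \begin{pmatrix}\mu_1 & d\end{pmatrix}\begin{pmatrix}\mu_2 & \mu_1\\ \alpha^2 & \alpha\end{pmatrix}^{-1}\begin{pmatrix}d\\ 1\end{pmatrix}=\frac{d\mu_2+\alpha d\mu_1-\mu_1^2-\alpha^2 d^2}{\alpha\,(\mu_2-\alpha\mu_1)}\ .
\end{equation*}

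The engine of the argument is to notice that this rational expression collapses to a single subtraction. Completing the square rewrites the numerator as $d(\mu_2-\alpha\mu_1)-(\mu_1-d\alpha)^2$, which yields the identity
\begin{equation*}
\Tr(\mF^{-1})\ \le\ \frac{d}{\alpha}-\frac{(\mu_1-d\alpha)^2}{\alpha\,(\mu_2-\alpha\mu_1)}\ \le\ \frac{d}{\alpha},
\end{equation*}
since $\mu_1-d\alpha=\sum_i(\lambda_i-\alpha)\ge0$ and $\mu_2-\alpha\mu_1=\sum_i\lambda_i(\lambda_i-\alpha)\ge0$ because every $\lambda_i\ge\alpha$, so the subtracted term is nonnegative. (The same two displays also reproduce the AM--HM floor $\Tr(\mF^{-1})\ge d^2/\mu_1$, confirming that $d^2\Tr(\mF)^{-1}$ is the natural baseline from which the lemma measures.)

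It then remains to recast the clean bound $d/\alpha$ into the stated shape. Splitting off the floor gives $\frac{d}{\alpha}=\frac{d^2}{\mu_1}+\frac{d(\mu_1-d\alpha)}{\alpha\mu_1}$, so the claim reduces to the scalar inequality $\frac{\mu_1-d\alpha}{\mu_1}\le\beta-1$. Because $0\le\mu_1-d\alpha<\mu_1$, the left-hand side lies in $[0,1)$, so the inequality is immediate once $\beta\ge2$; for $1\le\beta<2$ one instead invokes $\mu_1\le d\beta$, giving $\frac{\mu_1-d\alpha}{\mu_1}\le\frac{\beta-\alpha}{\beta}\le\beta-1$ whenever $\beta^2-2\beta+\alpha\ge0$. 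Either way the bound becomes $\Tr(\mF^{-1})\le d^2\Tr(\mF)^{-1}+\frac{d\beta}{\alpha}-\frac{d}{\alpha}$, and substituting $\lambda_d(\mF),\lambda_1(\mF)$ for $\alpha,\beta$ recovers the sharper $\frac{d\lambda_1-d}{\lambda_d}$ used in Theorem~\ref{theorem: bound of expected loss}.

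I expect the main obstacle to be twofold. The routine-but-error-prone part is the $2\times2$ inversion together with the completion of the square that produces the perfect-square remainder $(\mu_1-d\alpha)^2$; getting this cancellation right is exactly what makes the estimate telescope to $d/\alpha$. The genuinely delicate point is the closing comparison $\frac{\mu_1-d\alpha}{\mu_1}\le\beta-1$: since its left side is always below $1$, it costs nothing when $\beta\ge2$ (a mild condition, as the top eigenvalue of $\mF$ is typically large), but for $\beta<2$ it needs the spectrum bounded below, e.g. $\lambda_d(\mF)\ge1$. I would therefore state this spectral hypothesis explicitly, because the first display shows $\Tr(\mF^{-1})$ really can approach $d/\alpha$ for an ill-conditioned $\mF$, so without such a condition the printed constant $\frac{d\beta-d}{\alpha}$ can fall below the true gap $\Tr(\mF^{-1})-d^2\Tr(\mF)^{-1}$.
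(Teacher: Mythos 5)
Your route is essentially the paper's route---apply the upper half of Lemma~\ref{lemma: bound of trace of inverse} with the lower spectral bound $\alpha$, invert the $2\times 2$ matrix explicitly, and rearrange---but your algebra is the correct version of it. Writing $\mu_1=\Tr(\mF)$, $\mu_2=\Vert\mF\Vert_F^2$, $N=d\mu_2+\alpha d\mu_1-\mu_1^2-\alpha^2d^2$ and $D=\alpha(\mu_2-\alpha\mu_1)$, your identity $N=d(\mu_2-\alpha\mu_1)-(\mu_1-d\alpha)^2$ checks out and yields $\Tr(\mF^{-1})\le d/\alpha$; note this much is also immediate from $\Tr(\mF^{-1})=\sum_i\lambda_i^{-1}\le d/\alpha$, so the Bai machinery is really only decoration at this stage. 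By contrast, the paper's own rearrangement of the same fraction into $-\frac{d}{\alpha}+\frac{d^2\alpha^2+\mu_1^2}{\alpha^2\mu_1-\alpha\mu_2}$ is not an identity (it differs from $N/D$ by $2d\mu_2/D$), and its later step replaces $\mu_1/\alpha^2$ by $\mu_1/\alpha$, which silently assumes $\alpha\ge 1$.

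Your ``genuinely delicate point'' is therefore not a weakness of your write-up but a genuine defect of the lemma: as printed it is false. Take $d=2$ and $\mF=\mathrm{diag}(0.6,\,0.4)$, so $\alpha=0.4$, $\beta=0.6$; then $\Tr(\mF^{-1})=1/0.6+1/0.4\approx 4.17$, while $d^2\Tr(\mF)^{-1}+\frac{d\beta}{\alpha}-\frac{d}{\alpha}=4-2=2$. Your reduction to the scalar inequality $\frac{\mu_1-d\alpha}{\mu_1}\le\beta-1$ isolates exactly the missing hypothesis: it holds for free when $\beta\ge 2$, and under $\alpha\ge 1$ one gets $\frac{\mu_1-d\alpha}{\mu_1}\le\frac{\beta-\alpha}{\beta}\le\frac{\beta-1}{\beta}\le\beta-1$, so a spectral floor of this kind should be added to the statement---it is the same condition the paper's proof needs and omits. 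The caveat propagates: in Theorem~\ref{theorem: bound of expected loss} the lemma is invoked with $\alpha=\lambda_d(\mF(\hat{\vtheta}^\star))$ and $\beta=\lambda_1(\mF(\hat{\vtheta}^\star))$, and for a Fisher information matrix a smallest eigenvalue of at least $1$ is far from automatic.
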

\begin{proof}
Applying Lemma~\ref{lemma: bound of trace of inverse} to $\Tr(\mF^{-1})$ leads to
\begin{equation} \label{eq 1: lemma3}
        \Tr(\mF^{-1})\leq 
    \begin{pmatrix}
    \Tr(\mF) & d
    \end{pmatrix}
    \begin{pmatrix}
    \Vert \mF\Vert_F^2 & \Tr(\mF) \\
    \alpha^2 & \alpha
    \end{pmatrix}^{-1}
    \begin{pmatrix}
    d & 1
    \end{pmatrix}
=\frac{d\alpha \cdot \Tr(\mF)-d^2 \alpha^2 -\Tr(\mF)^2+d\Vert \mF\Vert_F^2}{\alpha \Vert \mF\Vert_F^2-\alpha^2 \Tr(\mF)}
\end{equation}
where the last equality holds due to the definition of matrix inverse. The Eq.~(\ref{eq 1: lemma3}) can be further rearranged to
\begin{equation}
    \Tr(\mF^{-1}) \leq -\frac{d}{\alpha} + \frac{d^2\alpha^2+\Tr(\mF)^2}{\alpha^2 \Tr(\mF)-\alpha \Vert \mF\Vert_F^2} \leq -\frac{d}{\alpha} + \frac{d^2\alpha^2+\Tr(\mF)^2}{\alpha^2 \Tr(\mF)} \leq  -\frac{d}{\alpha} + \frac{d^2}{\Tr(\mF)}+ \frac{\Tr(\mF)}{\alpha}
\end{equation}
The conclusion (\ref{eq: conclusion of lemma 3}) can be obtained by noting that $\Tr(\mF)=\sum^d_{i=1}\lambda_i(\mF)\leq d\beta$.
\end{proof}

\textbf{Theorem}~\ref{theorem: bound of expected loss} 
\textit{Assume \ref{assumption 1}-\ref{assumption 4}. Consider the first order SGD update \ref{eq: gradient update rule} ($\mB_k=\mI$) with fixed learning rate $\eta$.
If the loss function is strongly convex, under the same assumption as stated in Lemma~\ref{lemma: upper bound of parameter}, the expected loss is bounded as follows
\begin{eqnarray*}
        \lefteqn{\E[\ell(\bar{\vtheta}_k; \vx, \vy)]-\E[\ell(\hat{\vtheta}^\star; \vx, \vy)]} \nonumber\\
        & \leq \mathcal{O}\left(\frac{\sqrt{\Tr(\mC)}}{2\sqrt{k+1}} \left[
\frac{d^2M}{\E\left[\vtheta_k\mF(\hat{\vtheta}^\star)\vtheta_k\right] + 2\eta\E\left[\sum^k_{i=0} \nabla\ell(\vtheta_i;\mathcal{D}_i)\right]^\top \mF(\hat{\vtheta}^\star)\hat{\vtheta}^\star+\left(\vtheta^{\star}-2\vtheta_0\right)^\top\mF(\hat{\vtheta}^\star)\hat{\vtheta}^\star} +\frac{d\lambda_1(\mF(\hat{\vtheta}^\star)) - d}{\lambda_d(\mF(\hat{\vtheta}^\star))}\right]
\right)
\end{eqnarray*}
where the big O notation hides the maximum and minimum eigenvalue of Fisher's information.}

\begin{proof}
To begin with, we have
\begin{equation}\label{eq: theorem inequality 1}
 \Tr\left(\mF(\hat{\vtheta}^\star)^{-1}\right) \\
\leq \frac{d^2}{\Tr(\mF(\hat{\vtheta}^\star))} + \frac{d\lambda_1(\mF(\hat{\vtheta}^\star)) - d}{\lambda_d(\mF(\hat{\vtheta}^\star))}
\end{equation}
where the inequality holds because of Lemma~\ref{lemma: bound for trace of inverse} and choosing $\alpha=\lambda_d(\mF)$ and $\beta=\lambda_1(\mF)$. Further, note that
\begin{align}
    \E\left[\Tr\left((\vtheta_k -\hat{\vtheta}^\star)^\top \mF(\hat{\vtheta}^\star)(\vtheta_k -\hat{\vtheta}^\star)\right)\right] &=  \E\left[\Tr\left( \mF(\hat{\vtheta}^\star)(\vtheta_k -\hat{\vtheta}^\star) (\vtheta_k -\hat{\vtheta}^\star)^\top\right)\right] \nonumber\\
    &\leq \Tr\left( \mF(\hat{\vtheta}^\star)\right)\E\left[\Tr\left((\vtheta_k -\hat{\vtheta}^\star) (\vtheta_k -\hat{\vtheta}^\star)^\top\right)\right] \label{eq: theorem 2}
\end{align}
Notice that $\Tr\left((\vtheta_k -\hat{\vtheta}^\star) (\vtheta_k -\hat{\vtheta}^\star)^\top\right)=\Vert \vtheta_k-\hat{\vtheta}^\star\Vert^2$. Then Eq.~\ref{eq: theorem 2} becomes
\begin{align}
    \frac{1}{\Tr\left( \mF(\hat{\vtheta}^\star)\right)}
    &\leq \frac{\E\left[\Vert\vtheta_k -\hat{\vtheta}^\star\Vert^2\right]}{\E\left[ \Tr\left((\vtheta_k -\hat{\vtheta}^\star)^\top \mF(\hat{\vtheta}^\star)(\vtheta_k -\hat{\vtheta}^\star)\right)\right]} \nonumber \\
    &= \frac{\E\left[\Vert\vtheta_k -\hat{\vtheta}^\star\Vert^2\right]}{\E\left[\vtheta_k\mF(\hat{\vtheta}^\star)\vtheta_k\right] - 2\E\left[\vtheta_k\right]^\top \mF(\hat{\vtheta}^\star)\hat{\vtheta}^\star+\vtheta^{\star\top}\mF(\hat{\vtheta}^\star)\hat{\vtheta}^\star}\label{eq: inverse trace of fisher}
\end{align}
% Then by applying Cauchy–Schwarz inequality, it follows that
% \begin{align}
%     -\E\left[\vtheta_k\right]^\top \mF(\hat{\vtheta}^\star)\hat{\vtheta}^\star&= -\vtheta_0^\top\mF(\hat{\vtheta}^\star)\hat{\vtheta}^\star +\eta\E\left[\left[\sum^k_{i=0} \vg(\vtheta_i)\right]^\top \mF(\hat{\vtheta}^\star)\hat{\vtheta}^\star\right] \nonumber\\
%     &\leq -\vtheta_0^\top\mF(\hat{\vtheta}^\star)\hat{\vtheta}^\star +\eta\E\left[\sum^k_{i=0} |\vg(\vtheta_i)|^\top \mF(\hat{\vtheta}^\star)\hat{\vtheta}^\star\right] \nonumber\\
%     &\leq -\vtheta_0^\top\mF(\hat{\vtheta}^\star)\hat{\vtheta}^\star +\eta\E\left[\sum^k_{i=0} \Vert \vg(\vtheta_i)\Vert_1 \right] \Vert\mF(\hat{\vtheta}^\star)\hat{\vtheta}^\star\Vert_\infty \label{eq: cross term inequality}
% \end{align}
% where Eq.~\ref{eq: cross term inequality} holds by applying H{\"o}lder's inequality.
Notice that $\vtheta_k = \vtheta_0 -\eta\sum_{i=0}^k \vg(\vtheta_i)$ and thus
\begin{equation}
    -\E\left[\vtheta_k\right]^\top \mF(\hat{\vtheta}^\star)\hat{\vtheta}^\star= -\vtheta_0^\top\mF(\hat{\vtheta}^\star)\hat{\vtheta}^\star +\eta\E\left[\left[\sum^k_{i=0} \vg(\vtheta_i)\right]^\top \mF(\hat{\vtheta}^\star)\hat{\vtheta}^\star\right] \label{eq: cross term inequality}
\end{equation}
Then combining Eq.~\ref{eq: inverse trace of fisher} and Eq.~\ref{eq: cross term inequality}, we have
\begin{equation*}
    \frac{1}{\Tr\left( \mF(\hat{\vtheta}^\star)\right)}\leq \frac{M}{\E\left[\vtheta_k\mF(\hat{\vtheta}^\star)\vtheta_k\right] + 2\eta\E\left[\sum^k_{i=0} \vg(\vtheta_i)\right]^\top \mF(\hat{\vtheta}^\star)\hat{\vtheta}^\star+\left(\vtheta^{\star}-2\vtheta_0\right)^\top\mF(\hat{\vtheta}^\star)\hat{\vtheta}^\star} 
\end{equation*}
By applying the equation above to Eq.~\ref{eq: theorem inequality 1}, we further have
\begin{align*}
\Tr(\mF^{-1}(\hat{\vtheta}^\star))&\leq
\frac{d^2M}{\E\left[\vtheta_k\mF(\hat{\vtheta}^\star)\vtheta_k\right] + 2\eta\E\left[\sum^k_{i=0} \vg(\vtheta_i)\right]^\top \mF(\hat{\vtheta}^\star)\hat{\vtheta}^\star+\left(\vtheta^{\star}-2\vtheta_0\right)^\top\mF(\hat{\vtheta}^\star)\hat{\vtheta}^\star} \nonumber \\
&\quad +\frac{d\lambda_1(\mF(\hat{\vtheta}^\star)) - d}{\lambda_d(\mF(\hat{\vtheta}^\star))}
\end{align*}
The conclusion can be immediately obtained by applying Theorem~\ref{theorem: convergence bound on loss}.
\end{proof}

\section{Proof of Theorem~\ref{theorem: training loss approximation}}\label{appendix sec: proof of theorem 3}
\textbf{Theorem}~\ref{theorem: training loss approximation} \textit{Assume \ref{assumption 1}-\ref{assumption 4}. Let $\mu_k=\sum^k_{i=0}\E\left[\left\Vert \nabla\ell(\vtheta_i;\mathcal{D}_i)\right\Vert_1\right]$ denote the sum of the expected absolute value of gradient across mini-batch samples, denoted by $\mathcal{D}_i$, where the gradient norm is $$\Vert \nabla\ell(\vtheta_i;\mathcal{D}_i)\Vert_1= \sum_{j=1}^d\left|\nabla_{\theta_i^{(j)}}\ell(\vtheta_i;\mathcal{D}_i)\right|\quad \text{and} \quad  \nabla_{\theta_i^{(j)}}\ell(\vtheta_i;\mathcal{D}_i)= \frac{1}{|\mathcal{D}_i|}\sum_{n=1}^{|\mathcal{D}_i|} \nabla_{\theta_i^{(j)}}\ell(\vtheta_i;\vx_n,\vy_n).$$
Under some regularity conditions such that $\E\left[(\vtheta_k-\hat{\vtheta}^\star)^3\right]=o(\frac{1}{k})$, it holds
% the minimum achievable training loss can be approximated by
$$\mathcal{L}(\hat{\vtheta}^\star) \geq \E[\mathcal{L}(\vtheta_k)] - \frac{1}{2}\E\left[\vtheta_k^\top \mF(\hat{\vtheta}^\star)\vtheta_k\right] - \eta\mu_k \Vert\mH(\hat{\vtheta}^\star)\hat{\vtheta}^\star\Vert_{\infty} - \frac{1}{2}(\hat\vtheta^{\star}-2\vtheta_0)^\top \mH(\hat{\vtheta}^\star)\hat{\vtheta}^\star + o\left(\frac{1}{k}\right). $$
}

\begin{proof} 
Notice that the optimality gap is  $G(\vtheta_k)\defeq\E[\mathcal{L}(\vtheta_k)] - \mathcal{L}(\hat{\vtheta}^\star)$.
By applying Taylor approximation up to second order in Eq.~\ref{eq: optimality gap} and taking expectation over the filtration , we have
\begin{align}
    \lefteqn{G(\vtheta_k)=\frac{1}{2}\E\left[\vtheta_k^\top \mH(\hat{\vtheta}^\star)\vtheta_k - 2\vtheta_k^\top \mH(\hat{\vtheta}^\star)\hat{\vtheta}^\star + \vtheta^{\star\top} \mH(\hat{\vtheta}^\star)\hat{\vtheta}^\star\right]+\E\left[\mathcal{O}\left((\vtheta_k-\hat{\vtheta}^\star)^3\right)\right]} \nonumber\\
&=\frac{1}{2}\E\left[\vtheta_k^\top \mH(\hat{\vtheta}^\star)\vtheta_k\right] + \eta\E\left[\sum^k_{i=0}\vg(\vtheta_i)\right]^\top \mH(\hat{\vtheta}^\star)\hat{\vtheta}^\star + \frac{1}{2}(\vtheta^{\star}-2\vtheta_0)^\top \mH(\hat{\vtheta}^\star)\hat{\vtheta}^\star+o\left(\frac{1}{k}\right). \nonumber
\end{align}
Then by replacing Hessian with Fisher information matrix in the first term, the optimality gap is bounded as follows
\begin{align}
    G(\vtheta_k)\leq \frac{E\left[\vtheta_k^\top \mF(\hat{\vtheta}^\star)\vtheta_k\right]}{2}\ + \eta\E\left[\sum^k_{i=0}\vg(\vtheta_i)\right]^\top \mH(\hat{\vtheta}^\star)\hat{\vtheta}^\star + \frac{(\vtheta^{\star}-2\vtheta_0)^\top \mH(\hat{\vtheta}^\star)\hat{\vtheta}^\star}{2} +o\left(\frac{1}{k}\right)\label{eq: inequality in theorem 2}
\end{align}
By applying H{\"o}lder's inequality and Minkowski’s Inequality, the second term in Eq.~\ref{eq: inequality in theorem 2} is
\begin{equation*}
    \E\left[\sum^k_{i=0}\vg(\vtheta_i)\right]^\top \mH(\hat{\vtheta}^\star)\hat{\vtheta}^\star\leq \E\left[\left\Vert\sum^k_{i=0}\vg(\vtheta_i)\right\Vert_1\right] \left\Vert \mH(\hat{\vtheta}^\star)\hat{\vtheta}^\star\right\Vert_{\infty}\leq \sum^k_{i=0}\E\left[\left\Vert \vg(\vtheta_i)\right\Vert_1\right] \left\Vert \mH(\hat{\vtheta}^\star)\hat{\vtheta}^\star\right\Vert_{\infty}.
\end{equation*}
Therefore, since $\mu_k = \sum^k_{i=0}\E\left[\Vert \vg(\vtheta_i)\Vert_1\right]$ we have
\begin{equation*}
        G(\vtheta_k)\leq \frac{1}{2}\E\left[\vtheta_k^\top \mF(\hat{\vtheta}^\star)\vtheta_k\right] + \eta\mu_k \Vert\mH(\hat{\vtheta}^\star)\hat{\vtheta}^\star\Vert_{\infty} + \frac{1}{2}(\vtheta^{\star}-2\vtheta_0)^\top \mH(\hat{\vtheta}^\star)\hat{\vtheta}^\star +o\left(\frac{1}{k}\right)
\end{equation*}
from which the conclusion immediately follows.
\end{proof}

\section{Practical Implementation}
\label{appendix sec: practical implmentation}
While SiGeo necessitates a well-executed initialization in contrast to other zero-shot proxies, our experimental results indicate that the training-free SiGeo is capable of achieving comparable or superior performance to the SOTA zero-shot proxies across all computer vision tasks. 
% Notably by warming up the supernet using only 1\% of the training data, SiGeo not only surpasses the SOTA zero-shot proxies across all six tasks but also showcases remarkable performance in comparison to established one-shot SOTA benchmarks with $~$3X less computation time. 

% \textbf{Proxy Weights.} In all cases, $\lambda_1$ values is set to 1. In contrast, the values of $\lambda_2$ and $\lambda_2$ depend on the specific search space and the warm-up level (i.e. the percentage of data used for pretraining), which ranges from 0\% to 100\%. In particular, when dealing with supernets that have not experienced warm-up, we adjust the proxy by setting $\lambda_2=1$ and $\lambda_3=0$ due to the presence of substantial noise in the training loss values and the relatively low effectiveness of FR norm. When warm-up, we choose $\lambda_2=\lambda_3=1$ for RecSys search spaces and $\lambda_2=50$ and $\lambda_3=1$ for CV benchmarks.

\textbf{Proxy Weights.} In all instances, we consistently set the value of $\lambda_1$ to 1. However, the selection of  $\lambda_2$ and $\lambda_3$ values varies depending on the specific search space and the warm-up level. Specifically, for candidate architectures without warm-up, we adjust our proxy by setting $\lambda_2=1$ and $\lambda_3=0$ to account for significant noise in training loss values and the reduced effectiveness of the Fisher-Rao (FR) norm. Conversely, if a warm-up procedure is performed, we opt for $\lambda_2=\lambda_3=1$ in RecSys search spaces, while for CV benchmarks, we use $\lambda_{50} =50$ and $\lambda_3=1$. The larger value of $\lambda_2$ for CV benchmarks is due to the observation that the zico scores of CNN models are significantly larger than those of networks in the NASRec search space.
  
% Notably, the various weights were hand-selected and only lightly-tuned over the course of developing SiGeo. It is possible that automated or more extensive tuning of these weights could improve the performance.

\textbf{FR Norm.} The FR norm can be efficiently computed by the average of squared product of parameters and gradients: $\vtheta_k\bar\mF\vtheta_k=\frac{1}{k}\sum^{k}_{i=1}\vtheta_k\nabla \ell(\vtheta;\mathcal{D}_i)\nabla \ell(\vtheta_i;\mathcal{D}_i)^\top\vtheta_k^\top = \frac{1}{k}\sum^{k}_{i=1}\left(\vtheta_k^\top\nabla \ell(\vtheta_i;\mathcal{D}_i)\right)^2$

\section{Implementation Details}

\subsection{Experiment (1): Empirical Justification for Theorem~\ref{theorem: bound of expected loss} and Theorem~\ref{theorem: training loss approximation}}\label{appendix subsec: exp (1)}

We conducted training using a two-layer MLP equipped with ReLU activation functions across the complete MNIST training dataset for three epochs. The weights were updated through the application of gradient descent as per Equation \ref{eq: gradient update rule}, employing a batch size of 128. Throughout training, these networks were optimized using the SGD optimizer with a learning rate of $0.01$.
% while the coefficients for calculating the running averages of gradients and their squares were set at $\beta_1=0.9$ and $\beta_2=0.999$ respectively.

In the preprocessing phase, all image tensors were normalized using a mean of (0.4914, 0.4822, 0.4465) and a standard deviation of (0.2023, 0.1994, 0.2010). Following these preparations, SiGeo scores were computed using four training batches after the candidate architectures were trained on 0, 48, and 192 batches, corresponding to 0\%, 10\% and 40\% warm-up levels. Then we visualize the relationship of the training loss and test loss after three training epochs vs. (i) current training loss, (ii) FR norm and (iii) mean absolute sample gradients in Fig~\ref{fig: validating theory 1}-\ref{fig: validating theory 3} in Appendix~\ref{appendix sec: additional results}

\subsection{Experiment (2) and (3): Comparing SiGeo with Other Proxies on NAS Benchmarks}
\label{appendix sec: experimental setting for (2)}
To calculate the correlations, we utilize the NAS-Bench-Suite-Zero \citep{krishnakumar2022bench}. This suite encompasses a comprehensive set of commonly used zero-shot proxies, complete with their official codebases released by the original authors, enabling us to accurately obtain the proxy values. Since the ZiCo proxy was not incorporated into NAS-Bench-Suite-Zero, we employed its official code to compute the correlations for ZiCo. 

In the zero-shot setting, we adjust our proxy by setting $\lambda_2=1$ and $\lambda_3=0$. If a warm-up procedure is performed, we set we use $\lambda_{50} =50$ and $\lambda_3=1$. The larger value of $\lambda_2$ for CV benchmarks is due to the observation that the zico scores of CNN models are significantly larger (roughly 50 times greater) than those of networks in the NASRec search space.

\subsection{Experiment (4): RecSys Benchmark Results}\label{appendix sec: experimental setting for (4)}
During this process, we train the supernet for a single epoch, employing the Adagrad optimizer, an initial learning rate of 0.12, and a cosine learning rate schedule on target RecSys benchmarks. We refer readers to \cite{zhang2023nasrec} for the details about the search space, search policy, and training strategy.

\textbf{Regularized Evolution.} Following \cite{zhang2023nasrec}, we employ an efficient configuration of regularized evolution to find the optimal subnets from the supernet. Specifically,
we keep 128 architectures in a population and run the regularized evolution algorithm for 240 iterations. In each iteration, we select the best design from 64 sampled architectures as the parent architecture, then create 8 new child architectures to update the population.

\subsection{Experiment (5): SiGeo v.s. Other Proxies on CIFAR-10/CIFAR-100}
\label{appendix sec: experimental setting for (3)}

The experimental setup in this section mirrors that of \cite{lin2021zen}, and is outlined below:

\textbf{Dataset} CIFAR-10 comprises 50,000 training images and 10,000 testing images, divided into 10 classes, each with a resolution of 32x32 pixels. CIFAR-100 follows a similar structure with the same number of training and testing images but contains 100 distinct classes. ImageNet1k, on the other hand, has over 1.2 million training images and 50,000 validation images across 1000 classes. In our experiments, we use the official training/validation split.

\textbf{Augmentation} We use the following augmentations as in \citep{pham2018efficient}: mix-up \citep{zhang2018mixup}, label-smoothing \citep{szegedy2016rethinking}, random
erasing \citep{zhong2020random}, random crop/resize/flip/lighting and AutoAugment \citep{cubuk2019autoaugment}.

\textbf{Optimizer} For all experiments in Experiment (3), we use SGD optimizer with momentum 0.9; weight decay 5e-4 for CIFAR10/100;
% , 4e-5 for ImageNet;
initial learning rate 0.1 with batch size 256; cosine learning rate decay \citep{ilya2017sgdr}. We train models up to 1440 epochs in CIFAR-10/100.
% , 480 epochs in ImageNet. 
Following previous works \citep{aguilar2020knowledge,li2020block}, we use EfficientNet-B3 as teacher networks.

In each run, the starting structure is a randomly chosen small network that adheres to the imposed inference budget. The search for kernel sizes is conducted within the set $\{3, 5, 7\}$. Consistent with traditional design practices, there are three stages for CIFAR-10/CIFAR-100/
% and five stages for ImageNet. 
The evolutionary process involves a population size of 512 and a total of 480,000 evolutionary iterations. For CIFAR-10/CIFAR-100, the resolution is 32\texttimes 32.
% , while for ImageNet, it is 224\texttimes224.

\textbf{Model Selection} During weight-sharing
neural architecture evaluation, we train the supernet on the training set and select the top 15 subnets on the validation set. Then we train the top-15 models from scratch and select the best subnet as the final architecture.

%\vspace{-0.1in}

\section{Additional Empirical Results for Theorem~\ref{theorem: bound of expected loss} and Theorem~\ref{theorem: training loss approximation}} \label{appendix sec: additional results}

Here we conduct additional assessments in the other two distinct settings: one without any warm-up (Fig.~\ref{fig: validating theory 1}) and another with various warm-up levels (Fig.~\ref{fig: validating theory 3}). In the absence of warm-up, all four statistics are computed based on the initial four batches. In the case of warm-up, these statistics are calculated using four batches following the initial 196 batches.

\begin{figure}[!ht]
\begin{center}
% \framebox[4.0in]{$\;$}
\includegraphics[width=0.78\textwidth]{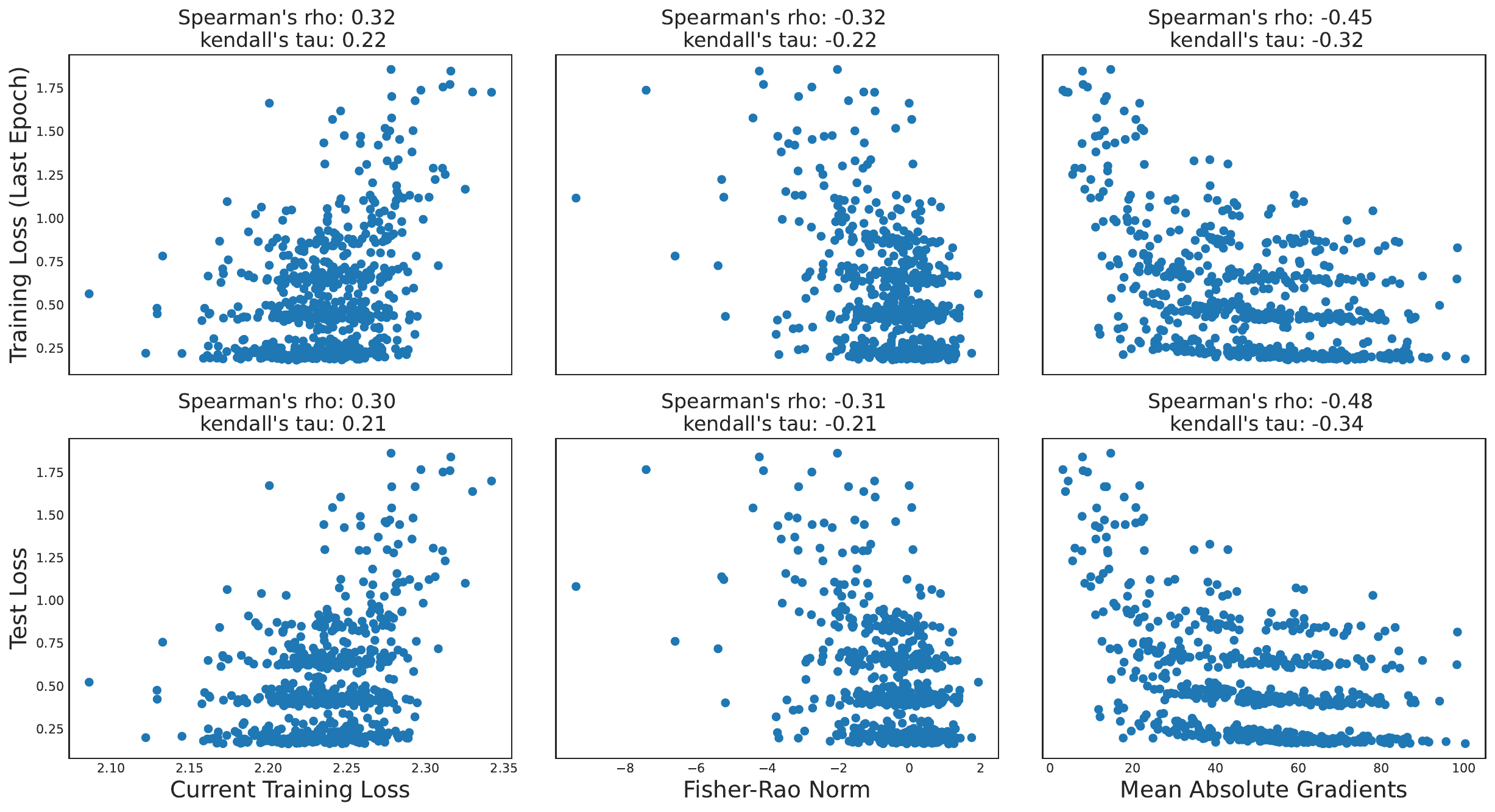}
% \fbox{\rule[-.5cm]{0cm}{4cm} \rule[-.5cm]{4cm}{0cm}}
\end{center}
\caption{Training and test losses vs. different statistics in Theorem~\ref{theorem: bound of expected loss} and \ref{theorem: training loss approximation}. Results are generated by optimizing two-layer MLP-ReLU networks with varying hidden dimensions, ranging from 2 to 48 in increments of 2. The statistics for each network are computed without warm-up.}\label{fig: validating theory 1}
%\vspace{-0.1in}
\end{figure}

\begin{figure}[!ht]
\begin{center}
% \framebox[4.0in]{$\;$}
\includegraphics[width=0.78\textwidth]{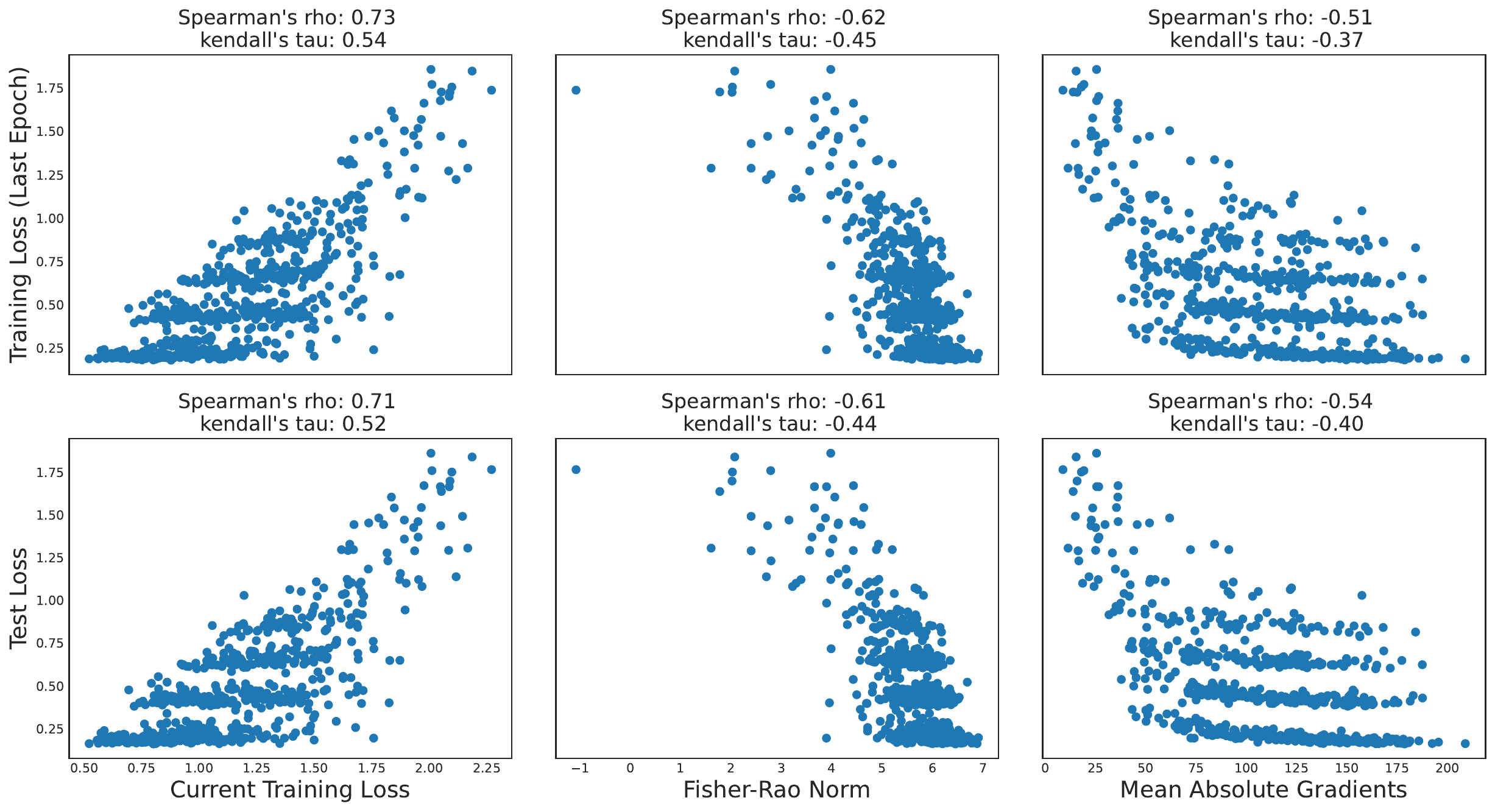}
% \fbox{\rule[-.5cm]{0cm}{4cm} \rule[-.5cm]{4cm}{0cm}}
\end{center}
\caption{Training/test losses vs. statistics in Theorem~\ref{theorem: bound of expected loss} and \ref{theorem: training loss approximation}. Results are generated by optimizing two-layer MLP-ReLU networks with varying hidden dimensions, ranging from 2 to 48 in increments of 2. The statistics for each network are computed after being warmed up with 10\% training data.}\label{fig: validating theory 2}
%\vspace{-0.1in}
\end{figure}

% \begin{figure}[!hbt]
% \begin{center}
% \includegraphics[width=0.78\textwidth]{}
% \end{center}
% \caption{Training/test losses vs. statistics in Theorem~\ref{theorem: bound of expected loss} and \ref{theorem: training loss approximation}. Results are generated by optimizing two-layer MLP-ReLU networks with varying hidden dimensions, ranging from 2 to 48 in increments of 2. The statistics for each network are computed after being warmed up with 40\% training data.}\label{fig: validating theory 3}
% \vspace{-0.1in}
% \end{figure}

\begin{figure}[!hbt]
\begin{center}
\includegraphics[width=0.65\textwidth]{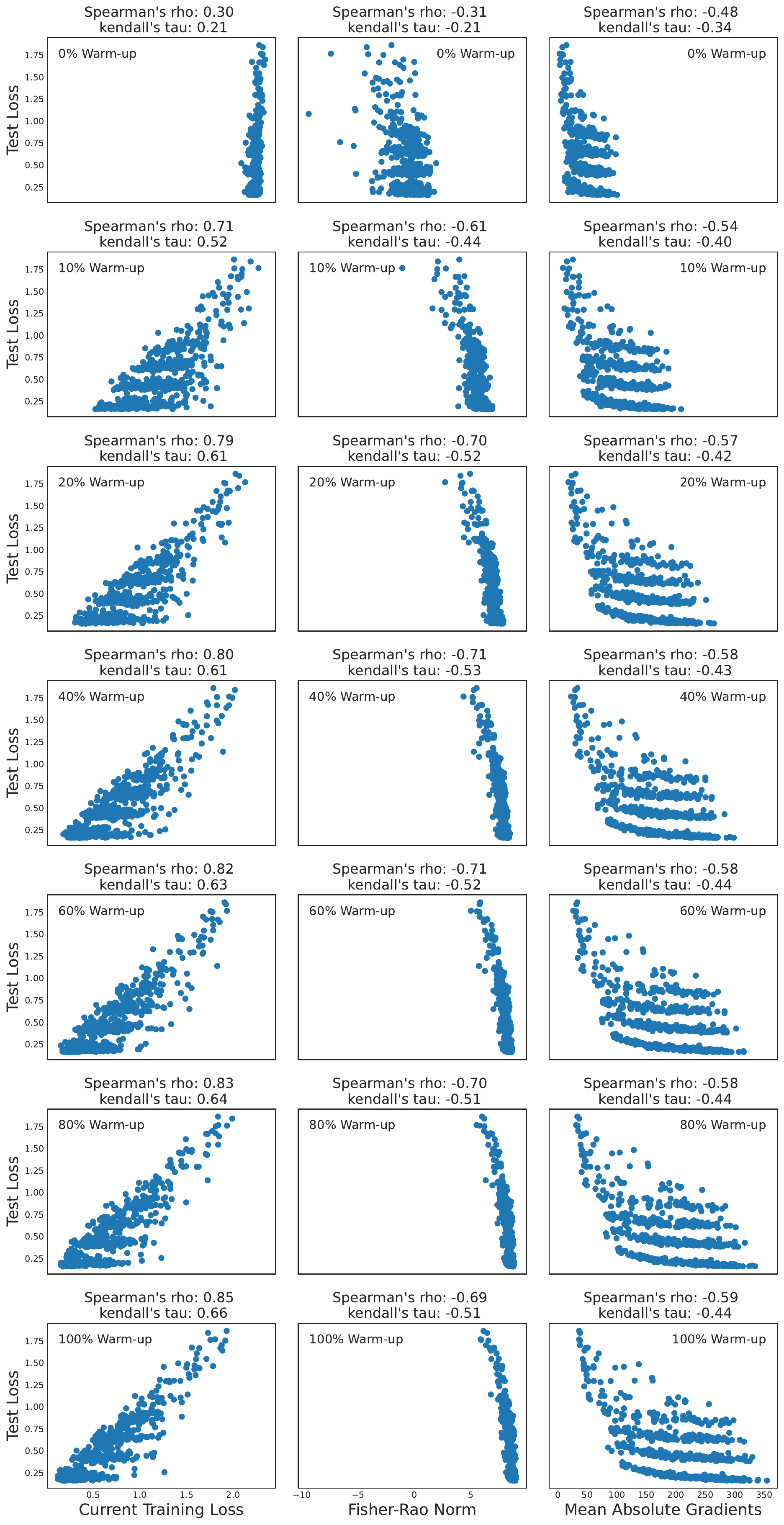}
\end{center}
\caption{Test losses vs. statistics in Theorem \ref{theorem: training loss approximation}. Results are generated by optimizing two-layer MLP-ReLU networks with varying hidden dimensions, ranging from 2 to 48 in increments of 2. The statistics for each network are computed after being warmed up with 0\%, 10\%, 40\%, 60\%, 80\%, and 100\% data. The performance improvement in the FR norm is observed to cease after 40\% warm-up.}\label{fig: validating theory 3}
%\vspace{-0.1in}
\end{figure}

\clearpage

\section{SiGeo v.s. Other Proxies on CIFAR-10/CIFAR-100} \label{appendix sec: SiGeo on cf}

Experiments on CIFAR-10/CIFAR-100 \citep{krizhevsky2009learning} are conducted to validate
the effectiveness of SiGeo on zero-shot setting. This experiment serves as a compatibility test for SiGeo on zero-shot NAS. To maintain consistency with prior studies, we adopt the search space and training settings outlined in \cite{lin2021zen} to validate SiGeo across both CIFAR-10 and CIFAR-100 datasets. The selected search space, introduced in \citep{he2016deep, radosavovic2020designing}, comprises residual blocks and bottleneck blocks that are characteristic of the ResNet architecture. We limit the network parameters to 1.0 million. Results are obtained in the \textit{zero-shot setting}: in each trial, the subnet structure is selected from a randomly initialized supernet (without warm-up) under the inference budget; see details in Appendix~\ref{appendix sec: experimental setting for (3)}. Then we use a batch size of 64 and employ four batches to compute both SiGeo and ZiCo. In Table~\ref{table: CF accuracy under 1M budget}, we contrast various NAS-designed models for CIFAR-10/CIFAR-100, and notably, SiGeo outperforms all the baseline methods.

\begin{table}[h]
\centering
\caption{Top-1 accuracies on CIFAR-10/CIFAR-100 for zero-shot proxies with a 1M parameter budge. The accuracies for both SiGeo and ZiCo are computed by averaging results from three separate runs. Other scores are adapted from \cite{lin2021zen}.} \label{table: CF accuracy under 1M budget}
\resizebox{\columnwidth}{!}{%
\begin{tabular}{@{}lcccccccc@{}}
\toprule
\multicolumn{1}{c}{Proxy} & Random  & FLOPs   & grad-norm & synflow  & NASWOT  & Zen & Zico    & SiGeo   \\ \midrule
CIFAR-10                  & 93.50\% & 93.10\% & 92.80\%   & 96.10\% & 96.00\% & 96.2\%  & 96.96\% & 97.01\% \\
CIFAR-100                 & 71.10\% & 64.70\% & 65.40\%   & 75.90\% & 77.50\% & 80.10\% & 81.12\% & 81.19\% \\ \bottomrule
\end{tabular}%
}
%\vspace{-0.1in}
\end{table}

\section{Additional Result for RecSys Benchmarks}\label{appendix sec: detailed result for ctr benchmark}
We train NASRecNet from scratch on three classic RecSysm benchmarks and compare the performance of models that
are crafted by NASRec on three general RecSys benchmarks. In Table \ref{table: SiGeo-nas on ctr}, we report the evaluation results of our end-to-end
NASRecNets and a random search baseline which randomly samples and trains models in our NASRec search space.

\begin{table}[!hbt]
\centering
\caption{Performance of SiGeo-NAS on CTR Predictions Tasks. NASRec-Small and NASRec-Full are the two search spaces, and NASRecNet is the NAS method from \cite{zhang2023nasrec}.}\label{table: SiGeo-nas on ctr}
\resizebox{\columnwidth}{!}{%
\begin{tabular}{@{}c|l|cc|cc|cc|c@{}}
\toprule
\multirow{2}{*}{Setting}                                                                   & \multicolumn{1}{c|}{\multirow{2}{*}{Method}} & \multicolumn{2}{c|}{Criteo}       & \multicolumn{2}{c|}{Avazu}        & \multicolumn{2}{c|}{KDD Cup 2012} & \multirow{2}{*}{\begin{tabular}[c]{@{}c@{}}Search Cost\\ (GPU days)\end{tabular}} \\ \cmidrule(lr){3-8}
                                                                                           & \multicolumn{1}{c|}{}                        & Log Loss        & AUC             & Log Loss        & AUC             & Log Loss        & AUC             &                                                                                   \\ \midrule
\multirow{4}{*}{Hand-crafted Arts}                                                         & DLRM \citep{naumov2019deep}                                        & 0.4436          & 0.8085          & 0.3814          & 0.7766          & 0.1523          & 0.8004          & -                                                                                 \\
                                                                                           & xDeepFM \citep{lian2018xdeepfm}                                     & 0.4418          & 0.8052          & -               & -               & -               & -               & -                                                                                 \\
                                                                                           & AutoInt+ \citep{song2019autoint}                                    & 0.4427          & 0.8090          & 0.3813          & 0.7772          & 0.1523          & 0.8002          & -                                                                                 \\
                                                                                           & DeepFM \citep{guo2017deepfm}                                       & 0.4432          & 0.8086          & 0.3816          & 0.7757          & 0.1529          & 0.7974          & -                                                                                 \\ \midrule
\multirow{9}{*}{\begin{tabular}[c]{@{}c@{}}NAS-crafted Arts\\ (one-shot) \\ (100\% warm-up)\end{tabular}}     & DNAS \citep{krishna2021differentiable}                                        & 0.4442          & -               & -               & -               & -               & -               & $\sim$0.28                                                                                 \\
                                                                                           & PROFIT \citep{gao2021progressive}                                       & 0.4427          & 0.8095          & \textbf{0.3735}          & \textbf{0.7883}          & -               & -               & $\sim$0.5                                                                         \\
                                                                                           & AutoCTR  \citep{song2020towards}                 & 0.4413          & 0.8104          & 0.3800          & 0.7791          & 0.1520          & 0.8011          & $\sim$0.75                                                                        \\
                                                                                           & Random Search @ NASRec-Small                 & 0.4411          & 0.8105          & 0.3748          & 0.7885          & 0.1500          & 0.8123          & 1.0                                                                                 \\
                                                                                           & Random Search @ NASRec-Full                  & 0.4418          & 0.8098          & 0.3767          & 0.7853          & 0.1509          & 0.8071          & 1.0                                                                                 \\
                                                                                           & NASRecNet @ NASRec-Small                     & \textbf{0.4395} & \textbf{0.8119} & 0.3741          & 0.7897          & 0.1489          & 0.8161          & $\sim$0.25                                                                        \\
                                                                                           & NASRecNet @ NASRec-Full                      & 0.4404          & 0.8109          & 0.3736 & 0.7905 & 0.1487          & 0.8170          & $\sim$0.3                                                                         \\
                                                                                           & SiGeo @ NASRec-Small                         & 0.4399          & 0.8115          & 0.3743          & 0.7894          & 0.1487          & 0.8171          & $\sim$0.1                                                                         \\
                                                                                           & SiGeo @ NASRec-Full                          & 0.4403          & 0.8110          & 0.3741          & 0.7898          & 0.1484          & 0.8187          & $\sim$0.12                                                                        \\ \midrule
\multirow{4}{*}{\begin{tabular}[c]{@{}c@{}}NAS-crafted Arts\\ (sub-one-shot) \\ (1\% warm-up)\end{tabular}} & ZiCo @ NASRec-Small                          & 0.4404          & 0.8109          & 0.3754          & 0.7876          & 0.1490          & 0.8164          & $\sim$0.1                                                                         \\
                                                                                           & ZiCo @ NASRec-Full                           & 0.4403          & 0.8100          & 0.3762          & 0.7860          & 0.1486          & 0.8174          & $\sim$0.12                                                                        \\
                                                                                           & SiGeo @ NASRec-Small                         & 0.4396          & 0.8117          & \textbf{0.3741} & \textbf{0.7898} & \textbf{0.1484} & \textbf{0.8185} & $\sim$0.1                                                                        \\
                                                                                           & SiGeo @ NASRec-Full                          & \textbf{0.4397} & \textbf{0.8116} & 0.3754          & 0.7876          & \textbf{0.1485} & \textbf{0.8178} & $\sim$0.12                                                                        \\ \midrule
\multirow{4}{*}{\begin{tabular}[c]{@{}c@{}}NAS-crafted Arts\\ (zero-shot) \\ (0\% warm-up)\end{tabular}}    & ZiCo @ NASRec-Small                          & 0.4408          & 0.8105          & 0.3770          & 0.7849          & 0.1491          & 0.8156          & $\sim$0.09                                                                        \\
                                                                                           & ZiCo @ NASRec-Full                           & 0.4404          & 0.8108          & 0.3772          & 0.7845          & 0.1486          & 0.8177          & $\sim$0.11                                                                         \\
                                                                                           & SiGeo @ NASRec-Small                         & 0.4404          & 0.8109          & 0.3750          & 0.7882          & 0.1489          & 0.8165          & $\sim$0.09                                                                        \\
                                                                                           & SiGeo @ NASRec-Full                          & 0.4404          & 0.8108          & 0.3765          & 0.7856          & 0.1486          & 0.8177          & $\sim$0.11                                                                         \\ \bottomrule
\end{tabular}%
}
%\vspace{-0.2in}
\end{table}

% \section{Ablation Study}\label{appendix sec: ablation}
% When both $\lambda_2$ and $\lambda_3=0$
%   are set to zero, SiGeo reduces to the ZiCo proxy; conversely, when 
% $\lambda_1$ and $\lambda_3=0$
%   are zero, SiGeo simplifies to the FR norm. To assess the influence of these components, we conduct an ablation study to evaluate their impact on SiGeo's performance. 

% Specifically, we examine and compare the best subnets identified using ZiCo and the FR norm as proxies. These experiments are performed in a sub-one-shot setting with 1\% warm-up, following the same configuration as outlined in Experiment (4) in Section \ref{subsubsec: validation on ads data}. The top-15 models selected by each proxy are trained from scratch and their test accuracies are illustrated as boxplots in Fig.~\ref{fig: ablation}. The results reveal that the exclusion of any terms from SiGeo detrimentally affects performance.

% \begin{figure}[ht]
% \begin{center}
% \includegraphics[width=1.0\textwidth]{zico-fr-criteo-ablation.pdf}
% \end{center}
% \caption{Ablation study of SiGeo. Two key components (FR norm and ZiCo) are removed/replaced in order to measure the impact of these components on the performance of SiGeo}\label{fig: ablation}
% \vspace{-0.2in}
% \end{figure}

\end{document}